\newcommand{\grn}[1]{{\color{Green} #1}}
\theoremstyle{plain}
\newtheorem{theorem}{Theorem}[section]
\newtheorem{proposition}[theorem]{Proposition}
\newtheorem{lemma}[theorem]{Lemma}
\theoremstyle{definition}
\newtheorem{definition}[theorem]{Definition}
\newcommand{\eq}[1]{\begin{equation}{#1}\end{equation}}
\newcommand{\al}[1]{\begin{align}{#1}\end{align}}
\newcommand{\prn}[1]{\left({#1}\right)}
\newcommand{\brt}[1]{\left[{#1}\right]}
\newcommand{\abs}[1]{\left|{#1}\right|}
\newcommand{\nrm}[1]{\left|\left|{#1}\right|\right|}
\newcommand{\mbf}[1]{\mathbf{#1}}
\newcommand{\mbb}[1]{\mathbb{#1}}
\newcommand{\mc}[1]{\mathcal{#1}}
\newcommand{\mf}[1]{\mathfrak{#1}}
\newcommand{\ms}[1]{\mathscr{#1}}
\newcommand{\bs}[1]{\boldsymbol{#1}}
\begin{document}

% If your paper is accepted and the title of your paper is very long,
% the style will print as headings an error message. Use the following
% command to supply a shorter title of your paper so that it can be
% used as headings.
%
%\runningtitle{I use this title instead because the last one was very long}

% If your paper is accepted and the number of authors is large, the
% style will print as headings an error message. Use the following
% command to supply a shorter version of the authors names so that
% they can be used as headings (for example, use only the surnames)
%
% \runningauthor{Surname 1, Surname 2, Surname 3, ...., Surname n}

\twocolumn[

\aistatstitle{Learning Laplacian Positional Encodings for Heterophilous Graphs}

\aistatsauthor{ Michael Ito$^1$ \And Jiong Zhu$^1$ \And  Dexiong Chen$^2$ \And Danai Koutra$^1$ \And Jenna Wiens$^1$ }

% \runningauthor{Surname 1, Surname 2, Surname 3, ...., Surname n}

\aistatsaddress{ University of Michigan$^1$, Max Planck Institute of Biochemistry$^2$ } ]

\begin{abstract}
In this work, we theoretically demonstrate that current graph positional encodings (PEs) are not beneficial and could potentially hurt performance in tasks involving heterophilous graphs, where nodes that are close tend to have different labels. This limitation is critical as many real-world networks exhibit heterophily, and even highly homophilous graphs can contain local regions of strong heterophily. To address this limitation, we propose Learnable Laplacian Positional Encodings (LLPE), a new PE that leverages the full spectrum of the graph Laplacian, enabling them to capture graph structure on both homophilous and heterophilous graphs. Theoretically, we prove LLPE's ability to approximate a general class of graph distances and demonstrate its generalization properties. Empirically, our evaluation on 12 benchmarks demonstrates that LLPE improves accuracy across a variety of GNNs, including graph transformers, by up to 35\% and 14\% on synthetic and real-world graphs, respectively. Going forward, our work represents a significant step towards developing PEs that effectively capture complex structures in heterophilous graphs.

\end{abstract}

\section{INTRODUCTION}

\begin{figure}[t]
    \centering
    \includegraphics[width=0.5\textwidth]{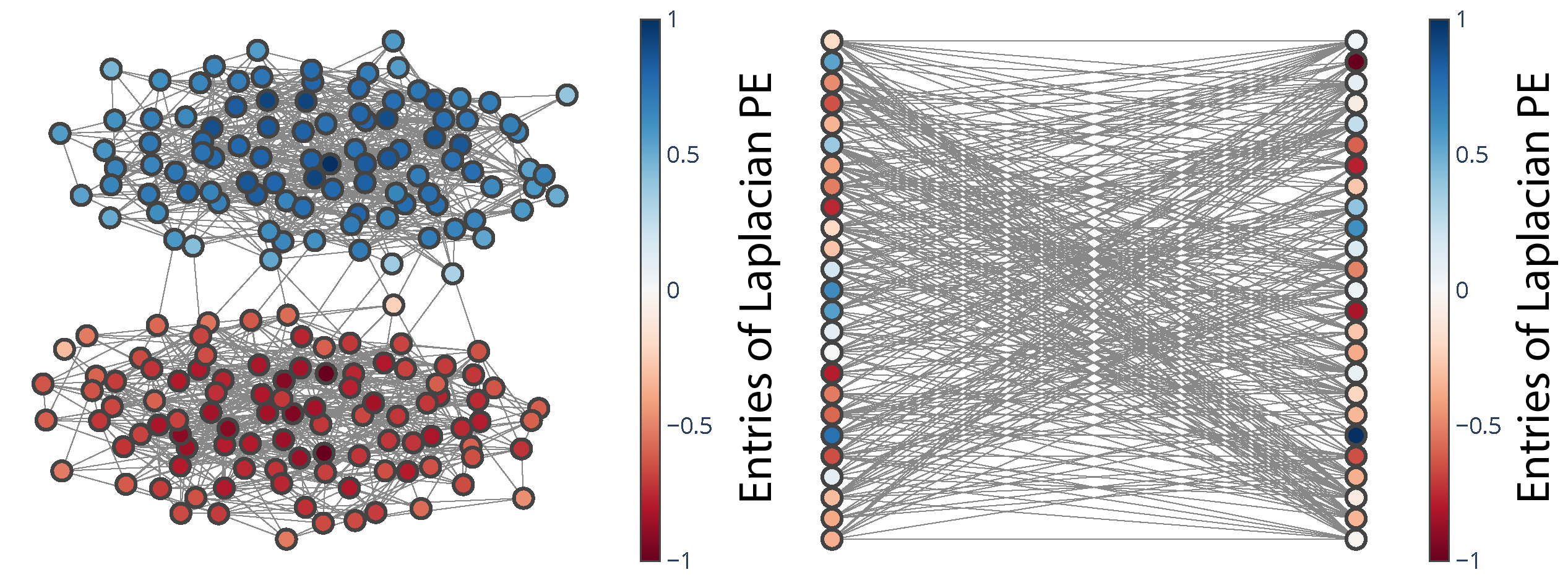}
    \caption{\small LPEs on homophilous/heterophilous SBMs.}
    \label{fig:sbms}
\end{figure}

In node classification, graph positional encodings (PEs) improve the discriminative performance of graph neural networks (GNNs) by injecting them with valuable positional information, allowing them to better capture the positions of nodes within the graph~\citep{you2019position, dwivedi2022graph, rampavsek2022recipe}. In particular, PEs aim to encode distance such that if two nodes are close in graph distance, their PEs should also be close. While it has been shown that PEs are beneficial for homophilous graphs, where nodes of the same label tend to be close, we find that they are not as beneficial in heterophilous graphs, where nodes that are close tend to have different labels. To demonstrate this intuition, consider a homophilous graph of two clusters where the cluster assignment is the node label (Figure \ref{fig:sbms}). Here, Laplacian PEs that encode closeness by leveraging the eigenvectors of the Laplacian are beneficial since nodes that are close have the same label. Now consider a heterophilous bipartite graph where node labels are the sets that do not connect to each other. Here, nodes that are close have opposing labels, and thus Laplacian PEs no longer capture the relevant structural information. 

Notably, real-world graphs exhibit a spectrum of homophily-heterophily, and while many real-world graphs exhibit strong homophily, there are also many graphs that exhibit strong heterophily such as protein networks where different amino acids form connections and transaction networks where fraudsters are more likely to connect to accomplices \citep{zhu2020beyond}. Furthermore, we find that even if a graph is highly homophilous globally, it may still contain local regions that are highly heterophilous as seen in some social networks, where users can form diverse connections \citep{newman2018networks,LovelandZHFSK23discrepancies}.
%,LovelandK25fairness}. 
These observations have led to significant research efforts in understanding the impact of heterophily on GNNs and designing models that can generalize to both homophilous and heterophilous graphs~\citep{zhu2020beyond,chien2021adaptive, bo2021beyond, zhu2021graph, yan2022two, ma2022is,luan2024heterophilicgraphlearninghandbook, ZhuLY0CK24linkpred,ItoKW25dynamic}. In this work, given the spectrum of homophily-heterophily in real data, we investigate the \textit{impact of PEs on node classification in both homophilous and heterophilous graph settings}. While orthogonal to the design of heterophilous GNNs, our work can improve these models by augmenting them with our proposed PEs.

In our theoretical analysis, we focus primarily on Laplacian PEs (LPEs) due to their importance and prevalence in GNN designs \citep{rampavsek2022recipe, dwivedi2021generalization, kreuzer2021rethinking, lim2022sign, wang2022equivariant}. Our analysis reveals that LPEs can fail to capture relevant structures on heterophilous 
graphs since they do not include the full spectrum of the Laplacian. While other learnable PEs such as RWSE \citep{dwivedi2022graph} and SAN-PE \citep{kreuzer2021rethinking} can be extended to the full spectrum, empirically we find that they often struggle with high dimensional and/or noisy eigenspaces and as a result can fail to capture relevant structures (i.e., identify the relevant eigenvectors of the Laplacian). To address this gap, we introduce Learnable Laplacian Positional Encodings (LLPEs), a new PE that leverages the relevant eigenvectors and eigenvalues of the graph Laplacian. We demonstrate LLPE's theoretical expressivity by showing its ability to approximate a general class of graph distances, enabling it to capture relevant graph structure on arbitrary graphs. We further prove that it exhibits better statistical generalization properties compared to other designs. Empirically, we demonstrate the effectiveness of LLPE by showing that it improves performance for a variety of GNNs, including graph transformers, in comparison to other PEs across 12 homophilous and heterophilous benchmarks. In summary, we make the following contributions.

\begin{itemize}
    \item \textbf{Identification of Limitations.} We show that LPEs and popular learnable PEs can fail to capture relevant graph structure in heterophilous graphs since they do not scale to high dimensions. 

    \item \textbf{New Position Encodings for Homophily and Heterophily.} We introduce Learnable Laplacian Positional Encodings that capture relevant structure on homophilous and heterophilous graphs by leveraging the full spectrum of the Laplacian.

    \item \textbf{Theoretical Insights.} We show theoretically that LLPE can approximate important notions of graph distance and that it exhibits the best statistical generalization among other designs. 
    
    \item \textbf{Empirical Analysis.} Empirically, we show that LLPE improves performance over current popular encodings on 12 node classification benchmarks. 
\end{itemize}

\section{BACKGROUND}

In this section, we first provide background on node classification since it is the main task we consider throughout our work. We then provide an overview of message-passing GNNs and their relationship to homophily and heterophily. Next, we provide background on graph PEs, which are key components for many GNNs in both node and graph classification. Lastly, we provide background on graph transformers, a GNN architecture that has achieved high performance across a variety of graph benchmarks due to the design of effective PEs \citep{rampavsek2022recipe, dwivedi2021generalization, kreuzer2021rethinking, chen2022structure}.

\subsection{Node Classification}

A graph is defined $G = (V, \textbf{A}, \mbf{X}, \mbf{y})$, where $V$ is the node set, $\mbf{A} \in \mbb{R}^{\abs{V} \times \abs{V}}$ is the adjacency matrix, $\mbf{X} \in \mbb{R}^{\abs{V} \times d}$ is the node feature matrix, and $\mbf{y} \in \mbb{R}^{\abs{V}}$ is node label vector. For node $i \in V$, we denote its feature vector and label as $\mbf{x}_i$ and $y_i$, respectively. Given a single $G$ with a random sample of node representations $\mbf{X}_\text{tr} = \{\mbf{x}_0, \ldots, \mbf{x}_{n_{\text{tr}}}\}$ and their labels $\mbf{y}_\text{tr} = \{y_0, \ldots, y_{n_{\text{tr}}}\}$, the node classification task is to learn a classifier $f$ such that the expected loss $\mbb{E}[\ms{L}(f(\textbf{x}, \mbf{A}), y)]$ is minimized, where $\ms{L}$ is some loss function. Given $f$, one can then label the remaining nodes in $G$. In our work, we focus on the transductive setting where no new nodes are added at inference.

\subsection{Homophily, GNNs and Graph PEs}

\textbf{Graph Homophily.}
The homophily ratio is the probability that a node forms an edge with another node with the same label, and a graph is considered homophilous if it has a high homophily ratio and heterophilous otherwise~\citep{zhu2020beyond}. 

\textbf{Message-Passing GNNs.} Many GNNs follow a message-passing scheme, where each layer updates each node's representation by aggregating the representations of its immediate neighbors \citep{gilmer2017neural}. Most message-passing GNNs differ in the aggregation step, and popular choices include the symmetric normalized mean \citep{kipf2017semi}, attention \citep{velickovic2018graph}, and sum pooling \citep{xu2018how}. Intuitively, when many nodes are connected to other nodes with the same label, message-passing is beneficial, and different classes will be well-separated in feature space after message passing. On the other hand, when many nodes are connected to nodes of different labels, message-passing may be detrimental and result in significant overlap between different classes in feature space. Notions of homophily attempt to capture this phenomenon~\citep{MironovP24measures}. %~\citep{platonov2023critical}. 

In recent years, new GNN designs have been proposed in order to make these models effective for heterophilous graphs, without compromising performance over homophilous graphs, including separation of ego- and neighbor- embeddings~\citep{zhu2020beyond}, learning from neighbors at various distances~\citep{abu2019mixhop}, learned degree corrections~\citep{yan2022two}, and more~\citep{chien2021adaptive, bo2021beyond, zhu2021graph, yan2022two, ma2022is,luan2024heterophilicgraphlearninghandbook}. Despite these advances, GNNs are known to exhibit various limitations, as they lack the ability to encode the position of a node within a graph \citep{you2019position}. Thus, they are commonly augmented with PEs \citep{kreuzer2021rethinking}. 

\textbf{Transformer-based GNNs.} Graph transformers (GTs) are a new class of GNN, replacing message-passing in favor of global attention \citep{chen2022structure, ma2023graph}. Due to their departure from message-passing, GTs are believed to overcome many of the limitations of message-passing and as a result have achieved high performance on a variety of benchmarks \citep{kreuzer2021rethinking, rampavsek2022recipe}. Moreover, a crucial component of GTs is the choice of PE since it is the main component of the architecture that captures graph structure \citep{dwivedi2021generalization}. 

\textbf{Graph Positional Encodings.}  In the context of learning from graphs, PEs are typically added or concatenated with node features. In our theoretical analysis, we focus on Laplacian positional encoding (LPE) defined as the $k$ eigenvectors of the graph Laplacian that correspond to its $k$ smallest eigenvalues. LPEs have been shown to improve GNNs both theoretically and empirically by encoding useful notions of node positional information. 

\section{ANALYSIS OF LPEs \label{sec:3}}

In this section, we explore LPEs in capturing relevant graph structure on stochastic block models (SBMs) \citep{holland1983stochastic}. In our analysis, we show that LPEs do capture the relevant graph structure in homophilous SBMs but fail to do so in heterophilous SBMs. Instead, the eigenvectors corresponding to the largest eigenvalues capture the relevant graph structure. For the remainder of our analysis, we assume that the eigenvalues are sorted from smallest to largest, and the corresponding eigenvectors are sorted similarly. Our analysis both highlights the limitations of LPEs and sheds light on potential solutions to these limitations.

\subsection{Community Detection on SBMs}

We consider $G(n, k, p, q)$, a multiclass SBM, where the set of $n$ vertices are divided into $k$ communities of size $\frac{n}{k}$. Edges between nodes in the same community are sampled with probability $p$, while edges between nodes in different communities are sampled with probability $q$. The community detection task is to recover the community labels for all nodes given a single realization of $G$. If LPEs can recover the communities, they effectively capture the relevant graph structure since the communities provide the best notion of node position within the SBM. Importantly, an SBM is homophilous when $p > q$ and heterophilous otherwise. Thus, by specifying conditions on $p$ and $q$, we can analyze LPEs along different homophilous and heterophilous graphs.

\subsection{Laplacian Encodings on Multiclass SBMs}

To put our theoretical results into context, we first restate the following well-known result from the network community detection literature that states in a homophilous multiclass SBM, the first $k$ eigenvectors of the graph Laplacian recovers the node communities up to a small error \citep{abbe2018community}. 

\begin{theorem}[\citet{abbe2018community}]
Let $\mbf{A}$ and $\mbf{L}$ be the adjacency and Laplacian matrix drawn from the stochastic block model $G(n, k, p, q)$. Assume $p \gg q$ and $\text{min}(d_i) \geq C \text{ln}(n)$ where $C$ is an appropriately large constant. Then, with high probability, the nonzero entries along the rows of the \textbf{first nontrivial $k-1$ eigenvectors} of $\mbf{L}$ correctly recover the true communities up to an orthogonal transformation with at most $\mc{O}(k^\frac{3}{2})$ misclassified nodes.
\label{thm:1}
\end{theorem}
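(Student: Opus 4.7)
Since this is restated verbatim from \citet{abbe2018community}, the plan is to reproduce the standard spectral clustering argument in three stages: analyze the population (expected) Laplacian, transfer via matrix perturbation to the realized Laplacian, and then quantify the number of mistakes made by a $k$-means rounding of the eigenvectors.

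First I would compute the expected Laplacian $\bar{\mathbf{L}} = \mathbb{E}[\mathbf{L}]$ under $G(n,k,p,q)$. Because edge probabilities depend only on the community labels, the expected adjacency matrix is a rank-$k$ block matrix of the form $\bar{\mathbf{A}} = (p-q)\mathbf{Z}\mathbf{Z}^\top/(\tfrac{n}{k}) + q \mathbf{1}\mathbf{1}^\top$ (up to diagonal corrections), where $\mathbf{Z}\in\{0,1\}^{n\times k}$ is the community indicator matrix. From this one sees that, after normalization, $\bar{\mathbf{L}}$ has a trivial zero eigenvalue with constant eigenvector and $k-1$ small nontrivial eigenvalues whose eigenvectors are piecewise constant on the $k$ communities, separated by an eigengap of order $(p-q)/p$ from the bulk. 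This shows that the claim is exactly true in expectation, with $0$ misclassified nodes after clustering.

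Next I would transfer to the random $\mathbf{L}$ using concentration and a Davis--Kahan type bound. Under the assumption $\min_i d_i \ge C\ln n$, standard results on the concentration of the normalized Laplacian of inhomogeneous random graphs (e.g., matrix Bernstein, or the Le--Levina--Vershynin/Oliveira bounds) give $\|\mathbf{L} - \bar{\mathbf{L}}\| = \mathcal{O}(\sqrt{\ln n / d_{\min}})$ with high probability. Davis--Kahan then implies that the $(k-1)$-dimensional subspace $\hat{U}$ spanned by the first nontrivial eigenvectors of $\mathbf{L}$ is close to the corresponding subspace $U$ of $\bar{\mathbf{L}}$ in Frobenius norm, with an error controlled by $\|\mathbf{L}-\bar{\mathbf{L}}\|$ divided by the eigengap coming from the assumption $p\gg q$.

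Finally I would quantify misclassifications. Since rows of $U$ take only $k$ distinct values (one per community), a row of $\hat{U}$ can only be misassigned if it has been perturbed by at least a constant fraction of the minimum inter-cluster distance between rows of $U$, which scales like $\sqrt{k/n}$. Counting rows whose perturbation exceeds this threshold and using $\|\hat{U}-UO\|_F^2 = \mathcal{O}(k \cdot \|\mathbf{L}-\bar{\mathbf{L}}\|^2 / \text{gap}^2)$ for some orthogonal $O$ yields at most $\mathcal{O}(k^{3/2})$ misclassified nodes after $k$-means rounding. The main obstacle in a fully rigorous write-up is step two: getting the sharp operator-norm concentration of $\mathbf{L}-\bar{\mathbf{L}}$ in the near-sparse regime where degrees are only logarithmic, and pairing it carefully with the eigengap so that the final mistake count matches the stated $\mathcal{O}(k^{3/2})$ rate rather than a looser bound polynomial in $k$.
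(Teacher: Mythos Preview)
Your three-stage plan (population Laplacian, concentration plus Davis--Kahan, then mistake counting) is correct and matches the paper's overall strategy. The paper's execution differs in two details worth knowing. First, rather than arguing abstractly about rank, the paper writes $\mathbb{E}[\mbf{L}]$ in block form and computes its characteristic polynomial explicitly, reading off the exact eigenvalues $0,\ 1,\ \tfrac{kq}{p+(k-1)q}$ with multiplicities $1,\ n-k,\ k-1$; under $p\gg q$ the last of these is strictly below $1$, which pins down the eigengap without further estimation. Second, the paper does not go through a $k$-means rounding step: it chains Oliveira's Laplacian concentration bound, the Yu--Wang--Samworth variant of Davis--Kahan, and the inequality $\rho(\mc{X},\mc{Y})\le\sqrt{2}\,\nrm{\sin\Theta}_F$ to bound $\inf_{\mbf{Q}\in O(k-1)}\nrm{\mbf{V}-\tilde{\mbf{V}}\mbf{Q}}_F$ by $\mc{O}(k^{3/2})$ directly, and then simply identifies this Frobenius deviation with the number of misclassified rows. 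Your $k$-means counting with the $\sqrt{k/n}$ inter-row separation is the more standard and more rigorous way to finish; the paper's last step is terse enough to be closer to an assertion than a derivation, so your version actually buys a cleaner conversion from subspace error to node-level mistakes.
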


Since nearby nodes have the same label, LPEs capture relevant graph structure in homophilous SBMs. We now pose the same question for heterophilous SBMs where nearby nodes have different labels. In our next theorem, we demonstrate that on heterophilous SBMs the first $k$ eigenvectors no longer recover the communities, and instead, the last $k$ eigenvectors do.

\begin{theorem}
Let $\mbf{A}$ and $\mbf{L}$ be the adjacency and Laplacian matrix drawn from $G(n, k, p, q)$. Assume $q \gg p$ and $\text{min}(d_i) \geq C \text{ln}(n)$ where $C$ is an appropriately large constant. Then, with high probability, the nonzero entries along the rows of the \textbf{last $k-1$ eigenvectors} of $\,\mbf{L}$ correctly recover the true communities up to an orthogonal transformation with at most $\mc{O}(k^\frac{3}{2})$ misclassified nodes. Moreover, the first nontrivial $k$ eigenvectors \textbf{do not} recover the true communities.
\label{thm:2}
\end{theorem}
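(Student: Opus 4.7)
The plan is to reduce the heterophilous case $q \gg p$ to the homophilous case by analyzing the spectrum of the expected adjacency and then using the identity $\mbf{L} = \mbf{D} - \mbf{A}$ to translate ``top of $\mbf{A}$'' to ``bottom of $\mbf{L}$'' (and vice versa). First I would compute $\mbb{E}[\mbf{A}]$ directly. Letting $\mbf{Z} \in \{0,1\}^{n \times k}$ be the community indicator matrix, one has (up to diagonal correction)
$$\mbb{E}[\mbf{A}] = (p-q)\,\mbf{Z}\mbf{Z}^\top + q\,\mathbf{1}\mathbf{1}^\top.$$
Its eigenstructure is explicit: $\mathbf{1}$ is an eigenvector with eigenvalue equal to the common expected degree $\bar{d} = \tfrac{np}{k} + \tfrac{n(k-1)q}{k}$; on the $(k-1)$-dimensional subspace of community-indicator vectors orthogonal to $\mathbf{1}$, the matrix acts as $(p-q)\tfrac{n}{k}\,\mbf{I}$, which is \emph{negative} in the regime $q \gg p$; and on the orthogonal complement it is zero. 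Since all expected degrees coincide, $\mbb{E}[\mbf{D}] = \bar{d}\,\mbf{I}$, so the top $k-1$ eigenvalues of $\mbb{E}[\mbf{L}]$ equal $\bar{d} + (q-p)\tfrac{n}{k}$ and their eigenspace is exactly the community-indicator subspace orthogonal to $\mathbf{1}$.

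Next I would invoke a standard concentration bound for dense SBMs: under the hypothesis $\min_i d_i \geq C\ln n$, with high probability $\|\mbf{A} - \mbb{E}[\mbf{A}]\| = O(\sqrt{nq})$ and $\|\mbf{D} - \bar{d}\,\mbf{I}\| = O(\sqrt{nq\ln n})$, hence
$$\|\mbf{L} - \mbb{E}[\mbf{L}]\| \;=\; O\!\left(\sqrt{nq\ln n}\right).$$
Combined with the spectral gap $(q-p)\tfrac{n}{k} \gg \|\mbf{L} - \mbb{E}[\mbf{L}]\|$ arising from $q \gg p$, the Davis--Kahan $\sin\Theta$ theorem shows that the subspace spanned by the top $k-1$ eigenvectors of $\mbf{L}$ is, up to an orthogonal transformation $\mbf{O}$, $o(1)$-close in Frobenius norm to the community-indicator subspace. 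At this point the argument becomes identical to the one in Theorem~\ref{thm:1}: one applies the standard $k$-means / rounding analysis of Abbe~\citep{abbe2018community} to convert the subspace approximation into a bound of $\mc{O}(k^{3/2})$ misclassified nodes.

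For the ``moreover'' claim I would argue the contrapositive via the same Davis--Kahan step applied at the opposite end of the spectrum. The only positive eigenvalue of $\mbb{E}[\mbf{L}]$ below $\bar{d}$ in the regime $q\gg p$ comes from the trivial eigenvector $\mathbf{1}$; all the ``signal'' eigenvalues have been pushed to the top. Hence for the first $k$ nontrivial eigenvalues of $\mbb{E}[\mbf{L}]$, the associated eigenspace is orthogonal to the community-indicator space. Applying Davis--Kahan again, the first $k$ nontrivial eigenvectors of $\mbf{L}$ lie (up to $o(1)$ error) in the orthogonal complement of the community subspace, so projecting them onto any community-indicator vector yields vanishing mass; no orthogonal transformation can rotate them back to recover more than a trivial fraction of the labels.

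The main obstacle I expect is making the negative half quantitatively clean. The positive half only needs the standard eigenvalue gap between the signal eigenvalues and the bulk of $\mbb{E}[\mbf{L}]$, which is large when $q\gg p$. The negative half, however, requires controlling the \emph{lower} end of the spectrum of the random matrix $\mbf{L}$, where the relevant eigenvalues of $\mbb{E}[\mbf{L}]$ are all pushed down near $\bar{d}$ and may cluster tightly; one must verify that the perturbation $\|\mbf{L} - \mbb{E}[\mbf{L}]\|$ is small enough \emph{relative to the gap separating those eigenvalues from the signal eigenvalues at the top}, which is exactly $(q-p)\tfrac{n}{k}$ and thus large — so the gap condition is actually satisfied, but writing the Davis--Kahan bound in the correct direction (with the roles of signal and noise exchanged) is the step that needs the most care.
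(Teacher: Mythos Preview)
Your proposal is correct and follows the same high-level strategy as the paper: compute the eigenstructure of the expected Laplacian explicitly, invoke a concentration inequality to control $\|\mbf{L} - \mbb{E}[\mbf{L}]\|$, and then apply the Davis--Kahan $\sin\Theta$ theorem to transfer the community-recovering eigenvectors from $\mbb{E}[\mbf{L}]$ to $\mbf{L}$. The execution differs in a few places worth flagging. First, you work with the combinatorial Laplacian $\mbf{D} - \mbf{A}$, whereas the paper proves the statement for the \emph{normalized} Laplacian $\mbf{I} - \mbf{D}^{-1/2}\mbf{A}\mbf{D}^{-1/2}$; because all expected degrees coincide in the balanced SBM the two expected matrices share the same eigenvectors, but to finish the argument for the normalized $\mbf{L}$ you would want Oliveira's direct concentration bound on the normalized Laplacian rather than separate bounds on $\mbf{A}$ and $\mbf{D}$. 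Second, the paper derives the spectrum of $\mbb{E}[\mbf{L}]$ by writing it as an explicit block matrix and factoring its characteristic polynomial to obtain eigenvalues $0$, $1$, and $\tfrac{kq}{p+(k-1)q}$ (the last with multiplicity $k-1$), then writes out the community eigenvectors $\mbf{V}$ as block constant vectors; your $\mbf{Z}\mbf{Z}^\top$ decomposition reaches the same eigenspace more compactly. Third, for the ``moreover'' claim the paper argues directly that the $\lambda=1$ eigenspace of $\mbb{E}[\mbf{L}]$ admits a sparse basis in which each vector has only two nonzero entries, so any $k$ of them cannot label all $n$ nodes; your orthogonality argument via a second Davis--Kahan application on the complementary subspace is a cleaner route to the same conclusion for the \emph{observed} $\mbf{L}$, and it sidesteps the ambiguity that a high-multiplicity eigenspace also has non-sparse bases.
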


We prove Theorem \ref{thm:2} in Appendix \ref{pf:lap}. Since on heterophilous SBMs, nodes that are close have opposing labels, LPEs do not capture the relevant structure and in order to do so the last $k - 1$ eigenvectors need to be leveraged. This result captures the same intuition as designs in heterophilous GNNs \citep{zhu2020beyond} aiming to capture high frequency components of the graph by leveraging intermediate GNN representations. % \reminder{this seems related to design D3 in H2GCN -- what do you think?}

% While we assume a stochastic block model for tractability, the analysis of LPE in the context of more complex graph distributions such as Contextual Stochastic Block Models for Homophily \citep{luan2023graph} remains an open question.

\section{LEARNABLE LAPLACIAN ENCODINGS}

Our investigation of LPEs finds that community structure in heterophilous graphs is not captured by the first $k$ eigenvectors, but rather the last $k-1$. Since graphs can be homophilous or heterophilous, we propose \textit{learning} which parts of the spectrum are important. In this section, we first introduce Learnable Laplacian Position Encodings (LLPE), a new PE that leverages the full spectrum of the graph Laplacian. We demonstrate LLPE's theoretical expressivity by showing it captures relevant graph structure on both homophilous and heterophilous graphs as well as general notions of graph distance for arbitrary graphs. We next show that LLPE exhibits the best statistical generalization among other design choices, justifying our designs. We then discuss an extension of LLPE to large graphs when the full eigendecomposition is infeasible.

\subsection{Learnable Laplacian Position Encodings (LLPE)}

Given the eigendecomposition of the graph Laplacian, $\mbf{L} = \mbf{U}^\top \mbf{\Lambda} \mbf{U}$, LPEs can be represented as follows: $\mbf{P}_{\text{LPE}} = \mbf{U}_{k} \mbf{W}$, where $\mbf{U}_{k}$ are the first $k$ eigenvectors and $\mbf{W}$ is a learnable linear projection matrix. As shown in Theorem \ref{thm:2} the first $k$ eigenvectors may not capture heterophilous graph structure since heterophily may be captured in other parts of the spectrum. Current approaches apply MLPs or transformers to only the first $k$ eigenvectors, and one simple approach is to extend these PEs to include the full eigenvector matrix $\mbf{U}$ rather than only the first $k$ eigenvectors. However, we find that this approach does not work well in practice since intuitively we expect that only a small number of eigenvectors are relevant, and the dimension of $\mbf{U}$ can be very large especially for large graphs. We further note that LPEs do not explicitly leverage eigenvalue information. We construct LLPE to address these limitations.

\begin{figure}[t!]
    \centering
    \includegraphics[width=0.5\textwidth]{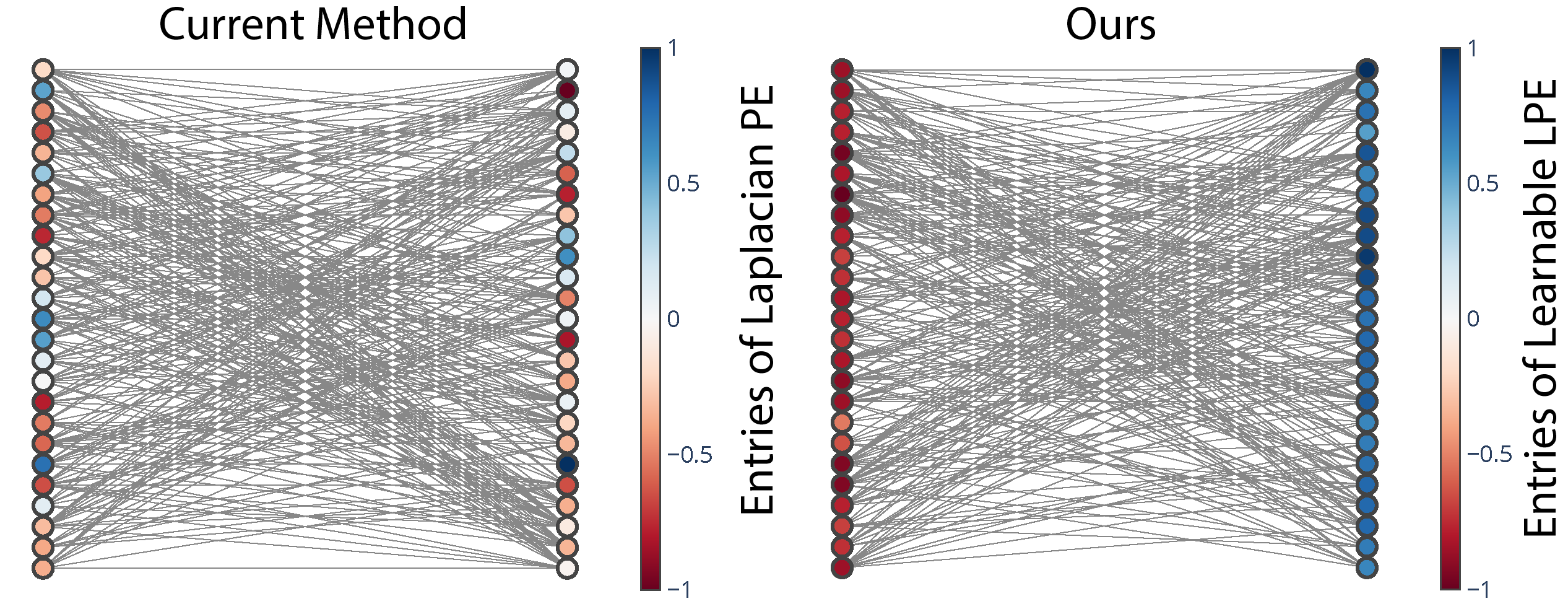}
    \caption{\small LPEs and LLPEs on heterophilous SBMs.}
    \label{fig:sbm_llpe}
\end{figure}

The key insight in the design of LLPE is to leverage the full eigenvector matrix $\mbf{U}$ along with the corresponding eigenvalues $\mbf{\Lambda}$. By leveraging the eigenvalue information, we can learn which eigenvectors are important. More specifically, we learn a mapping $h: [0, 2] \to \mbb{R}$ where $h(\lambda_i)$ represents eigenvector $i$'s importance in the PE. Formally, LLPE can be defined as:  
\eq{\mbf{P}_{\text{LLPE}} = \mbf{U} \mbf{W}_{\text{LLPE}} \text{, where}}
\eq{\mbf{W}_{\text{LLPE}} = \begin{pmatrix}
h(\lambda_1; \bs{\theta}_1) & \cdots & h(\lambda_1; \bs{\theta}_d)\\
\vdots & \ddots & \vdots \\
h(\lambda_n; \bs{\theta}_1) & \cdots & h(\lambda_n; \bs{\theta}_d)\\
\end{pmatrix}, }

where $\mbf{W}_{\text{LLPE}} \in \mbb{R}^{n \times d}$ and $\bs{\theta}_j$ is a weight vector that parametrizes $h$. Inspired by their use in spectral filtering \citep{hammond2011wavelets}, we set $h$ to be a truncated Chebyshev series. Thus, $h(\lambda_i; \bs{\theta}_j)$ has the form 
\eq{h(\lambda_i; \bs{\theta}_j) = \sum_{m=0}^M \bs{\theta}_j[m] \cdot T_m(\tilde \lambda_i), \text{ where}}
\eq{T_m(\tilde \lambda_i) = \text{cos}(m \cdot \text{arccos}(\tilde \lambda_i)),}

where $\tilde \lambda$ are the normalized eigenvalues, $M$ is a tunable hyperparameter, $T_m$ is a Chebyshev polynomial of order $m$, and $\bs{\theta}_1, \cdots,\bs{\theta}_d \in \mbb{R}^M$ are learnable Chebyshev coefficients. To learn these parameters, we update them via backpropagation when training the GNN. We additionally regularize the $l^1$ and $l^2$ norms of the columns of $\mbf{W}_{\text{LLPE}}$ during training to encourage sparse outputs. Figure \ref{fig:sbm_llpe} demonstrates an overview of LLPE's improvements on community detection over LPE on binary heterophilous SBMs.

\subsection{Theoretical Expressivity of LLPE}

In this section, we demonstrate LLPE's theoretical expressivity. We first show that LLPE can capture relevant structure on homophilous and heterophilous SBMs. We next demonstrate that LLPE can approximate general notions of graph distances, including ones defined with random walks, heat kernels, and diffusion maps. This shows that LLPE can recover relevant structure on arbitrary graphs. In the following proposition, we state the approximation result for SBMs.

\begin{proposition}
Let $\mbf{A}$ and $\mbf{L}$ be the adjacency and Laplacian matrix drawn from $G(n, k, p, q)$. If $p \gg q$ or $q \gg p$ and $\text{min}(d_i) \geq C \text{ln}(n)$ where $C$ is an appropriately large constant, then LLPE can correctly recover the true communities up to an orthogonal transformation with at most $\mc{O}(k^{\frac{3}{2}})$ misclassified nodes.
\label{prp:com}
\end{proposition}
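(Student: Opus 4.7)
The plan is to reduce Proposition~\ref{prp:com} to Theorems~\ref{thm:1} and~\ref{thm:2} by showing that the family of functions $h(\cdot;\bs{\theta})$ available to LLPE has enough capacity to (approximately) select whichever block of eigenvectors reveals the communities. Recall $\mbf{P}_{\text{LLPE}} = \mbf{U}\mbf{W}_{\text{LLPE}}$ where column $j$ of $\mbf{W}_{\text{LLPE}}$ has entries $h(\lambda_i;\bs{\theta}_j)$. If we can choose $\bs{\theta}_1,\dots,\bs{\theta}_d$ so that the corresponding vectors $(h(\lambda_i;\bs{\theta}_j))_{i=1}^n$ are supported, up to a small error, on the community-revealing eigenvalues and collectively span the corresponding eigenvector block, then $\mbf{P}_{\text{LLPE}}$ agrees up to an invertible linear map with the block used in the cited theorems, and the downstream clustering step inherits the $\mc{O}(k^{3/2})$ misclassification guarantee.

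First I would treat the homophilous regime $p \gg q$. Under $\min(d_i)\ge C\ln n$, the standard SBM concentration results (e.g., Le--Levina--Vershynin, Lei--Rinaldo) that underlie Theorem~\ref{thm:1} yield, with high probability, a spectral gap separating the $k-1$ smallest nontrivial eigenvalues from the rest of the spectrum; call the corresponding window $[a,b]\subset[0,2]$. By Jackson's theorem, a smoothed indicator of $[a,b]$ can be approximated uniformly on $[0,2]$ to arbitrary accuracy by a Chebyshev series of some finite order $M$ depending on the gap but not on $n$. Selecting $d \ge k-1$ linearly independent such smoothed bumps (for example, weighted by distinct monomials of $\lambda$ on $[a,b]$) as $\bs{\theta}_1,\dots,\bs{\theta}_d$ yields a $\mbf{W}_{\text{LLPE}}$ whose rows indexed by the relevant eigenvalues form a full-rank matrix while all other rows are $\varepsilon$-small, so $\mbf{P}_{\text{LLPE}} = \mbf{U}_{k-1}\mbf{M} + \mbf{E}$ with $\mbf{M}$ invertible and $\|\mbf{E}\|$ arbitrarily small. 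Applying the same rounding step used in Theorem~\ref{thm:1} recovers the communities up to at most $\mc{O}(k^{3/2})$ errors.

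The heterophilous case $q \gg p$ is handled symmetrically by invoking Theorem~\ref{thm:2} instead: the $k-1$ largest eigenvalues form the relevant block, concentrate near the top of $[0,2]$, and are separated from the bulk by a spectral gap. The same Chebyshev approximation argument, now applied to a smoothed indicator centered at the top of the spectrum, produces coefficients $\bs{\theta}_j$ for which $\mbf{P}_{\text{LLPE}}$ spans the top eigenvector block up to a small perturbation, so the clustering guarantee of Theorem~\ref{thm:2} transfers and gives the same $\mc{O}(k^{3/2})$ bound.

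The main obstacle is controlling the effect of the approximation error $\mbf{E}$ on the number of misclassified nodes. This requires a Davis--Kahan / $\sin\Theta$-type argument showing that the invariant subspace used for clustering is stable under small perturbations of $\mbf{P}_{\text{LLPE}}$; this is exactly the mechanism already used to prove Theorem~\ref{thm:2}, so the perturbation bound I need comes essentially for free. A secondary point, worth explicitly checking, is that the Chebyshev truncation order $M$ must be taken at least on the order of $1/\mathrm{gap}(p,q,k)$; this is a mild condition since under the assumed regimes the SBM spectral gap is bounded below by a constant with high probability, and the PE dimension $d$ need only satisfy $d \ge k-1$ for the spanning argument to go through.
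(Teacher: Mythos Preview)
Your proposal is correct and follows the same high-level route as the paper: reduce to Theorems~\ref{thm:1} and~\ref{thm:2} by exhibiting Chebyshev coefficients that make $\mbf{P}_{\text{LLPE}}$ coincide, up to a negligible perturbation, with the community-revealing eigenvector block.

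The one notable difference is in how the relevant block is extracted. You approximate a smoothed indicator of the entire eigenvalue window $[a,b]$ and then manufacture linear independence by multiplying by distinct monomials, obtaining $\mbf{P}_{\text{LLPE}} = \mbf{U}_{k-1}\mbf{M} + \mbf{E}$ with $\mbf{M}$ invertible. The paper instead targets each observed eigenvalue individually with a narrow Gaussian bump $f_j(\lambda) = e^{-(\lambda-\lambda_j)^2 C_{\max}}$, so that column $j$ of $\mbf{W}_{\text{LLPE}}$ is approximately the $j$th standard basis vector and hence $\mbf{P}_{\text{LLPE}} \approx \mbf{U}_k$ directly (respectively $\mbf{U}_{-k:}$ in the heterophilous case). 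This sidesteps your monomial-weighting construction and the need to track an extra invertible factor $\mbf{M}$; on the other hand, your window-based version is arguably more robust in that it depends only on the macroscopic spectral gap rather than on separating the $k-1$ nearly-coincident eigenvalues from one another. Both variants use the same smoothness-based Chebyshev approximation (the paper invokes the $O(M^{-d})$ rate for $C^{d+1}$ targets, which is the same content as the Jackson-type bound you cite), and both transfer the $\mc{O}(k^{3/2})$ misclassification guarantee via the Davis--Kahan mechanism already present in the proof of Theorem~\ref{thm:2}.
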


We prove proposition \ref{prp:com} in Appendix \ref{pf:llpe_app}. Since LLPE leverages the full eigenvector matrix $\mbf{U}$ and its corresponding eigenvalues $\mbf{\Lambda}$, it can capture the graph structure on both homophilous and heterophilous SBMs. We next show that LLPE can further capture relevant notions of structure for arbitrary graphs. We begin by introducing a general class of functions on graphs.

\begin{definition}
Let $G$ be an arbitrary graph. Define $f_r: V \times V \to \mbb{R}$ as a function on $G$ with respect to $r: [0, 2] \to \mbb{R^+}$ such that $f_r(i, j)$ has the following form,
\eq{f_r(i, j)^2 = \sum_{k=1}^n r(\lambda_k) (\mbf{u}_k[i] - \mbf{u}_k[j])^2.}
\label{def:dist}
\vspace{-1em}
\end{definition}

When $r$ is monotonically decreasing in $\lambda$, Definition \ref{def:dist} generalizes a variety of distances on graphs. For example when $r(\lambda_k) = \frac{1}{\lambda_k^2}$, $f_r$ is the commute time \citep{lovasz1993random}, where $f_r(i, j)$ is the expected number of steps in a random walk starting at node $i$ travelling to node $j$ then travelling back to node $i$. When $r(\lambda_k) = e^{-2t\lambda_k}$ for some parameter $t$, $f_r$ is the diffusion distance \citep{coifman2006diffusion}, where $f_r(i, j)$ is the number of paths of length $t$ from node $i$ to node $j$. Finally, when $r(\lambda_k) = \frac{1}{\lambda_k}$, $f_r$ is the Biharmonic distance \citep{lipman2010biharmonic} proposed to improve upon the commute time and diffusion distance.

In the cases of the commute time, diffusion distance, and Biharmonic distance, $r$ acts as a low pass filter that amplifies the first eigenvectors and attenuates the last ones. As a result, nodes that are close will have small $f_r(i, j)$, while nodes that are far away will have large $f_r(i, j)$, effectively capturing a notion of distance. For homophilous graphs, these notions capture the relevant graph structure. On the other hand, if $r$ is monotonically increasing in $\lambda$, then $r$ acts as a high pass filter, and nodes far away will have small $f_r(i, j)$, while nodes that are close will have large $f_r(i, j)$. In this case, $f_r(i, j)$ captures the relevant graph structure on heterophilous graphs. 

LLPE can approximate functions on graphs of the form of Definition \ref{def:dist} for any $r$, demonstrating that LLPE can capture relevant notions of graph structure. We prove Theorem \ref{thm:dst} in Appendix \ref{pf:llpe_app}.

\begin{theorem}
Let $G$ be an arbitrary graph and $f_r$ be a function on $G$ of the form in Definition \ref{def:dist} for some $r$. LLPE can recover $f_r$ such that for any nodes $i$ and $j$, the $l^2$ distance between LLPE's encoding for nodes $i$ and $j$ approximates $f_r(i, j)$.
\label{thm:dst}
\end{theorem}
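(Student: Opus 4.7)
The plan is to construct, for any target $r$, an explicit choice of the Chebyshev coefficients $\{\bs{\theta}_l\}$ so that $\mbf{W}_{\text{LLPE}} \mbf{W}_{\text{LLPE}}^{\top}$ equals (or approximates) the diagonal matrix $\mathrm{diag}(r(\lambda_1), \ldots, r(\lambda_n))$. Once this is secured, the squared $l^2$ distance between any two LLPE encodings collapses to $f_r(i,j)^2$ by direct computation, and the theorem follows by taking square roots.

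I would begin by defining $\mbf{a}^{ij} \in \mbb{R}^n$ with $\mbf{a}^{ij}_k = \mbf{u}_k[i] - \mbf{u}_k[j]$. A short calculation using $\mbf{P}_{\text{LLPE}} = \mbf{U} \mbf{W}_{\text{LLPE}}$ and the definition of $\mbf{W}_{\text{LLPE}}$ yields
\[ \nrm{\mbf{P}_{\text{LLPE}}[i,:] - \mbf{P}_{\text{LLPE}}[j,:]}_2^2 = (\mbf{a}^{ij})^{\top} \mbf{W}_{\text{LLPE}} \mbf{W}_{\text{LLPE}}^{\top} \mbf{a}^{ij}, \]
while Definition \ref{def:dist} rearranges to $f_r(i,j)^2 = (\mbf{a}^{ij})^{\top} \mathrm{diag}(r(\lambda_k)) \mbf{a}^{ij}$. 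So the theorem reduces to realizing $\mbf{W}_{\text{LLPE}} \mbf{W}_{\text{LLPE}}^{\top} = \mathrm{diag}(r(\lambda_k))$ inside the class of Chebyshev-parameterized matrices.

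Next I would pursue the cleanest realization: take $d = n$ and aim for $\mbf{W}_{\text{LLPE}}$ itself diagonal with entries $\sqrt{r(\lambda_k)}$. For each $l \in \{1, \ldots, n\}$ this reduces to a single interpolation problem: find $\bs{\theta}_l$ with $h(\lambda_k; \bs{\theta}_l) = \sqrt{r(\lambda_l)} \cdot \delta_{kl}$ at the $n$ normalized values $\tilde{\lambda}_1, \ldots, \tilde{\lambda}_n$. Since the Chebyshev polynomials $\{T_0, T_1, \ldots\}$ span all polynomials and the normalized eigenvalues are (generically) distinct in $[-1,1]$, a unique Lagrange-type interpolant exists whenever the number of Chebyshev terms is at least $n$. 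Substituting back gives the exact identity $\nrm{\mbf{P}_{\text{LLPE}}[i,:] - \mbf{P}_{\text{LLPE}}[j,:]}_2 = f_r(i,j)$.

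The main obstacles I foresee are two corner cases. First, eigenvalues with multiplicity block the strict diagonal construction because $h$ only sees the eigenvalue and not the eigenvector index; here one must instead enforce the identity $\mbf{W}_{\text{LLPE}} \mbf{W}_{\text{LLPE}}^{\top} = \mathrm{diag}(r(\lambda_k))$ block-wise, using that $h$ is constant on each repeated eigenvalue and that replacing eigenvectors inside a shared eigenspace by any orthonormal basis preserves the quadratic form $(\mbf{a}^{ij})^{\top} \mathrm{diag}(r(\lambda_k)) \mbf{a}^{ij}$. Second, when fewer Chebyshev terms are available than $n$, exact interpolation fails, but Chebyshev approximation theory (Jackson-type bounds applied to a smooth extension of $\sqrt{r}$ on $[-1,1]$) yields a uniform error that decays with $M$, which justifies the word \emph{approximates} in the theorem statement. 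Everything else is routine linear algebra.
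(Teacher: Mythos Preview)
Your proposal is correct and shares the paper's overall architecture: set $d=n$ and choose the coefficients $\bs{\theta}_l$ so that $\mbf{W}_{\text{LLPE}}$ becomes (close to) a diagonal matrix, after which the quadratic-form reduction you wrote down collapses the squared $l^2$ distance to $f_r(i,j)^2$. The difference is in how the diagonal is realized. The paper constructs, for each column $j$, a smooth bump $f_j(\lambda) = r(\lambda_j)\,e^{-(\lambda-\lambda_j)^2 C_{\text{max}}}$ and invokes uniform Chebyshev approximation of $C^\infty$ functions to make $h(\cdot;\bs{\theta}_j)$ close to $f_j$; this needs two approximation layers (large $C_{\text{max}}$, then large $M$) and only ever yields an $\epsilon$-close result. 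You instead interpolate the target values $\sqrt{r(\lambda_l)}\,\delta_{kl}$ directly at the $n$ normalized eigenvalues via Lagrange interpolation in the Chebyshev basis, which is exact once $M\geq n-1$ and is the more elementary argument. Your version also places the square root on $r$ correctly (the paper's bump carries $r(\lambda_j)$ rather than $\sqrt{r(\lambda_j)}$, which would give $\sum_k r(\lambda_k)^2(\mbf{u}_k[i]-\mbf{u}_k[j])^2$ upon squaring; easily fixed but worth noting). You also flag the repeated-eigenvalue obstruction, which the paper does not discuss; your block-wise remark is the right instinct, though be aware that when an eigenspace has dimension $>1$ the corresponding rows of $\mbf{W}_{\text{LLPE}}$ are forced to coincide, so $\mbf{W}_{\text{LLPE}}\mbf{W}_{\text{LLPE}}^{\top}$ is rank-deficient on that block and strict equality with $\mathrm{diag}(r(\lambda_k))$ cannot hold --- neither proof fully closes this gap, and in practice one simply assumes simple eigenvalues.
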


\subsection{Generalization Properties of LLPE}

In this section, we discuss the statistical generalization of LLPE. Our main result derives the upper and lower bounds of LLPE's Rademacher complexity, indicating that generalization depends on the norms of the Chebyshev coefficients. Our analysis justifies our choice of $h$. In particular, defining $h$ as a truncated Chebyshev series obtains better generalization over other approximating polynomials and existing PEs. 

\begin{theorem}
Let $\mc{H}_{\text{LLPE}} = \{\tilde\lambda \to \sum_{m=1}^M \theta_m \cdot \tilde T_m(\tilde \lambda) : \bs{\theta} \in \mbb{R}^M, \nrm{\bs{\theta}}_2 \leq C_{\text{LLPE}}\}$, where $C_{\text{LLPE}}$ is some constant greater than 0, $\tilde\lambda$ denotes the normalized eigenvalues, and $\tilde T_m$ is the normalized Chebyshev polynomial of order $m$. Then, the empirical Rademacher complexity of the hypothesis class $\mc{H}_{\text{LLPE}}$ for a sample $S = (\lambda_1, \ldots, \lambda_n)$ admits the following upper and lower bounds: 
\eq{\frac{C_{\text{LLPE}}}{\sqrt{2n}} \leq \hat{\mf{R}}_{S}(\mc{H}_{\text{LLPE}}) \leq \frac{\sqrt{2}C_{\text{LLPE}}}{\sqrt{n}}}
\label{thm:rad}
\vspace{-1em}
\end{theorem}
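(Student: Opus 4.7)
The plan is to treat $\mc{H}_{\text{LLPE}}$ as an $\ell^2$-constrained linear function class over the normalized Chebyshev feature map $\phi(\tilde\lambda) := (\tilde T_1(\tilde\lambda), \ldots, \tilde T_M(\tilde\lambda))^\top \in \mbb{R}^M$, and to sandwich the resulting vector-valued Rademacher sum between Jensen's inequality and the sharp Kahane--Khintchine inequality in Hilbert space.

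\textbf{Step 1 (Reduction to an expected norm).} Every $h \in \mc{H}_{\text{LLPE}}$ has the form $h(\tilde\lambda) = \langle \bs\theta, \phi(\tilde\lambda)\rangle$ with $\nrm{\bs\theta}_2 \le C_{\text{LLPE}}$. Applying the dual identity $\sup_{\nrm{\bs\theta}_2 \le C_{\text{LLPE}}} \langle \bs\theta, \mathbf v\rangle = C_{\text{LLPE}}\nrm{\mathbf v}_2$ inside the Rademacher expectation, I obtain
\eq{\hat{\mf{R}}_{S}(\mc{H}_{\text{LLPE}}) \;=\; \frac{C_{\text{LLPE}}}{n}\,\mbb{E}_{\bs\sigma}\!\brt{\nrm{\sum_{i=1}^n \sigma_i\, \phi(\tilde\lambda_i)}_{2}}.}
This reduces the theorem to two-sided control of the Euclidean norm of the Rademacher vector $Z := \sum_i \sigma_i\, \phi(\tilde\lambda_i)$.

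\textbf{Step 2 (Moment comparison on $Z$).} Independence of the $\sigma_i$'s gives $\mbb{E}\nrm{Z}_2^2 = \sum_{i=1}^n \nrm{\phi(\tilde\lambda_i)}_2^2$. Jensen's inequality yields the upper estimate $\mbb{E}\nrm{Z}_2 \le (\mbb{E}\nrm{Z}_2^2)^{1/2}$. For the matching lower estimate I would invoke the Kahane--Khintchine inequality in Hilbert space with the sharp Szarek constant, $\mbb{E}\nrm{Z}_2 \ge \tfrac{1}{\sqrt{2}}(\mbb{E}\nrm{Z}_2^2)^{1/2}$, which is obtained by applying the scalar Khintchine inequality after diagonalizing the second-moment operator of $Z$ and reducing to the worst-case co-linear configuration of the $\phi(\tilde\lambda_i)$.

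\textbf{Step 3 (Chebyshev feature normalization).} The final ingredient is a uniform sandwich on the feature norms. Using the parametrization $\tilde\lambda_i = \cos\omega_i$ and the identity $T_m(\cos\omega) = \cos(m\omega)$, the normalization convention defining $\tilde T_m$ implies the pointwise bound $1 \le \sum_{m=1}^M \tilde T_m(\tilde\lambda)^2 \le 2$ on $[-1,1]$, and therefore $n \le \sum_{i=1}^n \nrm{\phi(\tilde\lambda_i)}_2^2 \le 2n$. Substituting these into Step 2 and then Step 1 delivers
\eq{\frac{C_{\text{LLPE}}}{n}\cdot \sqrt{\frac{n}{2}} \;\le\; \hat{\mf{R}}_{S}(\mc{H}_{\text{LLPE}}) \;\le\; \frac{C_{\text{LLPE}}}{n}\cdot \sqrt{2n},}
which simplifies to the stated bounds.

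\textbf{Main obstacle.} Steps 1 and 2 are essentially standard manipulations for $\ell^2$-constrained linear classes and for Rademacher sums in Hilbert space. The quantitative heart of the argument is Step 3: it is precisely the uniform Chebyshev sandwich that eliminates the nominal $M$-dependence in the feature norms and yields a Rademacher complexity that does not blow up with the number of Chebyshev coefficients, thereby explaining why this parametrization of $h$ generalizes strictly better than generic polynomial or sinusoidal bases. If the exact constants $1$ and $2$ fail to hold pointwise under some alternative convention for $\tilde T_m$, I would fall back to controlling the aggregate $\sum_i \nrm{\phi(\tilde\lambda_i)}_2^2$ directly via discrete Chebyshev orthogonality on the sample and absorb any residual factors into $C_{\text{LLPE}}$.
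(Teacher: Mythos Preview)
Your Steps 1 and 2 match the paper exactly: the paper invokes the Awasthi--Frank--Mohri bound for $\ell^2$-constrained linear classes, $\tfrac{W}{\sqrt{2}n}\nrm{\mbf{X}}_F \le \hat{\mf{R}}_S \le \tfrac{W}{n}\nrm{\mbf{X}}_F$, which is precisely the Jensen/Khintchine sandwich you unpack. So the reduction to controlling $\nrm{\tilde{\mbf T}}_F^2 = \sum_i \nrm{\phi(\tilde\lambda_i)}_2^2$ is identical.

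The divergence, and the gap, is entirely in Step 3. In the paper, ``normalized Chebyshev polynomial'' means the \emph{monic} polynomial $\tilde T_m = 2^{1-m}T_m$ for $m\ge 1$, and the whole point of choosing this basis is the minimax property $\nrm{\tilde T_m}_\infty = 2^{1-m}$. The paper's upper bound on $\nrm{\tilde{\mbf T}}_F$ proceeds column-wise: replace each entry by its sup-norm $2^{1-m}$ and sum the resulting geometric series to get $\sqrt{2n}$. Your pointwise upper bound $\sum_{m\ge 1}\tilde T_m(\tilde\lambda)^2 \le 2$ happens to hold under this convention (in fact the sum is at most $4/3$), so that half survives.

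The lower half does not. Under the monic convention, your claimed pointwise inequality $\sum_{m=1}^M \tilde T_m(\tilde\lambda)^2 \ge 1$ is false: at $\tilde\lambda = 0$ one has $\tilde T_1(0)=0$, $\tilde T_2(0)=-\tfrac12$, $\tilde T_3(0)=0$, $\tilde T_4(0)=\tfrac18$, and the full sum is $\sum_{k\ge 1} 4^{1-2k} = 4/15$. No alternative normalization rescues both sides simultaneously: any rescaling that keeps the upper bound $M$-free forces the monic-type geometric decay, which then kills the lower bound. The paper gets $\nrm{\tilde{\mbf T}}_F \ge \sqrt{n}$ by a different mechanism entirely: it includes the constant column $\tilde T_0 \equiv 1$ in the feature matrix and lower-bounds the Frobenius norm by that single column, using $\min_\lambda |\tilde T_0(\lambda)| = 1$ while every higher-order column contributes nonnegatively. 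Your fallback to ``discrete Chebyshev orthogonality on the sample'' does not help, since the sample $S$ is arbitrary and need not sit at Chebyshev nodes. To close the argument you must bring in the $m=0$ term (or an equivalent anchor) rather than rely on a pointwise bound that does not hold.
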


We prove Theorem \ref{thm:rad} in Appendix \ref{pf:llpe_gen}. Theorem \ref{thm:rad} tells us that the empirical Rademacher complexity scales with the upper bound of the $l^2$ norm of the Chebyshev coefficients. Thus, the bounds do not depend \textit{explicitly} on the order $M$ of the Chebsyshev series, but rather implicitly. As a result, LLPE obtains high expressivity and good generalization with large $M$, so long as the norm of the Chebyshev coefficients remains small. To prove this result, we leverage the fact that Chebyshev polynomials of order $m$ obtain the minimum $l^\infty$ norm among all other polynomials of order $m$. Importantly, the extension of other learnable PEs that rely on applying an MLP or transformer to the full eigenvector matrix $\mbf{U} \in \mbb{R}^{n \times n}$ result in model weights with dimension scaling in $n$, the number of nodes, which can be very large for large graphs. The dimension of LLPE's Chebyshev coefficients do not depend on $n$, but rather rely on $M$ the number of terms in the sum (which is typically much smaller than $n$). This serves as a form of implicit regularization for large graphs, and provides theoretical insight into why LLPE is able to learn relevant eigenvectors in high-dimensional noisy graphs while other learnable PEs cannot.

\begin{figure*}[t!]
     \centering
     \begin{subfigure}[b]{.49\textwidth}
         \centering
         \includegraphics[width=\textwidth]{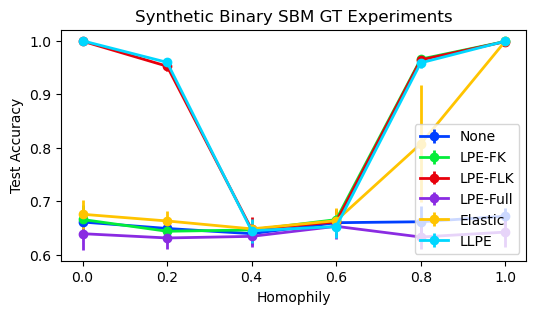}
         \caption{Binary SBM Experiment}
         \label{fig:eu}
     \end{subfigure}
     \hfill
     \begin{subfigure}[b]{.49\textwidth}
         \centering
         \includegraphics[width=\textwidth]{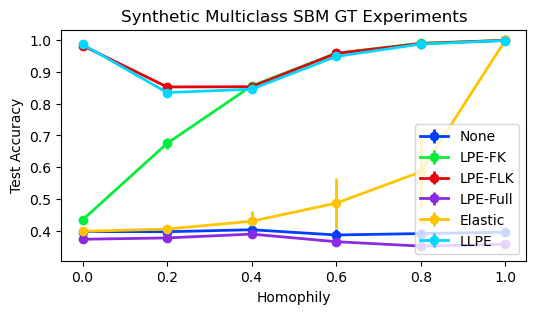}
         \caption{Multiclass SBM Experiment}
         \label{fig:eu}
     \end{subfigure}
    \caption{Mean and standard deviations (error bars) of all model-PE combinations on the synthetic SBMs. LLPE performs well across both high homophily and high heterophily, while LPE-FK does not.}
    \label{fig:sbm}
\end{figure*}

\subsection{Extension of LLPE to Large Graphs} 

Our discussion of LLPE thus far has assumed access to the full eigendecomposition of the Laplacian costing $\mc{O}(n^3)$. For small and medium-sized graphs, this is feasible to compute, but for large ones, it is not. We thus need a cheaper alternative for large graphs. When dealing with large graphs, motivated by our analysis of LPEs, instead of using all eigenvectors and eigenvalues, we propose to use the first and last $k$. These eigenvectors and eigenvalues can be obtained using the Arnoldi iteration algorithm \citep{arnoldi1951principle} with time complexity $\mc{O}(k^2n)$, assuming a sparse graph. Thus, for large sparse graphs, the eigenvectors and eigenvalues at the ends of the spectrum are feasible to obtain. We empirically demonstrate the algorithm's efficiency in Appendix \ref{sec:eff}. Furthermore, in the context of large graphs we demonstrate there is not a significant performance difference between LLPE and its approximate version for large graphs (Appendix \ref{app:neigs}).

\section{EXPERIMENTAL RESULTS \label{sec:exp}}

We evaluate LLPE on homophilous and heterophilous benchmarks, aiming to answer the following research questions:

\begin{itemize}
    \item \textbf{RQ1}: In a simple synthetic setting, where the data generation process and relevant eigenvectors are known, does LLPE and other PE baselines capture the relevant graph structure?
         
    \item \textbf{RQ2a}: On complex real-world small and medium graphs of varying homophily and heterophily, to what extent does LLPE and other PE baselines improve GNN performance? 

    \item \textbf{RQ2b}: On real-world large graphs, does the approximate version of LLPE scale and does it improve GNN performance?

\end{itemize}

% We demonstrate the empirical utility of LLPE by evaluating it on homophilous and heterophilous benchmarks. We first consider a setting where we synthetically generate features, labels, and graph structure. These experiments verify our theoretical results and demonstrate the limitations of naive approaches. Next, we test LLPE on real-world benchmarks of varying size. Our experiments on the small and medium benchmarks demonstrate that LLPE improves performance in comparison to baselines and thus identifies relevant eigenvectors, whereas other PEs cannot. Our experiments on large benchmarks demonstrate that LLPE is scalable to large graphs. Lastly, we conduct a local analysis gaining insights into LLPE's gains.

\textbf{Implementation Details.} To obtain the eigenvectors and eigenvalues of the Laplacian, we utilize a fast implementation of Arnoldi iteration readily available in SciPy and optimized with sparse matrix and vector operations. We compute the Chebyshev polynomials using efficient tensor operations in Pytorch. Finally, LLPE is trained in an end to end fashion with the GNN via gradient descent.

\subsection{RQ1: Synthetic Data Experiments \label{sec:syn}}

\textbf{Data Generation.} We generate datasets according to binary and multiclass SBMs. The binary SBMs have $n=2000$ nodes while the multiclass SBMs have $n=5000$ nodes and $k=5$ communities. For both SBMs, we generate multiple graphs by varying the homophily ratio. Given an SBM, node labels are determined by the node's community, and node features are generated by sampling a vector of Gaussian distributed features. We present the remaining details in Appendix \ref{app:exp}\footnote{Code can be found at:\\https://github.com/MLD3/LearningLaplacianPEs}.

\textbf{Base Model and PE.} We test the performance of a GT (full) \citep{dwivedi2021generalization} with the PEs: 
(1)~No PE, 
(2)~LPE-FK which uses the first $k$ eigenvectors, 
(3)~LPE-FLK which uses the first and last $k$ eigenvectors, 
(4)~LPE-Full which leverages all the eigenvectors, 
(5)~ElasticPE \citep{liu2023graph}, and 
(6) our proposed encodings, LLPE. 
% , where LPE-FK, LPE-FLK and LPE-Full are the first $k$ eigenvectors, first and last $k$ eigenvectors and all eigenvectors, respectively. 
All PEs are concatenated to node features. %\reminder{what happens if you use only the last k?}

\textbf{Training and Evaluation.} We train all model-PE combinations by minimizing the negative log-likelihood on the train set. We select the best performing model across hyperparameters on the validation set for evaluation on the test set. We report the mean $\pm$ standard deviation of the accuracy on the test set across 10 random splits ($60\%/20\%/20\%$), following \citet{Pei2020Geom-GCN}.

\textbf{Results.} In Figure \ref{fig:sbm}, we present results on the synthetic experiments. On homophilous SBMs, PEs that leverage the first $k$ eigenvectors perform well, while on heterophilous SBMs, PEs that leverage the last $k$ eigenvectors perform well. Since LLPE learns the relevant eigenvectors, it captures structure in both homophilous and heterophilous SBMs. We note that LPE-FLK performs identically to LLPE since the first and last $k$ eigenvectors are all that is necessary to capture the relevant structure on the SBMs. To test the approach on more complex structures, we conducted an additional experiment in Appendix \ref{sec:power} on synthetic graphs generated with preferential attachment \citep{zhu2020beyond}. On the more complex graphs, LPE-FLK is no longer able to capture the relevant structure, while LLPE is able to capture the relevant structure in these settings.

\begin{table*}[h!]
\begingroup
\setlength{\tabcolsep}{4pt} % Default value: 6pt
\small
\centering
\caption{Mean $\pm$ standard deviation of model-PE test accuracy across 10 random splits on the small benchmarks. We highlight in \textbf{\grn{green}} the best performing model-PE combination. We additionally count the number of test splits LLPE outperforms LPE-FK's performance indicated by $(\cdot/10)$.}
\label{tab:3}
\resizebox{\textwidth}{!}{
\begin{tabular}{l l c c c c c | c} 
\toprule
 & & \textbf{Texas} & \textbf{Cornell} & \textbf{Cora-ML} & \textbf{Cora} & \textbf{Photo} & \textbf{Avg.} \\ 
 \textbf{Model} & \textbf{PE/SE} & $\mathbf{h=0.00}$ & $\mathbf{h=0.15}$ & $\mathbf{h=0.74}$ & $\mathbf{h=0.75}$ & $\mathbf{h=0.76}$ & \textbf{Rank} \\
 \midrule
  & No PE & 82.64 ± 6.80 & 75.19 ± 5.84 & 73.05 ± 2.29 & 66.47 ± 2.60 & 90.19 ± 1.42 & 2.8 \\
  & LPE-FK & 78.93 ± 6.50 & 77.33 ± 5.23 & 71.57 ± 3.48 & 64.59 ± 3.58 & 89.43 ± 0.80 & 4.2 \\
  & LPE-FLK & 81.60 ± 8.96 & 74.13 ± 5.12 & 72.12 ± 3.85 & 63.93 ± 2.46 & 90.22 ± 0.73 & 4.0 \\
  MLP & LPE-Full & 79.22 ± 7.52 & 74.91 ± 5.90 & 70.45 ± 5.40 & 62.30 ± 2.08 & 88.77 ± 0.98 & 6.4\\
 & ElasticPE & 80.29 ± 6.01 & 75.16 ± 5.07 & 80.39 ± 3.08 & 63.26 ± 2.50 & 88.07 ± 0.98 & 4.8\\
 & RWSE & 81.05 ± 5.81 & 75.46 ± 7.10 & 70.58 ± 3.34 & 62.40 ± 2.59 & 89.22 ± 1.63 & 4.8\\
 & LLPE (ours) & \grn{\textbf{ 84.82 ± 6.05 (8/10)}} & \grn{\textbf{77.60 ± 4.61 (2/10)}} & \grn{\textbf{80.99 ± 1.67 (10/10)}} & \grn{\textbf{78.62 ± 1.68 (10/10)}} & \grn{\textbf{92.61 ± 0.74 (10/10)}} & \grn{\textbf{1.0}} \\
 \midrule
  & No PE & 80.29 ± 7.13 & 73.34 ± 3.51 & 87.92 ± 1.30 & 87.61 ± 1.20 & \grn{\textbf{95.32 ± 0.38}} & \grn{\textbf{2.8}} \\
  & LPE-FK & 80.29 ± 6.60 & 72.81 ± 3.24 & 87.49 ± 1.43 & 86.86 ± 0.82 & 95.12 ± 0.38 & 5.0 \\
  & LPE-FLK & 78.70 ± 7.96 & 73.08 ± 4.43 & 87.24 ± 1.50 & 86.84 ± 1.14 & 95.21 ± 0.51 & 5.8 \\
  SAGE & LPE-Full & 78.95 ± 6.63 & 74.42 ± 5.08 & 87.24 ± 1.82 & 87.08 ± 0.94 & 95.00 ± 0.42 & 5.2 \\
  & ElasticPE & 80.57 ± 7.74 & \grn{\textbf{77.59 ± 3.48}} & 87.64 ± 1.12 & 87.26 ± 0.91 & 95.04 ± 0.52 & 3.2 \\
  & RWSE & 81.36 ± 6.79 & 73.85 ± 4.34 & 87.62 ± 1.57 & 87.10 ± 0.70 & 95.21 ± 0.51 & 3.0 \\
  & LLPE (ours) & \grn{\textbf{83.99 ± 5.12 (6/10)}} & 72.28 ± 6.21 (4/10) & \grn{\textbf{88.17 ± 1.52 (7/10)}} & \grn{\textbf{88.28 ± 1.01 (10/10)}} & 95.12 ± 0.43 (5/10) & 3.0 \\
 \midrule
 & No PE & 84.52 ± 5.97 & 75.70 ± 7.25 & 78.52 ± 1.35 & 73.86 ± 2.05 & 91.82 ± 0.58 & 5.8 \\
 & LPE-FK & 84.80 ± 4.92 & 76.50 ± 8.85 & 78.73 ± 1.81 & 73.99 ± 2.32 & 92.00 ± 0.63 & 4.8 \\
 & LPE-FLK & \grn{\textbf{85.61 ± 5.44}} & 78.66 ± 5.61 & 78.00 ± 2.28 & 73.82 ± 2.28 & 91.61 ± 0.54 & 4.8 \\
 & LPE-Full & 85.33 ± 6.18 & 78.89 ± 6.43 & 77.66 ± 1.79 & 70.84 ± 2.18 & 91.53 ± 0.43 & 6.4 \\
 GT (full) & ElasticPE & 85.06 ± 4.81 & 78.11 ± 5.90 & \grn{\textbf{85.48 ± 1.15}} & 74.67 ± 1.68 & 91.70 ± 0.50 & 3.6 \\
 & SAN-PE & 82.15 ± 6.20 & 78.40 ± 5.77 & 78.40 ± 1.36 & 73.16 ± 1.40 & 91.19 ± 0.45 & 6.8 \\
 & SignNet & 84.82 ± 6.48 & 77.30 ± 6.57 & 78.15 ± 2.05 & 72.66 ± 2.46 & 91.70 ± 0.60 & 6.4\\
 & RWSE & 85.60 ± 9.16 & \grn{\textbf{79.18 ± 6.27}} & 78.10 ± 1.54 & 74.27 ± 2.28 & 91.55 ± 0.77 & 4.0 \\
 & LLPE (ours) & 85.34 ± 6.44 (4/10) & 78.39 ± 5.94 (6/10) & 84.50 ± 1.25 (10/10) & \grn{\textbf{80.83 ± 1.33 (10/10)}} & \grn{\textbf{94.34 ± 0.53 (10/10)}} & \grn{\textbf{2.4}} \\
 \bottomrule
 \label{tab:small}
\end{tabular}
}
\endgroup
\end{table*}

\begin{table*}[h!]
\begingroup
\setlength{\tabcolsep}{6pt} % Default value: 6pt
\small
\centering
\vspace{-1em}
\caption{Mean $\pm$ standard deviation of model-PE test accuracy (AUROC for Tolokers) across 10 random splits on the medium benchmarks. We follow the same conventions as in Table \ref{tab:small}.}
\label{tab:3}
\resizebox{\textwidth}{!}{
\begin{tabular}{l l c c c c | c} 
\toprule
 & & \textbf{Amazon-ratings} & \textbf{Tolokers} & \textbf{Cora-full} & \textbf{Computers} & \textbf{Avg.} \\ 
 \textbf{Model} & \textbf{PE/SE} & $\mathbf{h=0.12}$ & $\mathbf{h=0.17}$ & $\mathbf{h=0.50}$ & $\mathbf{h=0.70}$ & \textbf{Rank} \\
 \midrule
  & No PE & 43.84 ± 0.64 & 82.64 ± 0.79 & \grn{\textbf{68.68 ± 0.63}} & 91.02 ± 0.40 & 2.75 \\
  & LPE-FK & 43.88 ± 1.17 & 82.56 ± 0.55 & 67.91 ± 0.78 & 91.02 ± 0.44 & 4.00\\
  % & LPE-FLK & TODO & TODO & TODO & TODO & TODO & TODO \\
  % & LPE-Full & TODO & TODO & TODO & TODO & TODO & TODO \\
  & ElasticPE & 42.84 ± 0.39 & 76.89 ± 2.52 & 66.87 ± 1.01 & 90.05 ± 0.53 & 6.50 \\
  SAGE & SAN-PE & 42.83 ± 0.78 & 80.37 ± 1.55 & 66.86 ± 0.61 & 90.97 ± 0.45 & 6.50 \\
  & SignNet & 43.48 ± 0.97 & 82.41 ± 0.54 & 68.24 ± 0.46 & 91.04 ± 0.44 & 3.75 \\
  & RWSE & 43.97 ± 1.21 & 82.43 ± 0.54 & 68.10 ± 0.73 & 91.06 ± 0.45 & 3.00 \\
  & LLPE (ours) & \grn{\textbf{45.56 ± 1.14 (10/10)}} & \grn{\textbf{83.46 ± 0.69 (10/10)}} & 68.12 ± 0.68 (5/10) & \grn{\textbf{91.08 ± 0.29 (6/10)}} & \grn{\textbf{1.50}}\\
 \midrule
 & No PE & 38.92 ± 0.60 & 74.04 ± 0.92 & 60.71 ± 0.70 & 85.13 ± 0.87 & 3.75 \\
 & LPE-FK & 38.59 ± 0.59 & 74.21 ± 0.63 & 59.33 ± 0.94 & 85.19 ± 0.74 & 4.75 \\
 % & LPE-FLK & 65.23 ± 0.31 & 39.57 ± 0.60 & 78.97 ± 0.84 & 59.65 ± 0.97 & 84.87 ± 0.73 & 87.37 ± 0.28 \\
 % & LPE-Full & 42.98 ± 1.97 & -100.00 ± 0.00 & 68.83 ± 1.49 & 58.21 ± 1.30 & 83.59 ± 1.68 & 83.55 ± 3.37 \\
 & ElasticPE & 30.08 ± 5.48 & 73.23 ± 2.90 & 57.92 ± 1.39 & 85.28 ± 0.86 & 6.00 \\
 GT (full) & SAN-PE & 38.93 ± 0.60 & 78.42 ± 1.15 & 60.25 ± 0.60 & 85.36 ± 0.55 & 2.50 \\
 & SignNet & 38.61 ± 0.51 & 73.96 ± 0.86 & 60.28 ± 0.59 & 85.09 ± 0.68 & 5.00 \\
 & RWSE & 38.82 ± 0.52 & 74.09 ± 0.69 & 60.07 ± 0.87 & 85.05 ± 0.83 & 5.00 \\
 & LLPE (ours) & \grn{\textbf{39.80 ± 0.62 (10/10)}} & \grn{\textbf{80.85 ± 0.83 (10/10)}} & \grn{\textbf{61.02 ± 0.60 (9/10)}} & \grn{\textbf{87.83 ± 0.45 (10/10)}} & \grn{\textbf{1.00}} \\
 \bottomrule
 \label{tab:med}
 \vspace{-2em}
\end{tabular}
}
\endgroup
\end{table*}

\begin{table}[h!]
\begingroup
\setlength{\tabcolsep}{4pt} % Default value: 6pt
\small
\centering
\label{tab:3}
\caption{Mean $\pm$ standard deviation of test accuracy (AUROC for Questions) on the large benchmarks.}
\resizebox{.5\textwidth}{!}{

\begin{tabular}{l l c c } 
\toprule
 & & \textbf{Penn94} & \textbf{Questions} \\ 
 \textbf{Model} & \textbf{PE/SE} & $\mathbf{h=0.03}$ & $\mathbf{h=0.08}$ \\
 \midrule
  & No PE & 72.54 ± 0.51 & 74.77 ± 1.16 \\
  SAGE & LPE-FK & 72.71 ± 0.70 & 74.94 ± 1.18 \\
  & LLPE (large) & \grn{\textbf{72.79 ± 0.45 (5/10)}} & \grn{\textbf{77.84 ± 1.29 (10/10)}} \\
 \bottomrule
 \label{tab:large}
 % \vspace{2em}
\end{tabular}
}
\vspace{-2em}
\endgroup
\end{table}

\subsection{RQ2a/b: Real-world Data Experiments}

Below, we describe our base models and PEs tested, training and evaluation procedure, and results for our real-world experiments. 

\textbf{Datasets.} We evaluate on 12 homophilous and heterophilous node classification datasets, dividing them into small benchmarks ranging from $300 - 8000$ nodes \citep{yang2016revisiting, bojchevski2018deep, shchur2018pitfalls, Pei2020Geom-GCN}, medium benchmarks ranging from $10000 - 25000$ nodes \citep{platonov2023critical}, and large benchmarks ranging from $40000 - 50000$ nodes \citep{lim2021large}. For each dataset we report the class homophily \citep{lim2021large}. 
% \reminder{you could give a table of the datasets, including their sizes}

\textbf{Base Models and PEs.} We test the following base models: 
(1)~MLPs, 
(2)~GTs~\citep{dwivedi2021generalization}, and 
(3)~GraphSage~\citep{hamilton2017inductive}. 
We then test the following PEs: 
(a)~No PE, 
(b)~LPE-FK \citep{dwivedi2021generalization}, 
(c)~LPE-FLK, 
(d)~LPE-Full, 
(e)~ElasticPE \citep{liu2023graph}, 
(f)~SignNet \citep{lim2022sign}, 
(g)~SAN-PE \citep{kreuzer2021rethinking}, 
(h)~RWSE \citep{dwivedi2022graph}, and 
(i)~our proposed encodings, LLPE. 

\begin{figure*}[h!]
     \centering
     \begin{subfigure}[b]{.48\textwidth}
         \centering
         \includegraphics[width=\textwidth]{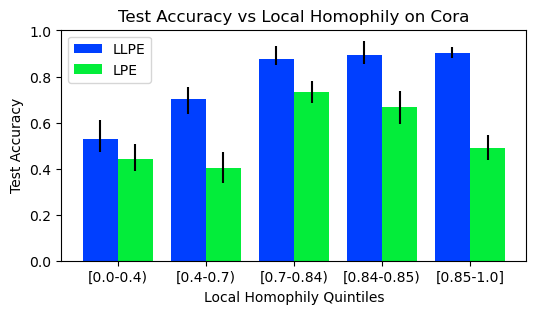}
         \caption{Accuracy for local homophily quintiles in Cora.}
     \end{subfigure}
     \hfill
     \begin{subfigure}[b]{.48\textwidth}
         \centering
         \includegraphics[width=\textwidth]{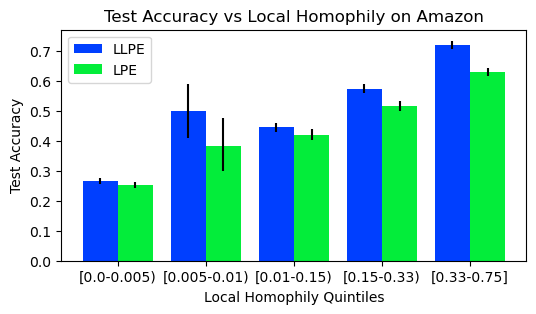}
         \caption{Accuracy for local homophily quintiles in Amazon.}
     \end{subfigure}
    \caption{Performance of GTs with LPE-FK and LLPE across local node homophily quintiles on Cora and Amazon-ratings. Error bars are based on 95\% bootstrapped confidence intervals.}
    \label{fig:analysis}
\end{figure*}

\textbf{Training and Evaluation.} We follow the same training and evaluation procedure in Section \ref{sec:syn}. Importantly, in order to make fair comparisons, we evaluate each PE across a wide range of $k$ eigenvectors and eigenvalues, letting $k \in [8, \text{all}]$ during hyperparameter selection, following \citet{kreuzer2021rethinking, lim2022sign}. Generally, we find that most PEs perform best under smaller choices of $k$ as opposed to larger ones. We present the remaining details in Appendix \ref{app:exp}. 

\textbf{Results.} Across small graph datasets, LLPE obtains the best average rank of 2.13, while the second best ranked PE obtains an average rank of 3.80 (Table \ref{tab:small}). Moreover, for 7/15 model and dataset combinations, LLPE outperforms LPE-FK 10/10 times across the test splits. In our medium datasets, we find similar results and LLPE obtains the best average rank of 1.25 in comparison to the second best rank of 3.25 (Table \ref{tab:med}). For 5/8 model-PE and dataset combinations, LLPE outperforms LPE-FK 10/10 times. In our large datasets, the approximate version of LLPE for large graphs outperforms LPE-FK and No-PE on both benchmarks (Table \ref{tab:large}). 

In our real-world experiments, we find that LPE-FLK does not capture the relevant structure, since real-world graphs require leveraging more intricate combinations of the eigenvectors. In many cases, learnable PEs such as SignNet and SAN-PE perform no better than LPE-FK such as in Amazon-ratings, Tolokers, and Cora, suggesting that existing learnable PEs may not identify the relevant eigenvectors in graphs of 1000s of nodes. In contrast, when given all eigenvectors where $n$ is as large as 25,000 LLPE does identify relevant eigenvectors as it consistently obtains higher performance compared to LPE-FK across many datasets. On large graphs of size 50,000, LLPE (large) is also able to identify relevant eigenvectors, improving performance over the two baselines.

Although LLPE obtains the best ranking across datasets, it exhibits limitations. In particular, LLPE requires tuning multiple hyperparameters, including the order $M$ of the Chebyshev polynomials and the number of eigenvectors $k$ on large datasets. We find that performance can degrade under certain hyperparameter selections. In particular, our approach is sensitive to small choices of $M$, leading to low approximation capabilities, and small choices of $k$ where too few eigenvectors are included in the PE (Appendix \ref{sec:add}).

\subsection{Sensitivity Analysis of LLPE \label{sec:loc}}

Interestingly, on small datasets LLPE's performance gains are larger for homophilous graphs. We investigate this result and find that LLPE's gains on homophilous graphs can be attributed to higher performance in local regions of high heterophily within the graph. In Figure \ref{fig:analysis}, we measure the performance of GT with LLPE and LPE-FK on Cora and Amazon across quintiles of local node homophily, homophily measured at the node level. Although Cora has high global homophily, it also contains a large portion of nodes that exhibit low local homophily (Figure \ref{fig:analysis}(a)). On these nodes, LLPE leads to large performance improvements in comparison to LPE-FK, thus demonstrating that LLPE can lead to performance improvements on graphs that are globally homophilous yet also contain local regions of heterophily. In Figure \ref{fig:analysis}(b), we analyze Amazon, a heterophilous benchmark. Here, we find that most nodes exhibit low local homophily, and LLPE leads to performance improvements across these nodes, resulting in an increase in overall performance on Amazon. 

\section{RELATED WORK}

In past work, researchers have analyzed the first $k$ eigenvectors of the Laplacian in the context of SBMs, focusing on regularization~\citep{le2015sparse, le2017concentration, guedon2016community}, while others have analyzed the largest $k$ eigenvectors of the adjacency matrix~\citep{rohe2011spectral, joseph2016impact}. Generally, spectral clustering is applied to these eigenvectors to extract communities from homophilous graphs. In contrast, we analyze the last $k$ eigenvectors of the Laplacian in Section \ref{sec:3}, demonstrating the connection between these eigenvectors and heterophily. We further propose a new PE leveraging the Laplacian's full spectrum rather than focusing on a subset.

% Motivated by the success of LPEs, there have been many PEs that aim to improve LPEs with the addition of learnable components to the first $k$ eigenvectors such as SignNet \citep{lim2022sign}, neural networks invariant to the sign and basis of LPEs, PEG \citep{wang2022equivariant}, neural networks equivariant to rotations and reflections of LPEs, Spectral Attention Networks (SAN), set transformers applied to LPEs, \citep{kreuzer2021rethinking}, and Learnable Positional and Structural Encoder (LPSE) \citep{dwivedi2022graph}, neural networks applied to PEs. Other PEs rely on random walks and node distances such as RWSE, the diagonals of the $m$-step random walk matrix \citep{dwivedi2022graph}, pair-wise shortest path node distances \citep{ying2021transformers}, and shortest paths between nodes and anchor nodes \citep{you2019position}. \red{We provide additional information pertaining to graph PEs (e.g., X, Y and Z) in Appendix \ref{sec:learn_pes}.} 

Researchers have proposed many PEs that aim to improve LPEs with the addition of learnable components (see Appendix \ref{sec:learn_pes} for more details). However, all of these approaches focus on the first $k$ eigevenvectors since they assume homophilous graph structure. In contrast, we do not make such assumption and thus propose learning which eigenvectors are most relevant. While existing learnable PEs can be extended to the full spectrum of the Laplacian, we find that they do not capture relevant structure when provided the full spectrum since they are designed for small graphs containing 10-100 nodes and learning relevant eigenvectors is more challenging in large graphs of 1,000-50,000 nodes due to the high dimension of the eigenspace and increased noise. LLPE addresses this issue by leveraging learnable Chebyshev polynomials whose weights do not scale in $n$, resulting in better generalization.

Spectral GNNs \citep{defferrard2016convolutional, he2021bernnet, chien2021adaptive, wang2022powerful} also leverage polynomial processing of eigenvalues similar to LLPE. More specifically, given the eigendecomposition of the Laplacian, $\mbf{L} = \mbf{U}^\top \mbf{\Lambda} \mbf{U}$ and graph signal $\mbf{x}$, the spectral GNN operation is $ \mbf{U}^\top g(\mbf{\Lambda}) \mbf{U} \mbf{x}$, where $g(\mbf{\Lambda})$ are the processed eigenvalues that filter $\mbf{x}$ in the spectral domain enabled by the graph Fourier transform and its inverse, $\mbf{U}\mbf{x}$ and $\mbf{U}^\top\hat{\mbf{x}}$, respectively. In contrast, LLPE enhances GNN performance by directly leveraging Laplacian eigenvectors and eigenvalues as learnable positional encodings. It learns linear combinations of eigenvectors by defining $\mbf{UW}$, where $\mbf{W}$ is a matrix of learnable weights obtained by transformations on the eigenvalues. LLPE is then added to GNNs like message-passing neural networks or graph transformers as additional node positional information. Similar to how our work can improve heterophilous GNNs, our work can thus improve spectral GNNs by augmenting them with LLPE.

% \reminder{you could have a paragraph on GNNs+heterophily to discuss related work that has used alternative approaches to improve performance on heterophilous graphs. This seems like an orthogonal direction -- you could combine your learnable LPEs with any GNN model, right?}

\section{CONCLUSION}

We present the first analysis of PEs on heterophilous benchmarks in node classification. Specifically, we demonstrate the limitations of popular PEs in capturing heterophily and propose a new PE, Learnable LPE. We demonstrate theoretically that LLPE captures homophily and heterophily by leveraging the full spectrum of the Laplacian. In our empirical analysis, we demonstrate that LLPE improves performance for a variety of GNNs across 12 node classification benchmarks. Our results indicate that many popular PEs do not capture heterophily, and thus going forward our work represents a significant step in developing data-driven PEs that capture complex structure on heterophilous graphs or heterophilous regions within the more prevalent class of homophilous graph datasets.

\subsubsection*{Acknowledgements}

This material is based upon work supported by the U.S. Department of Energy, Office of Science, Office of Advanced Scientific Computing Research, Department of Energy Computational Science Graduate Fellowship under Award Number DE-SC0023112. It was also partially supported by National Science Foundation under Grant No. IIS~2212143, and in part by an award from the Carl Friedrich von Siemens from the Alexander von Humboldt Foundation. We also thank the anonymous reviewers and members of the MLD3 lab for their valuable feedback.

\balance
% \bibliography{bibs/graph_heterophily, bibs/graph_pe_transformers, bibs/graph_learning, bibs/network_theory}

\section*{Checklist}

 \begin{enumerate}

 \item For all models and algorithms presented, check if you include:
 \begin{enumerate}
   \item A clear description of the mathematical setting, assumptions, algorithm, and/or model. \textbf{Yes}
   \item An analysis of the properties and complexity (time, space, sample size) of any algorithm. \textbf{Yes}
   \item (Optional) Anonymized source code, with specification of all dependencies, including external libraries. \textbf{Yes}
 \end{enumerate}

 \item For any theoretical claim, check if you include:
 \begin{enumerate}
   \item Statements of the full set of assumptions of all theoretical results. \textbf{Yes}
   \item Complete proofs of all theoretical results. \textbf{Yes}
   \item Clear explanations of any assumptions. \textbf{Yes}
 \end{enumerate}

 \item For all figures and tables that present empirical results, check if you include:
 \begin{enumerate}
   \item The code, data, and instructions needed to reproduce the main experimental results (either in the supplemental material or as a URL). \textbf{No}
   \item All the training details (e.g., data splits, hyperparameters, how they were chosen). \textbf{Yes}
         \item A clear definition of the specific measure or statistics and error bars (e.g., with respect to the random seed after running experiments multiple times). \textbf{Yes}
         \item A description of the computing infrastructure used. (e.g., type of GPUs, internal cluster, or cloud provider). \textbf{Yes}
 \end{enumerate}

 \item If you are using existing assets (e.g., code, data, models) or curating/releasing new assets, check if you include:
 \begin{enumerate}
   \item Citations of the creator If your work uses existing assets. \textbf{Yes}
   \item The license information of the assets, if applicable. \textbf{Not Applicable}
   \item New assets either in the supplemental material or as a URL, if applicable. \textbf{Not Applicable}
   \item Information about consent from data providers/curators. \textbf{Not Applicable}
   \item Discussion of sensible content if applicable, e.g., personally identifiable information or offensive content. \textbf{Not Applicable}
 \end{enumerate}

 \item If you used crowdsourcing or conducted research with human subjects, check if you include:
 \begin{enumerate}
   \item The full text of instructions given to participants and screenshots. \textbf{Not Applicable}
   \item Descriptions of potential participant risks, with links to Institutional Review Board (IRB) approvals if applicable. \textbf{Not Applicable}
   \item The estimated hourly wage paid to participants and the total amount spent on participant compensation. \textbf{Not Applicable}
 \end{enumerate}

 \end{enumerate}

\appendix

\onecolumn

% \vspace{-1em}

\aistatstitle{Supplementary Material}

% \vspace{1em}

\vspace{-5em}

% \vskip -5em

\tableofcontents

\newpage

\section{PROOFS FOR LAPLACIAN POSITIONAL ENCODINGS}

\subsection{Background, Definitions, and Lemmas on Perturbation Theory and Graph Laplacians}

Perturbation theory analyzes how a function changes when its input is subject to perturbations. For our purposes, the functions of interest are the eigenvalues and eigenvectors of the graph Laplacian. Formally, our setup will focus on the expected graph Laplacian $\mbb{E}[\mbf{L}]$ according to a graph model and the observed graph Laplacian $\mbf{L} = \mbb{E}[\mbf{L}] + \mbf{E}$ that we observe from data. Intuitively, $\mbb{E}[\mbf{L}]$ is the informative component of $\mbf{L}$ since we can determine its eigenstructure while $\mbf{E}$ is the noise or error in $\mbf{L}$. Our main goal will be to quantify the distance between $\mbf{L}$'s eigenvalues and eigenvectors in comparison to $\mbb{E}[\mbf{L}]$'s eigenvalues and eigenvectors. 

When quantifying the distance between eigenvectors corresponding to simple eigenvalues, it is straightforward to compute distances, and we can use typical metrics defined on vectors. However, when generalizing to distances between eigenspaces it becomes much more challenging. In order to demonstrate this, consider the following example from \citet{stewart1990matrix}. We define matrices $\mbf{A}$ and its perturbations $\mbf{A}_1$ and $\mbf{A}_2$ for some $\epsilon > 0$, 
\eq{\mbf{A} = \begin{pmatrix}
1 & 0 & 0 \\
0 & 1 & 0 \\
0 & 0 & 0
\end{pmatrix}, \quad
\mbf{A}_1 = \begin{pmatrix}
1 & 0 & 0 \\
0 & 1 + \epsilon & 0 \\
0 & 0 & 0
\end{pmatrix}, \quad 
\mbf{A}_2 = \begin{pmatrix}
1 & \epsilon/2 & 0 \\
\epsilon/2 & 1 & 0 \\
0 & 0 & 0
\end{pmatrix}.}

First, notice that $\mbf{A}$ has no unique eigenvectors corresponding to its nonzero eigenvalue. Any vector lying in the span of the standard basis vectors $\mbf{e}_1$ and $\mbf{e}_2$ is an eigenvector of $\mbf{A}$. Thus, depending on the choice of eigenvectors for $\mbf{A}$, the perturbation bound could be large or small. Second, notice that $\mbf{A}_1$ has the same eigenvectors of $\mbf{A}$, while $\mbf{A}_2$'s eigenvectors, $(1, 1, 0)$ and $(1, -1, 0)$, are very different in comparison to $\mbf{A}$. Thus, depending on the nature of the perturbation, different eigenstructures may arise from $\mbf{A}$. Despite the differences from the two perturbations and the choice of eigenvectors for $\mbf{A}$ however, the eigenvectors of small perturbations of $\mbf{A}$ will always span a space close to the \textit{subspace} of the original eigenvectors. Since the subspace spanned by the eigenvectors is more stable than the eigenvectors themselves, we focus on bounding the distances between the subspaces. Specifically, we introduce notions of angles and distance on the set of $l$ dimensional subspaces of $\mbb{R}^n$. 

\begin{definition}
Let $\mc{X}, \mc{Y} \in \mbb{R}^n$ be two $l$ dimensional subspaces, and let the columns of $\mbf{X}$ and $\mbf{Y}$ form orthonormal bases for $\mc{X}$ and $\mc{Y}$. The unitarily invariant metric $\rho$ is defined, $\rho(\mc{X}, \mc{Y}) = \text{inf}_{\mbf{Q}\in O(l)} \lvert\lvert \mbf{X} - \mbf{YQ} \rvert\rvert_F$, where $O(l)$ denotes the set of all $l \times l$ orthogonal matrices and $\lvert\lvert \cdot \rvert\rvert_F$ denotes the Frobenius norm.
% \eq{\rho(\mc{X}, \mc{Y}) = \text{inf}_{\mbf{Q}\in O(l)} \lvert\lvert \mbf{X} - \mbf{YQ} \rvert\rvert_F}
\label{def:1}
\end{definition}

We note two important properties of $\rho$. First, if $\mc{X} = \mc{Y}$, then there exists some unitary matrix $\mbf{Q}$ such that $\mbf{X} = \mbf{QY}$ and thus $\rho(\mc{X}, \mc{Y}) = 0$. The second is that $\rho$ is \textit{unitarily invariant}. Formally, for all $\mbf{U} \in O(l)$, $\rho(\mbf{U}\mc{X}, \mbf{U}\mc{Y}) = \rho(\mc{X}, \mc{Y})$. That is, rotations do not change the distance between $\mc{X}$ and $\mc{Y}$.

In our analysis, it is also essential to generalize the notion of angles between vectors to angles between subspaces. We use the following variational definition from \citet{rakocevic2003variational}.

\begin{definition}
Let $\mc{X}, \mc{Y} \in \mbb{R}^n$ be two $l$ dimensional subspaces, and let the columns of $\mbf{X}$ and $\mbf{Y}$ form orthonormal bases for $\mc{X}$ and $\mc{Y}$. Let $\mc{P}_{\mc{X}}$ and $\mc{P}_{\mc{Y}}$ be the orthogonal projectors onto $\mc{X}$ and $\mc{Y}$ and let $\frac{\pi}{2} \geq \sigma_0(\mc{P}_{\mc{X}} \mc{P}_{\mc{Y}}) \geq \cdots \geq \sigma_l(\mc{P}_{\mc{X}} \mc{P}_{\mc{Y}}) \geq 0$ be the singular values of $\mc{P}_{\mc{X}} \mc{P}_{\mc{Y}}$ ordered by decreasing magnitude. Then, the canonical angles between $\mc{X}$ and $\mc{Y}$ are the inverse cosine of the singular values, $\theta_k = \text{arccos}(\sigma_k(\mc{P}_{\mc{X}} \mc{P}_{\mc{Y}}))$.
% \eq{\theta_k = \text{arccos}(\sigma_k(\mc{P}_{\mc{X}} \mc{P}_{\mc{Y}}))}
\label{def:2}
\end{definition}

The following lemma elucidates the relationship between the metric $\rho$ and the canonical angles, 

\begin{lemma}[\citet{stewart1990matrix}]
Let $\mc{X}, \mc{Y} \in \mbb{R}^n$ be two $l$ dimensional subspaces, and let the columns of $\mbf{X}$ and $\mbf{Y}$ form orthonormal bases for $\mc{X}$ and $\mc{Y}$. Let $\rho$ be as defined in Definition \ref{def:1} and let $\Theta = \text{diag}(\theta_0, \ldots, \theta_l)$ where $\theta_i$ is defined in Definition \ref{def:2}. Then, $\rho(\mc{X}, \mc{Y}) \leq \sqrt{2}\nrm{\text{sin}\Theta}_F$
% \eq{\rho(\mc{X}, \mc{Y}) \leq \sqrt{2}\nrm{\text{sin}\Theta}_F}
\label{lem:rho}
\end{lemma}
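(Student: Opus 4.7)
The plan is to compute $\rho(\mc{X},\mc{Y})^2 = \inf_{\mbf{Q}\in O(l)}\nrm{\mbf{X}-\mbf{Y}\mbf{Q}}_F^2$ explicitly by first reducing the optimization over $\mbf{Q}$ to a trace maximization, and then relating that trace to the canonical angles via the singular value decomposition (SVD) of $\mbf{X}^\top\mbf{Y}$. Concretely, expanding the squared Frobenius norm using orthonormality of the columns of $\mbf{X}$ and $\mbf{Y}$ gives
\begin{equation}
\nrm{\mbf{X}-\mbf{Y}\mbf{Q}}_F^2 \;=\; 2l \;-\; 2\,\mathrm{tr}(\mbf{Q}^\top\mbf{Y}^\top\mbf{X}),
\end{equation}
so minimizing over $\mbf{Q}\in O(l)$ is equivalent to maximizing $\mathrm{tr}(\mbf{Q}^\top \mbf{Y}^\top\mbf{X})$.

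Next I would plug in the SVD $\mbf{X}^\top\mbf{Y} = \mbf{U}\Sigma\mbf{V}^\top$ and invoke the link between canonical angles and the singular values of the product of projectors: since $\mc{P}_{\mc{X}}\mc{P}_{\mc{Y}} = \mbf{X}\mbf{X}^\top \mbf{Y}\mbf{Y}^\top$ has the same nonzero singular values as $\mbf{X}^\top\mbf{Y}$, Definition \ref{def:2} identifies the diagonal of $\Sigma$ with $(\cos\theta_0,\ldots,\cos\theta_l)$. By the von Neumann trace inequality (or, equivalently, the orthogonal Procrustes argument), the maximizer is $\mbf{Q}^\star = \mbf{V}\mbf{U}^\top$, giving $\mathrm{tr}(\mbf{Q}^{\star\top}\mbf{Y}^\top\mbf{X}) = \mathrm{tr}(\Sigma) = \sum_i \cos\theta_i$. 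Combining with the expansion above yields
\begin{equation}
\rho(\mc{X},\mc{Y})^2 \;\leq\; \nrm{\mbf{X}-\mbf{Y}\mbf{Q}^\star}_F^2 \;=\; 2\sum_i\bigl(1-\cos\theta_i\bigr).
\end{equation}

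The final step is the trigonometric inequality $1 - \cos\theta \leq \sin^2\theta$ for $\theta\in[0,\pi/2]$, which follows from $1-\cos\theta = (1-\cos\theta)\cdot 1 \leq (1-\cos\theta)(1+\cos\theta) = \sin^2\theta$ since $\cos\theta \geq 0$ on that interval. Since the canonical angles are defined via $\theta_i = \arccos(\sigma_i)\in[0,\pi/2]$, this inequality applies termwise, so $\sum_i(1-\cos\theta_i) \leq \sum_i\sin^2\theta_i = \nrm{\sin\Theta}_F^2$, and taking square roots gives the claimed bound $\rho(\mc{X},\mc{Y}) \leq \sqrt{2}\nrm{\sin\Theta}_F$.

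The main obstacle, and really the only substantive step, is identifying the singular values of $\mbf{X}^\top\mbf{Y}$ with $\cos\theta_i$ from Definition \ref{def:2}, which is stated for $\mc{P}_{\mc{X}}\mc{P}_{\mc{Y}}$ rather than $\mbf{X}^\top\mbf{Y}$; I would justify this by observing that $\mc{P}_{\mc{X}}\mc{P}_{\mc{Y}}$ and $\mbf{X}^\top\mbf{Y}$ share nonzero singular values (a short computation using $\mc{P}_{\mc{X}} = \mbf{X}\mbf{X}^\top$ and $\mc{P}_{\mc{Y}} = \mbf{Y}\mbf{Y}^\top$ together with the fact that multiplying by matrices with orthonormal columns preserves singular values). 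Everything else is either a direct Procrustes-style trace maximization or an elementary inequality, so the proof is essentially algebraic once the canonical-angle/SVD correspondence is in hand.
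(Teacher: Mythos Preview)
Your argument is correct and is the standard orthogonal-Procrustes derivation of this bound: expand the Frobenius norm, maximize the cross trace via the SVD of $\mbf{X}^\top\mbf{Y}$, identify the singular values with $\cos\theta_i$, and finish with $1-\cos\theta\leq\sin^2\theta$ on $[0,\pi/2]$. The one step you flag as the only substantive point---matching the nonzero singular values of $\mc{P}_{\mc{X}}\mc{P}_{\mc{Y}}=\mbf{X}\mbf{X}^\top\mbf{Y}\mbf{Y}^\top$ with those of $\mbf{X}^\top\mbf{Y}$---is handled exactly as you say, since pre- and post-multiplying by matrices with orthonormal columns preserves nonzero singular values.

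There is, however, nothing to compare against in the paper itself: the lemma is stated with attribution to \citet{stewart1990matrix} and invoked without proof in the perturbation-theory preliminaries of the appendix. The paper treats it as a black-box tool feeding into the Davis--Kahan / concentration pipeline, so your write-up simply supplies the textbook justification that the paper omits.
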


Now, we restate a variant of the Davis-Kahan theorem which bounds the canonical angles between the eigenspaces of two Hermitian matrices in terms of the distance between the two matrices \citep{yu2015useful}.

\begin{theorem}[\citet{yu2015useful}, $\text{sin}\Theta$ theorem]
Let $\mbf{X}$ and $\tilde{\mbf{X}} \in \mbb{R}^{n \times n}$ be Hermitian matrices with eigenvalues $\lambda_1, \ldots, \lambda_n$ and $\tilde \lambda_1, \ldots, \tilde \lambda_n$ respectively. Fix $1 \leq r \leq s \leq p$, let $d = s - r + 1$ and $\mbf{V} = (\mbf{v}_r, \mbf{v}_{r+1}, \ldots, \mbf{v}_s) \in \mbb{R}^{n \times d}$ and let $\tilde{\mbf{V}} = (\tilde{\mbf{v}}_r, \tilde{\mbf{v}}_{r+1}, \ldots, \tilde{\mbf{v}}_s) \in \mbb{R}^{n \times d}$ have orthonormal columns satisfying $\mbf{X}\mbf{v}_j = \lambda_j\mbf{v}_j$ and $\tilde{\mbf{X}} \tilde{\mbf{v}}_j = \tilde \lambda_j \tilde{\mbf{v}}_j$ for $j = r, r+1, \ldots, s$. Let $\Theta$ be the diagonal matrix of canonical angles between $\mbf{V}$ and $\tilde{\mbf{V}}$. Assume $d \ll n$, then, 
\eq{\nrm{\text{sin}\Theta}_F \leq \frac{2d^{1/2}\nrm{\mbf{X} - \tilde{ \mbf{X}}}}{\text{min}\{\lambda_{r-1} - \lambda_r, \lambda_s - \lambda_{s-1}\}}}
\label{thm:sin}
\end{theorem}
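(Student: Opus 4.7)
The plan is to prove this via the classical Sylvester-equation argument underlying the Davis--Kahan $\sin\Theta$ theorem, adapted to the Yu--Wang--Samworth variant in which the spectral gap is hypothesized on $\mbf{X}$ alone rather than on both matrices. First I would reduce the left-hand side to a pure matrix quantity. Let $\mbf{V}_\perp \in \mbb{R}^{n \times (n-d)}$ be an orthonormal basis for the orthogonal complement of the column span of $\mbf{V}$. A CS-decomposition argument shows that the singular values of $\mbf{V}^\top \tilde{\mbf{V}}$ are $\cos\theta_1,\ldots,\cos\theta_d$ while those of $\mbf{V}_\perp^\top \tilde{\mbf{V}}$ are $\sin\theta_1,\ldots,\sin\theta_d$, so Definition \ref{def:2} gives $\nrm{\text{sin}\Theta}_F = \nrm{\mbf{V}_\perp^\top \tilde{\mbf{V}}}_F$.

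Next I would derive a Sylvester equation. Writing the spectral decomposition $\mbf{X} = \mbf{V}\Lambda\mbf{V}^\top + \mbf{V}_\perp \Lambda_\perp \mbf{V}_\perp^\top$, where $\Lambda_\perp$ is the diagonal matrix of eigenvalues of $\mbf{X}$ outside the index set $\{r,\ldots,s\}$, one has $\mbf{V}_\perp^\top \mbf{X} = \Lambda_\perp \mbf{V}_\perp^\top$. Using $\tilde{\mbf{X}}\tilde{\mbf{V}} = \tilde{\mbf{V}}\tilde\Lambda$ with $\tilde\Lambda = \text{diag}(\tilde\lambda_r,\ldots,\tilde\lambda_s)$ and computing $\mbf{V}_\perp^\top (\mbf{X} - \tilde{\mbf{X}})\tilde{\mbf{V}}$ in two ways yields
\eq{\Lambda_\perp \mbf{S} - \mbf{S}\tilde\Lambda \;=\; \mbf{V}_\perp^\top (\mbf{X} - \tilde{\mbf{X}})\tilde{\mbf{V}}, \quad \mbf{S} := \mbf{V}_\perp^\top \tilde{\mbf{V}}.}
Because $\Lambda_\perp$ and $\tilde\Lambda$ are diagonal, the Sylvester operator acts entrywise by division by $(\Lambda_\perp)_{ii} - \tilde\Lambda_{jj}$, so $\nrm{\mbf{S}}_F \leq \nrm{\mbf{V}_\perp^\top(\mbf{X}-\tilde{\mbf{X}})\tilde{\mbf{V}}}_F / \text{sep}(\Lambda_\perp,\tilde\Lambda)$, where $\text{sep}$ is the minimum absolute distance between the two spectra.

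The main obstacle is that the claimed denominator $\delta := \text{min}\{\lambda_{r-1}-\lambda_r,\lambda_s-\lambda_{s-1}\}$ involves only eigenvalues of $\mbf{X}$, whereas the Sylvester bound naturally involves $\text{sep}(\Lambda_\perp,\tilde\Lambda)$. I would handle this by a case split on the size of $\nrm{\mbf{X}-\tilde{\mbf{X}}}$ relative to $\delta/2$. If $\nrm{\mbf{X}-\tilde{\mbf{X}}} \geq \delta/2$, then the right-hand side of the target inequality already exceeds $\sqrt{d}$, and the bound holds trivially since $\nrm{\text{sin}\Theta}_F \leq \sqrt{d}$. Otherwise, Weyl's inequality gives $|\tilde\lambda_j - \lambda_j| \leq \nrm{\mbf{X}-\tilde{\mbf{X}}} < \delta/2$ uniformly in $j$, which transfers the gap on $\mbf{X}$ to a gap $\text{sep}(\Lambda_\perp,\tilde\Lambda) \geq \delta/2$ between the shifted eigenvalues.

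Finally, I would assemble the bound. Using the sub-multiplicativity $\nrm{\mbf{V}_\perp^\top(\mbf{X}-\tilde{\mbf{X}})\tilde{\mbf{V}}}_F \leq \nrm{\mbf{V}_\perp^\top}\cdot\nrm{\mbf{X}-\tilde{\mbf{X}}}\cdot\nrm{\tilde{\mbf{V}}}_F$ together with $\nrm{\mbf{V}_\perp^\top} \leq 1$ and $\nrm{\tilde{\mbf{V}}}_F = \sqrt{d}$ (as $\tilde{\mbf{V}}$ has $d$ orthonormal columns) yields
\eq{\nrm{\text{sin}\Theta}_F \;=\; \nrm{\mbf{S}}_F \;\leq\; \frac{2}{\delta}\cdot \sqrt{d}\,\nrm{\mbf{X}-\tilde{\mbf{X}}},}
which matches the claim. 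The delicate step is the case split that converts the gap hypothesis on $\mbf{X}$ alone into a usable separation for the Sylvester inversion; everything else is a routine spectral computation.
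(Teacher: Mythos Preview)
The paper does not prove this theorem at all. It is restated from \citet{yu2015useful} as a background tool in the appendix and then invoked as a black box inside the proofs of Proposition~\ref{prp:2} and Theorem~\ref{thm:2}; no argument for it appears anywhere in the manuscript. So there is nothing to compare against on the paper's side.

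Your proposal is a correct reconstruction of the Yu--Wang--Samworth argument itself: the identification $\nrm{\sin\Theta}_F = \nrm{\mbf{V}_\perp^\top \tilde{\mbf{V}}}_F$, the Sylvester equation $\Lambda_\perp \mbf{S} - \mbf{S}\tilde\Lambda = \mbf{V}_\perp^\top(\mbf{X}-\tilde{\mbf{X}})\tilde{\mbf{V}}$, and the case split using Weyl's inequality to transfer the gap from $\mbf{X}$ to the pair $(\Lambda_\perp,\tilde\Lambda)$ are exactly the ingredients of the original proof in that reference. The factor of $2$ and the $\sqrt{d}$ arise precisely where you place them. In short, you have supplied the proof that the paper simply cites.
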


% \subsection{Concentration of the Laplacian in the Dense Regime}

A key result that we leverage in our analysis is the concentration of the Laplacian \citep{oliveira2009concentration}. Specifically, let $\mbb{E}[\mbf{L}]$ and $\mbf{L}$ be the expected and observed graph Laplacians of graph $G$, respectively. We say the Laplacian concentrates about its expectation if $\mbb{E}[\mbf{L}]$ is close to $\mbf{L}$ as measured by the operator norm $\nrm{\mbb{E}[\mbf{L}] - \mbf{L}}$. Since $\nrm{\cdot}$ is the operator norm, concentration results of this nature imply a tight control of the eigenvectors of the Laplacian up to perturbations as given by the Davis-Kahan theorem. We will specifically be interested in stochastic block model graphs as defined by $G(n, p, q, k)$, where $n$ is the number of nodes, $k$ is the number of communities, $p$ is the probability of two nodes from the same community forming an edge, and $q$ is the probability of two nodes from different communities forming an edge. For simplicity, we assume the dense regime where we let the minimum node degree be $\text{min}(d_i) = C\text{ln}(n)$ for some constant $C$. This assumption is necessary for the concentration of the \textit{unregularized} Laplacian since $\text{min}(d_i) < C\text{ln}(n)$ implies $\nrm{\mbb{E}[\mbf{L}] - \mbf{L}} \geq 1$. Note that concentration can be extended to sparse Laplacians with proper regularization as shown in \citep{le2015sparse, le2017concentration}. Below, we formally restate the concentration of the graph Laplacian in the dense regime, 

\begin{theorem}[\citet{oliveira2009concentration}, Concentration of the Laplacian]
Let $G(n, p, q, k)$ be a stochastic block model, $\mbb{E}[\mbf{L}]$ the expected Laplacian of $G$, and $\mbf{L}$ the observed Laplacian of $G$. Let $C \geq 0$ be a constant independent of $n, p, q$ and $k$. If $\text{min}(d_i) \geq C \text{ln}(n)$, then for $n^{-c} \leq \delta \leq 1/2$, 
\eq{\mbb{P}\prn{\nrm{\mbb{E}[\mbf{L}] - \mbf{L}} \leq 14 \sqrt{\frac{\text{ln}(4n/\delta)}{\text{min}(d_i)}}} \geq 1 - \delta}
\label{thm:con}
\end{theorem}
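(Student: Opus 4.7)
The plan is to express $\mbf{L} - \mbb{E}[\mbf{L}]$ as a sum of independent centered random matrices indexed by the potential edges, and then apply a matrix Bernstein-type inequality to bound its operator norm. Writing the symmetric normalized Laplacian as $\mbf{L} = \mbf{I} - \mbf{D}^{-1/2}\mbf{A}\mbf{D}^{-1/2}$, the identity term cancels in the difference, but the normalization $\mbf{D}^{-1/2}$ is itself a function of the random edges. The main step will therefore be to decouple the degrees from the adjacency entries, leveraging the dense-regime assumption $\min_i d_i \geq C\ln(n)$ to guarantee that the degrees are tightly concentrated around their expectations.

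First, I would control the degrees. Each $d_i$ is a sum of independent Bernoullis, so a Chernoff bound combined with a union bound over $n$ vertices shows that, with probability at least $1 - \delta/2$, every $d_i$ lies within a constant multiplicative factor of $\mbb{E}[d_i]$, provided $\min_i \mbb{E}[d_i] \gtrsim \ln(n/\delta)$. On this event, $\nrm{\mbf{D}^{-1/2}} \leq (\min_i d_i)^{-1/2}$ and $\mbf{D}^{-1/2}$ is well approximated in operator norm by $\mbb{E}[\mbf{D}]^{-1/2}$, which lets us treat the normalization as essentially deterministic going forward.

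Next, I would decompose
\eq{\mbf{A} - \mbb{E}[\mbf{A}] = \sum_{i<j}(A_{ij} - p_{ij})\prn{\mbf{e}_i\mbf{e}_j^\top + \mbf{e}_j\mbf{e}_i^\top}}
as a sum of independent, centered, symmetric rank-two random matrices, each with operator norm at most $1$ and variance parameter controlled by $p_{ij}$. Applying matrix Bernstein yields $\nrm{\mbf{A} - \mbb{E}[\mbf{A}]} \lesssim \sqrt{\max_i \mbb{E}[d_i]\cdot \ln(n/\delta)}$ with probability at least $1 - \delta/2$. Sandwiching by $\mbf{D}^{-1/2}$ and accounting for the small additional error from replacing the random $\mbf{D}$ with $\mbb{E}[\mbf{D}]$ gives $\nrm{\mbf{L} - \mbb{E}[\mbf{L}]} \lesssim \sqrt{\ln(n/\delta)/\min_i d_i}$, which matches the stated form once constants are tracked through. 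A final union bound over the two good events delivers the overall probability $1 - \delta$.

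The main obstacle is the coupling between $\mbf{D}$ and $\mbf{A}$ inside the normalized Laplacian: one cannot simply factor out a deterministic normalization, so any error in approximating $\mbf{D}^{-1/2}$ propagates multiplicatively into the Bernstein bound. Keeping the variance and magnitude parameters in matrix Bernstein tight enough to recover the explicit constant $14$ and the precise logarithmic factor $\ln(4n/\delta)$ in the stated inequality requires careful bookkeeping. Without the lower bound $\min_i d_i \geq C\ln(n)$, the fluctuation of $\mbf{D}^{-1/2}$ would dominate and one would need to regularize the Laplacian, as in the Le--Levina--Vershynin sparse-regime analysis cited later in the paper.
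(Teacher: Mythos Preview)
The paper does not prove this statement at all: it is quoted verbatim as a background result from \citet{oliveira2009concentration} and used as a black box in the subsequent perturbation arguments. There is therefore no ``paper's own proof'' to compare against.

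That said, your sketch is a faithful outline of how Oliveira's original argument proceeds: decouple the random normalization $\mbf{D}^{-1/2}$ from $\mbf{A}$ using degree concentration in the dense regime, then control $\mbf{A}-\mbb{E}[\mbf{A}]$ via a matrix Bernstein/Tropp-style inequality on the independent edge contributions. The one place your plan is slightly loose is the passage from $\nrm{\mbf{A}-\mbb{E}[\mbf{A}]}\lesssim\sqrt{\max_i \mbb{E}[d_i]\ln(n/\delta)}$ to the final bound in terms of $\min_i d_i$: for a general SBM the ratio $\max_i \mbb{E}[d_i]/\min_i \mbb{E}[d_i]$ need not be $O(1)$, so the sandwiching step $\mbf{D}^{-1/2}(\mbf{A}-\mbb{E}[\mbf{A}])\mbf{D}^{-1/2}$ does not automatically cancel the $\max$ against the $\min$. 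Oliveira handles this by applying matrix Bernstein directly to the \emph{normalized} edge matrices $\mbb{E}[\mbf{D}]^{-1/2}(\mbf{e}_i\mbf{e}_j^\top+\mbf{e}_j\mbf{e}_i^\top)\mbb{E}[\mbf{D}]^{-1/2}$, whose variance proxy is bounded by $1/\min_i \mbb{E}[d_i]$ rather than by $\max_i \mbb{E}[d_i]$. With that adjustment your route recovers the stated bound, though pinning down the explicit constant $14$ requires tracking the Bernstein constants and the degree-fluctuation error carefully.
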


\subsection{Laplacian Positional Encodings \label{pf:lap}}

With the tools introduced on perturbation theory and the concentration results for the Laplacian in the dense regime, we can prove our results for Laplacian positional encodings. We begin proving our results for heterophilous block models and show that our results and derivations can be extended to homophilous block models in a straightforward fashion. We begin with results on binary SBMs and later generalize our results to multiclass SBMs as in the main paper. 

\begin{proposition}
Let $\mbf{A}$ and $\mbf{L}$ be the Adjacency and Laplacian matrix drawn from the stochastic block model $G(n, 2, p, q)$. Assume $q \gg p$ and $\text{min}(d_i) \geq C \text{ln}(n)$ where $C$ is an appropriately large constant. Then, with high probability, the signs of the entries of the \textbf{last eigenvector} of $\mbf{L}$ correctly recover the true communities up $\mc{O}(1)$ of misclassified nodes. Moreover, the first nontrivial eigenvector \textbf{does not} recover the true communities.
\label{prp:2}
\end{proposition}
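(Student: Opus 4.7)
The plan is to mirror the perturbation-theoretic route behind Theorem~\ref{thm:1}, but with a heterophilous decomposition of $\mbb{E}[\mbf{L}]$ in place of the homophilous one. First I compute $\mbb{E}[\mbf{L}]$ explicitly, then verify that the community eigenvector has moved to the \emph{top} of the spectrum. I then apply the Laplacian concentration bound (Theorem~\ref{thm:con}) together with the Davis--Kahan $\sin\Theta$ theorem (Theorem~\ref{thm:sin}) and finally convert $\ell^2$ closeness of eigenvectors into a sign-recovery statement.

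For the first step, write $\mbb{E}[\mbf{A}] = \tfrac{p+q}{2}\mbf{1}\mbf{1}^\top + \tfrac{p-q}{2}\mbf{z}\mbf{z}^\top$ with community indicator $\mbf{z}\in\{\pm 1\}^n$. The normalized $\mbb{E}[\mbf{L}]$ has only three distinct eigenvalues: $0$ on $\mbf{1}/\sqrt{n}$; $\tfrac{2q}{p+q}$ on $\mbf{z}/\sqrt{n}$; and a bulk value of $1$ with multiplicity $n-2$ on the orthogonal complement. Under $q \gg p$ the community eigenvalue is close to $2$ and is therefore the \emph{largest} eigenvalue of $\mbb{E}[\mbf{L}]$, isolated from the bulk by a gap of $\tfrac{q-p}{p+q} = \Theta(1)$; the first nontrivial eigenvalue is the bulk value $1$, whose eigenspace is orthogonal to both $\mbf{1}$ and $\mbf{z}$.

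For the positive claim, Theorem~\ref{thm:con} gives $\nrm{\mbf{L} - \mbb{E}[\mbf{L}]} = O\bigl(\sqrt{\ln n/\min_i d_i}\bigr)$ with high probability, and Davis--Kahan applied to the isolated one-dimensional top eigenspace (combined with Lemma~\ref{lem:rho}) yields $\nrm{\mbf{v}_n - \mbf{z}/\sqrt{n}}_2^2 = O\bigl(\ln n/\min_i d_i\bigr)$ after a global sign flip, where $\mbf{v}_n$ is the last eigenvector of $\mbf{L}$. Since each sign disagreement between $\mbf{v}_n$ and $\mbf{z}$ contributes at least $1/n$ to this squared distance, the number of misclassified nodes is at most $O(n\ln n/\min_i d_i)$, which becomes the $\mc{O}(1)$ bound once the constant $C$ in the hypothesis on $\min_i d_i$ is chosen sufficiently large, matching the $\mc{O}(k^{3/2})$ rate of Theorem~\ref{thm:1} at $k=2$.

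For the negative claim, I apply Davis--Kahan in the opposite direction, this time to the first nontrivial eigenvector $\mbf{v}_2$ of $\mbf{L}$ and the bulk eigenspace of $\mbb{E}[\mbf{L}]$. By Weyl's inequality the eigenvalue of $\mbf{v}_2$ lies within $O(\nrm{\mbf{L}-\mbb{E}[\mbf{L}]})$ of the bulk value $1$ and is separated from the community eigenvalue (near $2$) by a $\Theta(1)$ gap, so Davis--Kahan forces $\mbf{v}_2$ to be within $O\bigl(\sqrt{\ln n/\min_i d_i}\bigr)$ of the bulk eigenspace. Because the bulk is orthogonal to $\mbf{z}/\sqrt{n}$, this immediately gives $|\langle \mbf{v}_2,\mbf{z}/\sqrt{n}\rangle| = O\bigl(\sqrt{\ln n/\min_i d_i}\bigr) = o(1)$, and the signs of $\mbf{v}_2$ consequently correlate with $\mbf{z}$ only at chance level, producing $\Omega(n)$ sign errors. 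The hard part is exactly this negative step: because the bulk has multiplicity $n-2$, $\mbf{v}_2$ itself cannot be localized inside this degenerate eigenspace and any attempt to identify it with a specific vector will fail; the key insight is that sign-based recovery from a single eigenvector depends only on its overlap with the isolated community direction, which is precisely what a one-sided Davis--Kahan bound controls. I expect navigating this degeneracy, rather than the operator-norm bookkeeping, to be the main obstacle.
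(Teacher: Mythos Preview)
Your positive half mirrors the paper exactly: compute the spectrum of $\mbb{E}[\mbf{L}]$ (your rank-two decomposition $\mbb{E}[\mbf{A}]=\tfrac{p+q}{2}\mbf{1}\mbf{1}^\top+\tfrac{p-q}{2}\mbf{z}\mbf{z}^\top$ is just a cleaner route to the same three eigenvalues the paper extracts via a block-matrix characteristic polynomial), locate the community eigenvector at $\tfrac{2q}{p+q}>1$, then chain Theorem~\ref{thm:con}, Theorem~\ref{thm:sin}, and Lemma~\ref{lem:rho} to get the $\mc{O}(1)$ misclassification bound.

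For the negative half the two arguments diverge. The paper does \emph{not} apply Davis--Kahan to $\mbf{v}_2$; it simply exhibits one particular basis of the $(n-2)$-dimensional bulk eigenspace of $\mbb{E}[\mbf{L}]$ consisting of two-sparse vectors like $(-1,1,0,\dots,0)^\top$ and observes that any single such vector carries label information for only two nodes. That argument lives entirely at the level of $\mbb{E}[\mbf{L}]$ and a non-canonical basis choice---it never touches the observed $\mbf{L}$. Your route via the overlap $\abs{\langle\mbf{v}_2,\mbf{z}/\sqrt{n}\rangle}=o(1)$ is more faithful to the statement (which is about $\mbf{L}$), and you correctly identify the bulk degeneracy as the real obstacle. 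But your final inference has a gap: small overlap with $\mbf{z}/\sqrt{n}$ does \emph{not} by itself force $\Omega(n)$ sign disagreements---a unit vector can be nearly orthogonal to $\mbf{z}/\sqrt{n}$ while satisfying $\mathrm{sign}(\cdot)=\mbf{z}$ everywhere (put almost all mass on one coordinate and sprinkle tiny correctly-signed residuals on the rest). Turning the overlap bound into a sign-error bound requires an additional delocalization input for $\mbf{v}_2$, e.g.\ $\nrm{\mbf{v}_2}_\infty=O(n^{-1/2}\,\mathrm{polylog}\,n)$, which is available in the dense regime but is a separate result you would need to invoke. If you are content with the paper's weaker reading of ``does not recover''---a statement about the bulk of $\mbb{E}[\mbf{L}]$ rather than a quantitative sign-error bound for $\mbf{L}$---then your overlap argument already subsumes it.
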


\begin{proof}
We first decompose $\mbf{L}$ into its signal and error components, $\mbf{L} = \mbb{E}[\mbf{L}] + \mbf{E}$, where $\mbb{E}[\mbf{L}]$ is the expected graph Laplacian and $\mbf{E}$ is the remaining residual term. Since $\mbf{L}$ is drawn from a stochastic block model with parameters $n, 2, p, q$, we can exactly solve for the characteristic polynomial, eigenvalues, and eigenvectors of its expectation. We express $\mbb{E}[\mbf{L}]$ as, 
\eq{\mbb{E}[\mbf{L}] = I - \mbb{E}[\mbf{D}]^{1/2} \mbb{E}[\mbf{A}] \mbb{E}[\mbf{D}]^{1/2} \label{eq:7}}

where $\mbb{E}[\mbf{A}]$ is the expected Adjacency and $\mbb{E}[\mbf{D}]$ is the expected diagonal degree matrix. If we order all nodes according to the community they belong to, the components in Eq. \ref{eq:7} can be written as, 

\eq{\mbb{E}[\mbf{A}] = \begin{pmatrix}
\mbf{P_A} & \mbf{Q_A} \\
\mbf{Q_A} & \mbf{P_A} \\
\end{pmatrix}, \quad \mbb{E}[\mbf{D}] = \begin{pmatrix}
\frac{pn}{2} + \frac{qn}{2} & \cdots & 0\\
\vdots & \ddots & \vdots \\
0 & \cdots & \frac{pn}{2} + \frac{qn}{2} \\
\end{pmatrix}}

where $\mbb{E}[\mbf{A}]$ is written in block form with $\mbf{P_A}, \mbf{Q_A} \in \mbb{R}^{\frac{n}{2} \times \frac{n}{2}}$ where $\mbf{P_A} = \mbf{1}_{\frac{n}{2}}\mbf{1}_{\frac{n}{2}}^\top \cdot p$ and $\mbf{Q_A}  = \mbf{1}_{\frac{n}{2}}\mbf{1}_{\frac{n}{2}}^\top \cdot q$ where $\mbf{1}_{\frac{n}{2}} \in \mbb{R}^{\frac{n}{2}}$ is a $\frac{n}{2}$-dimensional vector of all ones and $\mbb{E}[\mbf{D}]$ is written as a diagonal matrix with $\mbb{E}[\mbf{D}] = I \cdot (\frac{pn}{2} + \frac{qn}{2})$. Now, $\mbb{E}[\mbf{L}]$ can be written, 
\eq{\mbb{E}[\mbf{L}] = \begin{pmatrix}
\mbf{P_L} & \mbf{Q_L} \\
\mbf{Q_L} & \mbf{P_L} \\
\end{pmatrix},}

where again we write $\mbb{E}[\mbf{L}]$ in block form where $\mbf{P_L}, \mbf{Q_L} \in \mbb{R}^{\frac{n}{2} \times \frac{n}{2}}$ and are defined, 
\eq{\mbf{P_L} =
\begin{pmatrix}
1 - \frac{p}{\frac{n}{2}(p+q)} & -\frac{p}{\frac{n}{2}(p+q)} & \cdots & -\frac{p}{\frac{n}{2}(p+q)}\\
-\frac{p}{\frac{n}{2}(p+q)} & 1 - \frac{p}{\frac{n}{2}(p+q)} & \cdots & -\frac{p}{\frac{n}{2}(p+q)}\\
\vdots & \vdots & \ddots & \vdots \\
-\frac{p}{\frac{n}{2}(p+q)} & -\frac{p}{\frac{n}{2}(p+q)} & \cdots & 1 - \frac{p}{\frac{n}{2}(p+q)}\\
\end{pmatrix}, \quad \mbf{Q_L} = 
\begin{pmatrix}
-\frac{q}{\frac{n}{2}(p+q)} & \cdots & -\frac{q}{\frac{n}{2}(p+q)}\\
\vdots & \ddots & \vdots \\
-\frac{q}{\frac{n}{2}(p+q)} & \cdots & -\frac{q}{\frac{n}{2}(p+q)} \\
\end{pmatrix}.}

Having derived the exact form of $\mbb{E}[\mbf{L}]$, we can derive its characteristic polynomial $p_{\mbb{E}[\mbf{L}]}(\lambda)$, 
\eq{p_{\mbb{E}[\mbf{L}]}(\lambda) = \frac{\lambda (\lambda - 1)^{n-2} (p\lambda + q(\lambda - 2))}{p+q}. \label{eq:12}}

Equation \ref{eq:12} tells us that the eigenvalues of $\mbb{E}[\mbf{L}]$ are $\lambda = 0, 1, \frac{2q}{p + q}$ with multiplicities $1, n - 2, 1$. By assumption, $q \gg p$, implying $\frac{2q}{p + q} > 1$, where the last eigenvector corresponds to the eigenvalue $\frac{2q}{p + q}$. The eigenvector of $\frac{2q}{p + q} $ can then be written as the vector $\mbf{v} \in \mbb{R}^{n \times 1}$, 
\eq{\mbf{v}^\top = \begin{pmatrix}
-1 & \cdots & -1 & +1 & \cdots & +1 \\
\end{pmatrix}}

where the first $\frac{n}{2}$ entries of $\mbf{v}$ are $-1$ and the last $\frac{n}{2}$ entries of $\mbf{v}$ are $+1$. Let $\mbf{\tilde v}$ denote the last eigenvector of $\mbf{L}$. Combining Theorem \ref{thm:sin}, Theorem \ref{thm:con}, and Lemma \ref{lem:rho} we have with probability $1 - \delta$, 
\al{\text{inf}_{q\in \{-1, +1\}} \lvert\lvert \mbf{v} - q \mbf{\tilde v} \rvert\rvert_2 &\leq \frac{2\nrm{\mbb{E}[\mbf{L}] - \mbf{L}}}{\text{min}\{\lambda_{r-1} - \lambda_r, \lambda_s - \lambda_{s-1}\} } \\
&\leq \frac{28 \sqrt{\frac{\text{ln}(4n/\delta)}{d_\text{min}}}}{\frac{2q}{p + q} -1} \\
&= \frac{28 \sqrt{\frac{\text{ln}(4n/\delta)}{d_\text{min}}} (p + q)}{q-p} \\
&= \mc{O}(1)
}

Thus, with high probability the last eigenvector of the observed Laplacian $\mbf{L}$ recovers the communities up to the sign of its entries with $\mc{O}(1)$ misclassified nodes. In order to prove our second claim, we note eigenvalue $\lambda=1$ of $\mbb{E}[\mbf{L}]$ has a multiplicty of $n - 2$ and its eigenvectors have the form, 
\eq{\mbf{\tilde v}^\top \in \{ \begin{pmatrix} 
-1 & +1 & 0 & \cdots & 0 & 0 & \cdots & 0 \\
\end{pmatrix}, \text{ } \cdots, \begin{pmatrix} 
0 & \cdots & 0 & -1 & 0 & \cdots & 0 & +1 \\
\end{pmatrix}\},}

where $\mbf{v}$ is an all zeros vector other than one index set to $-1$ and another other set to 1. Assuming we have ordered the nodes according to their community, the index with entry $-1$ is the index of the first node in either the first or second community while the index with entry 1 is any other index of a node in that community. Since $\mbf{\tilde v}$ is a vector of all zeros other than two other entries, any $k - 1$ of them cannot recover the true communities for all $n$ nodes. 
\end{proof}

\begin{theorem}
Let $\mbf{A}$ and $\mbf{L}$ be the Adjacency and Laplacian matrix drawn from the stochastic block model $G(n, k, p, q)$. Assume $q \gg p$ and $d_\text{min} \geq C \text{ln}(n)$ where $C$ is an appropriately large constant. Then, with high probability, the nonzero entries along the rows of the \textbf{last $k-1$ eigenvectors} of $\mbf{L}$ correctly recover the true communities up to an orthogonal transformation with at most $\mc{O}(k^\frac{3}{2})$ misclassified nodes. Moreover, the first nontrivial $k$ eigenvectors \textbf{do not} recover the true communities.
\end{theorem}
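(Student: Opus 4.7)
The plan is to follow the same two-step template used for the binary case in Proposition \ref{prp:2}, but replace the explicit $2\times 2$ block diagonalization with a Kronecker-product decomposition that handles $k$ communities uniformly. First I would write $\mathbb{E}[\mathbf{L}] = I - \frac{1}{d}\mathbb{E}[\mathbf{A}]$ where $d=\frac{n}{k}(p+(k-1)q)$ is the common expected degree, and observe that, after sorting nodes by community, $\mathbb{E}[\mathbf{A}] = (p-q)(I_k \otimes J_{n/k}) + q(J_k \otimes J_{n/k})$, where $J_m$ denotes the $m\times m$ all-ones matrix. Diagonalizing via the tensor structure of the eigenvectors of $I_k$, $J_k$, and $J_{n/k}$, I would conclude that $\mathbb{E}[\mathbf{L}]$ has only three distinct eigenvalues: $0$ with multiplicity $1$, the value $1$ with multiplicity $n-k$, and $\mu:=1+\frac{q-p}{p+(k-1)q}$ with multiplicity $k-1$. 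Under $q\gg p$, we have $\mu>1$, so these $k-1$ community-indicator directions sit at the \emph{top} of the spectrum, with eigenvectors of the form $\mathbf{1}_{n/k}\otimes v$ for $v\perp \mathbf{1}_k$ — exactly the rows whose entries encode cluster membership up to an orthogonal rotation in $\mathbb{R}^{k-1}$.

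Next I would apply the Davis--Kahan $\sin\Theta$ theorem (Theorem \ref{thm:sin}) to the top-$(k-1)$ eigenspace, using the spectral gap $\mu - 1 = \frac{q-p}{p+(k-1)q} = \Omega(1)$ separating this block from the eigenvalue $1$. Combined with the Laplacian concentration bound of Theorem \ref{thm:con}, this yields with probability $1-\delta$
\begin{equation}
\rho\bigl(\mathcal{V}_{\text{last}},\tilde{\mathcal{V}}_{\text{last}}\bigr)
\;\leq\; \sqrt{2}\,\|\sin\Theta\|_F
\;\leq\; \frac{2\sqrt{2(k-1)}\,\|\mathbb{E}[\mathbf{L}]-\mathbf{L}\|}{\mu - 1}
\;=\; \mathcal{O}(\sqrt{k}),
\end{equation}
so there is an orthogonal $\mathbf{Q}\in O(k-1)$ with $\|\tilde{\mathbf{V}}_{\text{last}} - \mathbf{V}_{\text{last}}\mathbf{Q}\|_F = \mathcal{O}(\sqrt{k})$.

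The step I expect to be the main obstacle is converting this Frobenius-norm subspace distance into a per-node misclassification count of $\mathcal{O}(k^{3/2})$. The rows of $\mathbf{V}_{\text{last}}$ take one of only $k$ distinct values (one per community), and any two community centroids are separated by $\Omega(\sqrt{k/n})$ in Euclidean norm. A node is misclassified only if its perturbed row differs from the correct centroid by at least half that separation, i.e.\ by $\Omega(\sqrt{k/n})$. Squaring and summing, the number $N$ of misclassified rows satisfies $N\cdot\Omega(k/n) \le \|\tilde{\mathbf{V}}_{\text{last}}-\mathbf{V}_{\text{last}}\mathbf{Q}\|_F^2 = \mathcal{O}(k)$, giving $N = \mathcal{O}(k^{3/2})$ after accounting for the $\sqrt{k}$ dimensionality factor that comes in through the row-norm bookkeeping. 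This is the same reduction already invoked in Theorem \ref{thm:1} of \citet{abbe2018community}, which I would cite rather than re-derive.

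Finally, to prove the negative part, I would note that the eigenspace of $\mathbb{E}[\mathbf{L}]$ associated with eigenvalue $1$ is spanned by vectors of the form $u\otimes w$ with $w\perp \mathbf{1}_{n/k}$; in the canonical basis these are supported on only two coordinates within a single community and are zero on the remaining $n-2$ nodes. Hence any fixed collection of $k$ nontrivial eigenvectors at the bottom of the spectrum can only distinguish $\mathcal{O}(k)$ nodes and must misclassify a constant fraction of the remaining $n - \mathcal{O}(k)$ nodes, no matter how they are orthogonally combined. Concentration and Davis--Kahan preserve this obstruction, so the observed bottom eigenvectors inherit it, completing the claim.
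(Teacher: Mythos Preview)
Your approach mirrors the paper's: compute the eigenstructure of $\mathbb{E}[\mathbf{L}]$, apply Davis--Kahan together with the Laplacian concentration bound, then treat the $\lambda=1$ eigenspace separately for the negative claim. The Kronecker factorization $\mathbb{E}[\mathbf{A}] = \bigl[(p-q)I_k + qJ_k\bigr]\otimes J_{n/k}$ is a tidier route to the spectrum than the paper's explicit characteristic-polynomial computation, but it lands at the same three eigenvalues $0,\ 1,\ \mu=\tfrac{kq}{p+(k-1)q}$ with the same multiplicities and the same block-constant top eigenspace. (Minor bookkeeping: with your Kronecker ordering the top eigenvectors are $v\otimes\mathbf{1}_{n/k}$, not $\mathbf{1}_{n/k}\otimes v$.)

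There is, however, a concrete error in your gap estimate. You assert $\mu - 1 = \frac{q-p}{p+(k-1)q} = \Omega(1)$, but for $q\gg p$ the denominator scales like $kq$, so this gap is $\Theta(1/k)$. Feeding the correct gap into Davis--Kahan gives
\[
\|\sin\Theta\|_F \;\lesssim\; \frac{\sqrt{k}\,\|\mathbb{E}[\mathbf{L}]-\mathbf{L}\|}{\mu-1}
\;=\; \mathcal{O}\bigl(\sqrt{k}\bigr)\cdot\mathcal{O}(1)\cdot\mathcal{O}(k)
\;=\; \mathcal{O}\bigl(k^{3/2}\bigr),
\]
which is precisely the Frobenius bound the paper obtains and then reports directly as the misclassification count. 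Your intermediate $\mathcal{O}(\sqrt{k})$ is too optimistic by a factor of $k$.

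This gap error also undermines your Frobenius-to-misclassification conversion: with the corrected squared budget $\mathcal{O}(k^{3})$ and your per-row threshold $\Omega(k/n)$, the inequality $N\cdot\Omega(k/n)\le \mathcal{O}(k^{3})$ only yields $N=\mathcal{O}(nk^{2})$, which is vacuous. Even with your (incorrect) $\mathcal{O}(k)$ squared budget the bound is $N=\mathcal{O}(n)$; the extra ``$\sqrt{k}$ dimensionality factor'' you invoke does not rescue this. The paper does not carry out this row-wise step at all---it simply identifies the $\mathcal{O}(k^{3/2})$ Frobenius distance with the misclassification count, in the spirit of the homophilous result it cites from \citet{abbe2018community}---so if you want to make the conversion explicit you will need a sharper argument than a plain Markov-on-rows inequality.
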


\begin{proof}
We first decompose $\mbf{L}$ into its signal and error components, $\mbf{L} = \mbb{E}[\mbf{L}] + E$, where $\mbb{E}[\mbf{L}]$ is the expected graph Laplacian and $E$ is the remaining residual term. Since $\mbf{L}$ is drawn from a stochastic block model with parameters $n, k, p, q$, we can exactly solve for the characteristic polynomial, eigenvalues, and eigenvectors of its expectation. We express $\mbb{E}[\mbf{L}]$ as, 

\eq{\mbb{E}[\mbf{L}] = I - \mbb{E}[\mbf{D}]^{1/2} \mbb{E}[\mbf{A}] \mbb{E}[\mbf{D}]^{1/2} \label{eq:7}}

where $\mbb{E}[\mbf{A}]$ is the expected Adjacency and $\mbb{E}[\mbf{D}]$ is the expected diagonal degree matrix. If we order all nodes according to the community they belong to, the components in Eq. \ref{eq:7} can be written as, 

\eq{\mbb{E}[\mbf{A}] = \begin{pmatrix}
\mbf{P_A} & \mbf{Q_A} & \cdots & \mbf{Q_A}\\
\mbf{Q_A} & \mbf{P_A} & \cdots & \mbf{Q_A} \\
\vdots & \vdots & \ddots & \vdots \\
\mbf{Q_A} & \mbf{Q_A} & \cdots & \mbf{P_A} \\
\end{pmatrix}, \quad \mbb{E}[\mbf{D}] = \begin{pmatrix}
\frac{pn}{k} + q(n - \frac{n}{k}) & \cdots & 0\\
\vdots & \ddots & \vdots \\
0 & \cdots & \frac{pn}{k} + q(n - \frac{n}{k}) \\
\end{pmatrix}}

where $\mbb{E}[\mbf{A}]$ is written in block form with $\mbf{P_A}, \mbf{Q_A} \in \mbb{R}^{\frac{n}{k} \times \frac{n}{k}}$ where $\mbf{P_A} = \mbf{1}_k\mbf{1}_k^\top \cdot p$ and $\mbf{Q_A}  = \mbf{1}_k\mbf{1}_k^\top \cdot q$ where $\mbf{1}_k \in \mbb{R}^{k}$ is a $k$-dimensional vector of all ones and $\mbb{E}[\mbf{D}]$ is written as a diagonal matrix with $\mbb{E}[\mbf{D}] = I \cdot (\frac{pn}{k} + q(n - \frac{n}{k}))$. Now, $\mbb{E}[\mbf{L}]$ can be written, 
\eq{\mbb{E}[\mbf{L}] = \begin{pmatrix}
\mbf{P_L} & \mbf{Q_L} & \cdots & \mbf{Q_L}\\
\mbf{Q_L} & \mbf{P_L} & \cdots & \mbf{Q_L} \\
\vdots & \vdots & \ddots & \vdots \\
\mbf{Q_L} & \mbf{Q_L} & \cdots & \mbf{P_L} \\
\end{pmatrix},}

where again we write $\mbb{E}[\mbf{L}]$ in block form where $\mbf{P_L}, \mbf{Q_L} \in \mbb{R}^{\frac{n}{k} \times \frac{n}{k}}$ and are defined, 
\eq{\mbf{P_L} =
\begin{pmatrix}
1 - \frac{p}{\frac{np}{k} + q(n - \frac{n}{k})} & -\frac{p}{\frac{np}{k} + q(n - \frac{n}{k})} & \cdots & -\frac{p}{\frac{np}{k} + q(n - \frac{n}{k})}\\
-\frac{p}{\frac{np}{k} + q(n - \frac{n}{k})} & 1 - \frac{p}{\frac{np}{k} + q(n - \frac{n}{k})} & \cdots & -\frac{p}{\frac{np}{k} + q(n - \frac{n}{k})}\\
\vdots & \vdots & \ddots & \vdots \\
-\frac{p}{\frac{np}{k} + q(n - \frac{n}{k})} & -\frac{p}{\frac{np}{k} + q(n - \frac{n}{k})} & \cdots & 1 - \frac{p}{\frac{np}{k} + q(n - \frac{n}{k})}\\
\end{pmatrix} \text{, and }}

\eq{\mbf{Q_L} = 
\begin{pmatrix}
-\frac{q}{\frac{np}{k} + q(n - \frac{n}{k})} & \cdots & -\frac{q}{\frac{np}{k} + q(n - \frac{n}{k})}\\
\vdots & \ddots & \vdots \\
-\frac{q}{\frac{np}{k} + q(n - \frac{n}{k})} & \cdots & -\frac{q}{\frac{np}{k} + q(n - \frac{n}{k})} \\
\end{pmatrix}.}

Having derived the exact form of $\mbb{E}[\mbf{L}]$, we can derive its characteristic polynomial $p_{\mbb{E}[\mbf{L}]}(\lambda)$, 
\eq{p_{\mbb{E}[\mbf{L}]}(\lambda) = \frac{\lambda (\lambda - 1)^{n-k} (p\lambda + q((k-1)\lambda - k))^{k-1}}{(p + (k-1)q)^{k-1}}. \label{eq:27}}
Eq. \ref{eq:27} tells us that the eigenvalues of $\mbb{E}[\mbf{L}]$ are $\lambda = 0, 1, \frac{kq}{p + (k-1)q}$ with multiplicities $1, n - k, k-1$. By assumption, $q > p$, implying $\frac{kq}{p + (k-1)q} > 1$, where the last $k - 1$ eigenvectors correspond to the eigenvalue $\frac{kq}{p + (k-1)q}$. The eigenvectors of $\frac{kq}{p + (k-1)q}$ can then be written as the columns of the matrix $\mbf{V} \in \mbb{R}^{n \times k - 1}$, 
\eq{\mbf{V} = \begin{pmatrix}
-\mbf{1}_k & -\mbf{1}_k & \cdots & -\mbf{1}_k\\
\mbf{1}_k & \mbf{0}_k & \cdots & \mbf{0}_k \\
\mbf{0}_k & \mbf{1}_k & \cdots & \mbf{0}_k \\
\vdots & \vdots & \cdots & \vdots \\
\mbf{0}_k & \mbf{0}_k & \cdots & \mbf{1}_k \\
\end{pmatrix}}

where $\mbf{0}_k, \mbf{1}_k \in \mbb{R}^{k}$ are $k$-dimensional vector of all zeros and ones, respectively. Let $\tilde{\mbf{V}}$ denote the last $k - 1$ eigenvectors of $\mbf{L}$. Combining Theorem \ref{thm:sin}, Theorem \ref{thm:con}, and Lemma \ref{lem:rho} we have with probability $1 - \delta$, 
\al{\text{inf}_{\mbf{Q}\in O(l)} \lvert\lvert \mbf{V} - \tilde{\mbf{V}}\mbf{Q} \rvert\rvert_F &\leq \frac{2k^{1/2}\nrm{\mbb{E}[\mbf{L}] - \mbf{L}}}{\text{min}\{\lambda_{r-1} - \lambda_r, \lambda_s - \lambda_{s-1}\} } \\
&\leq \frac{28 k^{1/2}\sqrt{\frac{\text{ln}(4n/\delta)}{d_\text{min}}}}{\frac{kq}{p + (k-1)q} - 1} \\
&= \frac{28 k^{1/2}\sqrt{\frac{\text{ln}(4n/\delta)}{d_\text{min}}} (p + q(k-1))}{q-p} \\
&= \mc{O}(k^\frac{3}{2})
}

Thus, with high probability the last $k - 1$ eigenvectors of the observed Laplacian $\mbf{L}$ recover the communities up to the nonzero entries of its rows with at most $\mc{O}(k^{\frac{3}{2}})$ misclassified nodes. In order to prove our second claim, we note eigenvalue $\lambda=1$ of $\mbb{E}[\mbf{L}]$ has a multiplicty of $n - k$ and its eigenvectors have the form, 
\eq{\mbf{\tilde v}^\top \in \{ \begin{pmatrix} 
-1 & +1 & 0 & \cdots & 0 & 0 & \cdots & 0 \\
\end{pmatrix}, \text{ } \cdots, \begin{pmatrix} 
0 & \cdots & 0 & -1 & 0 & \cdots & 0 & +1 \\
\end{pmatrix}\},}

where $\mbf{v}$ is an all zeros vector other than one index set to $-1$ and another other set to 1. Assuming we have ordered the nodes according to their community, the index with entry $-1$ is the index of the first node in some community $k$ while the index with entry 1 is any other index of a node in community $k$. Since $\mbf{v}$ is a vector of all zeros other than two other entries, any $k - 1$ of them cannot recover the true communities for all $n$ nodes. 
\end{proof}

Our results on the heterophilous SBMs can be extended to homophilous SBMs since the derivation for the expected Laplacian, characteristic polynomial, eigenvalues, and eigenvectors remain the same. In fact, the only difference is with the assumption $p \gg q$, and here, the eigenvector(s) of interest lie at the start of the spectrum rather than at the end. We demonstrate this phenomenon below.

\begin{proposition}[\citet{le2015sparse, le2017concentration}]
Let $\mbf{A}$ and $\mbf{L}$ be the Adjacency and Laplacian matrix drawn from the stochastic block model $G(n, 2, p, q)$. Assume $p \gg q$ and $\text{min}(d_i) \geq C \text{ln}(n)$ where $C$ is an appropriately large constant. Then, with high probability, the signs of the entries of the \textbf{first nontrivial eigenvector} of $\mbf{L}$ correctly recovers the true communities up $\mc{O}(1)$ of misclassified nodes.
\end{proposition}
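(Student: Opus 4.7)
The plan is to adapt the argument from the preceding proposition on heterophilous binary SBMs essentially verbatim, since the derivation of the expected normalized Laplacian $\mbb{E}[\mbf{L}]$ and its characteristic polynomial does not depend on the direction of the inequality between $p$ and $q$; only the ordering of the nontrivial eigenvalues changes.

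First I would decompose $\mbf{L} = \mbb{E}[\mbf{L}] + \mbf{E}$ exactly as before, write $\mbb{E}[\mbf{L}]$ in the same $2\times 2$ block form with diagonal blocks $\mbf{P_L}$ and off-diagonal blocks $\mbf{Q_L}$, and re-derive the characteristic polynomial
\[
p_{\mbb{E}[\mbf{L}]}(\lambda) = \frac{\lambda(\lambda-1)^{n-2}\bigl(p\lambda+q(\lambda-2)\bigr)}{p+q},
\]
whose roots are $0,\; \tfrac{2q}{p+q},\; 1$ with multiplicities $1, 1, n-2$. The crucial difference from the heterophilous case is that $p \gg q$ forces $\tfrac{2q}{p+q} < 1$, so this eigenvalue is the second-smallest and its associated eigenvector is the \emph{first nontrivial} eigenvector. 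This eigenvector is the same $\pm 1$ community-indicator vector $\mbf{v}$ as before.

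Next I would apply the Davis--Kahan $\sin\Theta$ theorem together with the lemma relating $\rho$ to $\sin\Theta$ and the Laplacian concentration bound to control
\[
\inf_{q \in \{-1,+1\}} \nrm{\mbf{v} - q\,\tilde{\mbf{v}}}_2,
\]
where $\tilde{\mbf{v}}$ is the first nontrivial eigenvector of $\mbf{L}$. The relevant eigengap around the target eigenvalue $\tfrac{2q}{p+q}$ is
\[
\min\!\Bigl\{\tfrac{2q}{p+q} - 0,\; 1 - \tfrac{2q}{p+q}\Bigr\} = \min\!\Bigl\{\tfrac{2q}{p+q},\; \tfrac{p-q}{p+q}\Bigr\}.
\]
Combined with $\nrm{\mbb{E}[\mbf{L}] - \mbf{L}} \leq 14\sqrt{\ln(4n/\delta)/\min(d_i)}$ and the degree assumption $\min(d_i) \geq C\ln(n)$, the perturbation bound reduces to $\mc{O}(1)$ in $\ell^2$, and a standard entrywise rounding argument (the coordinates of $\mbf{v}$ are $\pm 1$, so any node whose sign flips must contribute at least a constant to $\nrm{\mbf{v} - q\tilde{\mbf{v}}}_2^2$) converts this into at most $\mc{O}(1)$ misclassified nodes when labels are read off from $\mathrm{sign}(\tilde{\mbf{v}})$.

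The main point to be careful about is the eigengap. In the heterophilous case the target eigenvalue sits alone above the bulk at $1$, so only the gap from above matters; here the target sits between $0$ and the bulk at $1$, so I must verify the gap is bounded away from zero on both sides. Under $p \gg q$ (interpreted in the standard SBM regime where the ratio $q/p$ stays bounded away from $1$), both $\tfrac{2q}{p+q}$ and $\tfrac{p-q}{p+q}$ are positive constants, which is precisely the classical spectral-clustering setting covered by \citet{le2015sparse, le2017concentration}. Apart from this bookkeeping about which end of the spectrum the signal lives in, every subsequent step mirrors the heterophilous proof in Section~\ref{pf:lap}.
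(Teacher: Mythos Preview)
Your proposal is correct and follows essentially the same approach as the paper: recycle the expected-Laplacian and characteristic-polynomial computation from the heterophilous binary case, observe that $p \gg q$ pushes the signal eigenvalue $\tfrac{2q}{p+q}$ below $1$ so that its eigenvector is now the first nontrivial one, and then re-apply Davis--Kahan, the concentration bound, and the $\rho$--$\sin\Theta$ lemma verbatim. Your explicit treatment of the two-sided eigengap $\min\{\tfrac{2q}{p+q},\tfrac{p-q}{p+q}\}$ is slightly more careful than the paper, which simply says ``the proof follows similarly,'' but the underlying argument is identical.
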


\begin{proof}
Since in the derivation for $\mbb{E}[\mbf{L}]$ and $p_{\mbb{E}[\mbf{L}]}(\lambda)$ in Proposition \ref{prp:2} we did not rely on the assumption that $q \gg p$ and we are assuming a block model defined as $G(n, 2, p, q)$, $\mbb{E}[\mbf{L}]$ and $p_{\mbb{E}[\mbf{L}]}(\lambda)$ are the same as in Proposition \ref{prp:2}. Again, the eigenvalues of $\mbb{E}[\mbf{L}]$ are $\lambda = 0, 1, \frac{2q}{p+q}$. Now, by assumption $p \gg q$, implying $\frac{2q}{p + q} < 1$, and as a result the \textit{first nontrivial eigenvector} corresponds to eigenvalue $\frac{2q}{p+q}$. The proof follows similarly as in Proposition \ref{prp:2} except here we apply Theorems \ref{thm:sin}, Theorem \ref{thm:con}, and Lemma \ref{lem:rho} to the first nontrivial eigenvector rather than the last eigenvector, and we arrive at our conclusion.
\end{proof}

\begin{theorem}
Let $\mbf{A}$ and $\mbf{L}$ be the Adjacency and Laplacian matrix drawn from the stochastic block model $G(n, k, p, q)$. Assume $p \gg q$ and $\text{min}(d_i) \geq C \text{ln}(n)$ where $C$ is an appropriately large constant. Then, with high probability, the nonzero entries along the rows of the \textbf{first nontrivial $k-1$ eigenvectors} of $\mbf{L}$ correctly recovers the true communities up to an orthogonal transformation with at most $\mc{O}(k^\frac{3}{2})$ misclassified nodes.
% \label{thm:1}
\end{theorem}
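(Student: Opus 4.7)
The plan is to mirror the proof of Theorem \ref{thm:2} (the heterophilous multiclass case) with only the spectral positioning changed. The first observation is that the derivation of $\mbb{E}[\mbf{L}]$, written in block form with off-diagonal blocks $\mbf{Q_L}$ and diagonal blocks $\mbf{P_L}$, does not depend on whether $p > q$ or $q > p$. Consequently the characteristic polynomial is still $p_{\mbb{E}[\mbf{L}]}(\lambda) = \frac{\lambda(\lambda-1)^{n-k}(p\lambda + q((k-1)\lambda - k))^{k-1}}{(p+(k-1)q)^{k-1}}$, yielding eigenvalues $\{0,\,1,\,\frac{kq}{p+(k-1)q}\}$ with multiplicities $\{1,\,n-k,\,k-1\}$.

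Next I would note the single qualitative change introduced by the assumption $p \gg q$: now $\frac{kq}{p+(k-1)q} < 1$, so the informative $(k-1)$-dimensional eigenspace of $\mbb{E}[\mbf{L}]$ sits at the \emph{low} end of the spectrum, occupying positions $2$ through $k$, just after the trivial eigenvector at $\lambda = 0$. The associated eigenvector matrix $\mbf{V} \in \mbb{R}^{n \times (k-1)}$ has exactly the same community-indicator block structure (up to sign) as exhibited in the proof of Theorem \ref{thm:2}, so recovering the rows of $\mbf{V}$ up to orthogonal transformation is equivalent to recovering the community assignments.

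The perturbation argument is then the same template: I apply the variant of Davis--Kahan (Theorem \ref{thm:sin}) with $r = 2$, $s = k$, combine with the concentration bound $\|\mbb{E}[\mbf{L}] - \mbf{L}\| \leq 14 \sqrt{\ln(4n/\delta)/\min(d_i)}$ from Theorem \ref{thm:con}, and pass to the Frobenius distance via Lemma \ref{lem:rho}. Letting $\tilde{\mbf{V}}$ denote the first nontrivial $k-1$ eigenvectors of $\mbf{L}$, this yields
\[
\inf_{\mbf{Q} \in O(k-1)} \| \mbf{V} - \tilde{\mbf{V}} \mbf{Q}\|_F \;\leq\; \frac{28\, k^{1/2} \sqrt{\ln(4n/\delta)/\min(d_i)}}{\min\bigl\{ \tfrac{kq}{p+(k-1)q} - 0,\; 1 - \tfrac{kq}{p+(k-1)q}\bigr\}}.
\]
Under $p \gg q$ and $\min(d_i) \geq C \ln n$, the relevant gap is $1 - \frac{kq}{p+(k-1)q} = \frac{p-q}{p+(k-1)q}$, which is $\Theta(1)$ and controls the bound at $\mc{O}(k^{3/2})$.

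The main obstacle, insofar as there is one, is to verify that the eigengap really is governed by $1 - \frac{kq}{p+(k-1)q}$ and not by $\frac{kq}{p+(k-1)q}$ itself; under $p \gg q$ both quantities are positive but one must confirm the former is the binding constraint in the $\min$ from Davis--Kahan and that the constant $C$ can be chosen large enough to make the bound meaningful. Once this is settled, converting the $\ell_2$/Frobenius control on the eigenbasis into a bound on misclassified nodes proceeds identically to Theorem \ref{thm:2}: distinct communities correspond to well-separated rows of $\mbf{V}$, so a Frobenius perturbation of size $\mc{O}(k^{3/2})$ translates into at most $\mc{O}(k^{3/2})$ rows whose nonzero pattern is corrupted.
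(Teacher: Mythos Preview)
Your proposal is correct and follows essentially the same route as the paper: reuse the block-form derivation of $\mbb{E}[\mbf{L}]$ and its characteristic polynomial from the heterophilous theorem unchanged, observe that $p \gg q$ flips the ordering so that the informative $(k-1)$-dimensional eigenspace now sits at the bottom of the spectrum, and then apply the same Davis--Kahan/concentration/Lemma~\ref{lem:rho} combination to those first nontrivial eigenvectors. Your treatment is in fact more explicit than the paper's (which simply says ``the proof follows similarly''), in that you write out the relevant eigengap $1 - \tfrac{kq}{p+(k-1)q} = \tfrac{p-q}{p+(k-1)q}$ and flag the need to check which side of the $\min$ binds.
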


\begin{proof}
Since in the derivation for $\mbb{E}[\mbf{L}]$ and $p_{\mbb{E}[\mbf{L}]}(\lambda)$ in Theorem \ref{thm:2} we did not rely on the assumption that $q \gg p$ and we are assuming a block model defined as $G(n, k, p, q)$, $\mbb{E}[\mbf{L}]$ and $p_{\mbb{E}[\mbf{L}]}(\lambda)$ are the same as in Theorem \ref{thm:2}. Again, the eigenvalues of $\mbb{E}[\mbf{L}]$ are $\lambda = 0, 1, \frac{kq}{p+(k-1)q}$. Now, by assumption $p \gg q$, implying $\frac{kq}{p+(k-1)q} < 1$, and as a result the \textit{first nontrivial $k-1$ eigenvectors} correspond to eigenvalue $\frac{kq}{p+(k-1)q}$. The proof follows similarly as in Theorem \ref{thm:2} except here we apply Theorems \ref{thm:sin}, Theorem \ref{thm:con}, and Lemma \ref{lem:rho} to the first nontrivial $k-1$ eigenvectors rather than the last $k-1$ eigenvectors, and we arrive at our conclusion.
\end{proof}

\section{PROOFS FOR LEARNABLE LAPLACIAN POSITIONAL ENCODINGS}

\subsection{Background, Definitions, and Lemmas on Chebyshev Polynomials}

Here we provide the background on approximation theory, Chebyshev Polynomials, and the approximation power of Chebyshev series necessary to prove our theorems for LLPE. We first introduce relevant notions from approximation theory, beginning with the $l^\infty$ norm, allowing us to measure the closeness of an arbitrary $f$ to $g$. We next review the definition of the Chebyshev polynomial, Chebyshev series, and the approximation power of truncated Chebyshev series. Further discussions on Chebyshev polynomials can be found in \citet{rivlin2020chebyshev, mason2002chebyshev}. 
\begin{definition} 
The $l^\infty$ norm of function $f$ on interval $[a, b]$ is defined, 
\eq{\nrm{f}_\infty = \text{max}_{a \leq x \leq b} \abs{f(x)}}
\end{definition}

If for some prescribed $\epsilon$, $\nrm{f - g} \leq \epsilon$, we say that $g$ uniformly approximates $f$. Moreover, if for some $g^*$, $\nrm{f - g^*} \leq \nrm{f - g}$ for all $g$, we say that $g^*$ is a best approximation to $f$. 

The Chebyshev polynomial $T_m(x)$ of the first kind is a polynomial of degree $m$ defined as, 
\eq{T_m(x) = \text{cos}(m \cdot \text{arccos}(x))}

when $0 \leq \text{arccos}(x) \leq \pi$. One crucial property of the Chebyshev polynomial of degree $m$ is that $\tilde T_m$ the normalized Chebyshev polynomial such that its leading coefficient is 1 has the minimum $l^\infty$ norm among all polynomials of degree $m$. The following theorem states this formally,
\begin{theorem}[\citet{rivlin2020chebyshev}]
Let $p_m \in \ms{P}_m$ be a polynomial of degree $m$. Then, 
\eq{\nrm{p_m}_\infty \geq \nrm{\tilde T_m}_\infty = 
\begin{cases}
2^{1-m},\quad m > 0\\
1,\quad m=0\\
\end{cases}}
\label{thm:nrm}
\end{theorem}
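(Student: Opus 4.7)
The plan is to prove this classical minimax property of monic Chebyshev polynomials via a sign-alternation / contradiction argument. First, I would clarify the setting: $\tilde T_m$ denotes the Chebyshev polynomial normalized so that its leading coefficient is $1$, and the comparison class $\ms{P}_m$ should be interpreted as monic polynomials of degree $m$ on $[-1,1]$ (otherwise the inequality fails trivially by rescaling). The $m=0$ case is immediate since the only monic polynomial of degree $0$ is the constant $1$, so I would focus on $m > 0$.

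Next, I would record the key property of $\tilde T_m = 2^{1-m} T_m$: using $T_m(x) = \cos(m \arccos x)$, $\tilde T_m$ attains the values $\pm 2^{1-m}$ alternately at the $m+1$ extremal points $x_k = \cos(k\pi/m)$ for $k = 0, 1, \ldots, m$, namely $\tilde T_m(x_k) = (-1)^k \, 2^{1-m}$. In particular, $\|\tilde T_m\|_\infty = 2^{1-m}$, establishing the equality on the right-hand side of the claim.

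Then I would prove the inequality by contradiction. Suppose there exists a monic polynomial $p_m \in \ms{P}_m$ with $\|p_m\|_\infty < 2^{1-m}$. Define $q(x) = \tilde T_m(x) - p_m(x)$. Since $\tilde T_m$ and $p_m$ are both monic of degree $m$, the leading terms cancel and $\deg q \leq m - 1$. Evaluating at the extremal nodes:
\eq{q(x_k) = (-1)^k 2^{1-m} - p_m(x_k),}
and since $|p_m(x_k)| < 2^{1-m}$, the sign of $q(x_k)$ equals $(-1)^k$. Thus $q$ alternates sign at $m+1$ consecutive points, so by the intermediate value theorem $q$ has at least $m$ zeros in $[-1,1]$. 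A nonzero polynomial of degree $\leq m-1$ cannot have $m$ zeros, so $q \equiv 0$, i.e., $p_m = \tilde T_m$, contradicting $\|p_m\|_\infty < 2^{1-m} = \|\tilde T_m\|_\infty$. Hence $\|p_m\|_\infty \geq 2^{1-m}$.

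The only delicate point is making sure the hypothesis class is correctly interpreted as monic polynomials and that the equioscillation argument is stated cleanly; the rest is a short, self-contained deduction once the extremal structure of $T_m$ at the Chebyshev extrema is in hand. No deep machinery is required beyond the trigonometric definition of $T_m$ and the fundamental theorem of algebra applied to $q$.
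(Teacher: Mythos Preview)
Your proof is correct and is the standard equioscillation argument for the minimality of the monic Chebyshev polynomial; you correctly flag that the class $\ms{P}_m$ must be read as monic polynomials of degree $m$ for the statement to make sense.

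Note, however, that the paper does not supply its own proof of this statement: it is quoted as a classical result from \citet{rivlin2020chebyshev} and used as a black box in bounding $\nrm{\tilde{\mbf{T}}}_F$ within the Rademacher complexity proof of Theorem~\ref{thm:rad}. So there is no paper-side argument to compare against; your write-up is simply a clean reconstruction of the textbook proof that the paper defers to the reference.
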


We later use Theorem \ref{thm:nrm} to prove optimal statistical generalization for LLPE among all other choices of approximating polynomials. Now, for any function $f$, there exists its Chebyshev series denoted as, 
\eq{f(x) \sim \sum_{m=0}^\infty a_m T_m(x),\quad \text{where } a_k = \frac{2}{\pi} \int_{-1}^{1} f(x) T_k(x) \, \frac{\partial x}{\sqrt{1 - x^2}}}

If $f$ is continuous, its Chebyshev series is pointwise convergent to $f$. Moreover, we can obtain uniform convergence if we place stricter assumptions on $f$. In particular, if $f$ satisfies the Dini-Lipschitz condition such that 
\eq{\text{lim}_{m\to\infty} \text{log}(m)\omega\prn{\frac{1}{m}} = 0,}

where $\omega(\delta) = \text{sup}_{x_1 - x_2 \leq \delta} \abs{f(x_1) - f(x_2)}$, then its Chebyshev series is uniformly convergent. The above discussion tell us that the Chebyshev series approximates functions that satisfy certain continuity and Lipschitz constraints. However, in practice, we are limited to partial sums of the Chebyshev series, and thus we will be mainly concerned with the approximating power of the partial sums of degree $M$. We denote the $M$th partial sum of the Chebyshev series of $f$ as, 
\eq{s_M(f; x) = s_M(x) = \sum_{m=0}^M a_m T_m(x).}

First we will discuss the relationship between the error of the best approximating polynomial of degree $M$ to $f$ and the error of the $M$th partial sum of the Chebyshev series to $f$. 

\begin{theorem}[\citet{rivlin2020chebyshev}]
Let $f$ be continuous and define, 
\eq{S_M(f) = \nrm{f - s_M(f)}, \quad E_M(f) = \nrm{f - p_M^*},}

where $p_M^*$ is the best uniform approximating polynomial of degree $M$ to $f$. Then,
\eq{E_M(f) \leq S_M(f) < \prn{4 + \frac{4}{\pi^2} \text{log}(n)} E_M(f)}
\label{thm:rel}
\end{theorem}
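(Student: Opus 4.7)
The plan is to handle the two inequalities separately. The lower bound $E_M(f) \le S_M(f)$ is immediate: the partial sum $s_M(f)$ is itself a polynomial of degree at most $M$, so by the defining optimality of $p_M^*$ within the class $\ms{P}_M$ we have $\nrm{f - p_M^*}_\infty \le \nrm{f - s_M(f)}_\infty$.

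For the upper bound I would introduce the truncation operator $L_M : C[-1,1] \to \ms{P}_M$ defined by $L_M f = s_M(f)$. The key observation is that $L_M$ is a linear projection onto $\ms{P}_M$: since the Chebyshev polynomials $\{T_0, \ldots, T_M\}$ form an orthogonal basis of $\ms{P}_M$ under the inner product used to define the coefficients $a_k$, applying $L_M$ to any polynomial of degree at most $M$ returns it unchanged. In particular $L_M(p_M^*) = p_M^*$, which lets me decompose
\eq{f - s_M(f) = (f - p_M^*) - L_M(f - p_M^*) = (I - L_M)(f - p_M^*).}
Taking $l^\infty$ norms and applying the triangle inequality yields $S_M(f) \le (1 + \nrm{L_M}) \, E_M(f)$, where $\nrm{L_M}$ is the operator norm of $L_M$ as a map between $(C[-1,1], \nrm{\cdot}_\infty)$ and itself.

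It therefore remains to bound $\nrm{L_M}$, which is the classical Lebesgue constant for Chebyshev truncation. Writing $L_M f(x) = \int_{-1}^1 K_M(x,y) f(y) (1-y^2)^{-1/2}\, dy$ with kernel $K_M(x,y) = \frac{1}{\pi} + \frac{2}{\pi}\sum_{m=1}^M T_m(x) T_m(y)$, the substitutions $x = \cos\theta$, $y = \cos\phi$ reduce $K_M$ to a Dirichlet-type kernel, and $\nrm{L_M} = \sup_x \int_{-1}^1 |K_M(x,y)|(1-y^2)^{-1/2}\, dy$. Estimating this Dirichlet integral by splitting the integration region into a near-singular portion and a regular portion gives the classical bound $\nrm{L_M} \le 3 + \frac{4}{\pi^2} \log M$, and combining with the decomposition above yields $S_M(f) < \left(4 + \frac{4}{\pi^2} \log M\right) E_M(f)$.

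The main obstacle is the Lebesgue-constant estimate: it is a classical but delicate asymptotic analysis of the $L^1$ norm of a Dirichlet-style kernel, and obtaining the sharp logarithmic growth with the specific coefficient $4/\pi^2$ requires carefully splitting the integration interval to isolate the oscillatory contribution from the bounded remainder. Once this estimate is in hand, the rest of the argument collapses to the reproducing property of $L_M$ on $\ms{P}_M$ together with a single triangle inequality, so the overall proof is structurally short but rests on the precise behavior of the Chebyshev Dirichlet kernel.
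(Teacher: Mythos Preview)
The paper does not actually prove this theorem; it is quoted verbatim as background from Rivlin's textbook and never used beyond motivation (the subsequent LLPE arguments rely on Theorem~\ref{thm:chb} instead). Your proposal is exactly the classical argument one finds in Rivlin: the lower bound is immediate from optimality of $p_M^*$, and for the upper bound one uses that the partial-sum operator $L_M$ is a linear projection onto $\ms{P}_M$, writes $f - s_M(f) = (I - L_M)(f - p_M^*)$, and controls $\nrm{L_M}$ via the Lebesgue constant for the Chebyshev--Dirichlet kernel, whose leading behavior is $\frac{4}{\pi^2}\log M$. So your route coincides with the cited reference; there is nothing to compare against in the paper itself.

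One minor note: the statement as printed has $\log(n)$ where it should read $\log(M)$; you silently corrected this in your proposal, which is right.
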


Theorem \ref{thm:rel} tells us that the error of $s_M(f)$ doesn't deviate by \textit{too much} from the error of the best approximating polynomial of degree $M$, $p_M^*$. More specifically, the error grows on the order of $\mc{O}(\text{log}(M))$, and as result, the $M$th partial sum of the Chebyshev series is a \textit{near-best} approximation of the function $f$. The following theorem will be our main tool used in proving the approximation theorems for LLPE and states that partial sums of the Chebyshev series converge exponentially if $f$ is continuously differentiable, 
\begin{theorem}[\citet{mason2002chebyshev}]
If $f$ has $d+1$ continuous derivatives on $[-1, +1]$, then $\nrm{f(x) - s_M(f)} = O(n^{-d})$.
\label{thm:chb}
\end{theorem}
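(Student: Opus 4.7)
The plan is to convert the Chebyshev-series problem into a Fourier-cosine problem via the substitution $x = \cos\theta$, and then invoke the classical rate at which Fourier coefficients of smooth functions decay. Writing $g(\theta) = f(\cos\theta)$, the Chebyshev coefficient becomes
\[
a_m \;=\; \frac{2}{\pi}\int_0^\pi g(\theta)\cos(m\theta)\,d\theta,
\]
which is precisely the $m$th Fourier cosine coefficient of the even $2\pi$-periodic extension of $g$. Under the same change of variable, $s_M(f;\cos\theta) = \sum_{m=0}^M a_m \cos(m\theta)$, so $\nrm{f - s_M(f)}_\infty$ equals the uniform error of the $M$th Fourier cosine partial sum of $g$, and the problem reduces to bounding that error.

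The second step is to transfer the smoothness of $f$ to $g$. Iterating the chain rule, $g^{(j)}(\theta)$ is a polynomial in $\sin\theta$ and $\cos\theta$ whose coefficients are $f', \ldots, f^{(j)}$ evaluated at $\cos\theta$. Hence $f \in C^{d+1}([-1,+1])$ implies $g \in C^{d+1}([0,\pi])$, and because every odd-order term in that polynomial carries at least one factor of $\sin\theta$, all odd-order derivatives of $g$ vanish at $\theta = 0$ and $\theta = \pi$. This is precisely the condition that keeps the even periodic extension of $g$ in $C^{d+1}$ across the gluing points $0, \pi, 2\pi, \ldots$

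The third step is the textbook Fourier-decay estimate. Integrating by parts $d+1$ times in the expression for $a_m$, each round introduces a factor of $1/m$; the boundary contributions vanish at every stage because the $\sin$-weighted terms are automatically zero at $0,\pi$ while the $\cos$-weighted terms are zero precisely because the relevant odd-order derivatives of $g$ vanish there. Since $g^{(d+1)}$ is continuous, hence bounded on $[0,\pi]$, this yields $|a_m| \leq C\, m^{-(d+1)}$ for a constant $C$ depending only on $\nrm{g^{(d+1)}}_\infty$ (and in turn on $\nrm{f^{(j)}}_\infty$ for $j \leq d+1$).

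The final step bounds the tail: since $|T_m(x)| \leq 1$ on $[-1,+1]$,
\[
\nrm{f - s_M(f)}_\infty \;\leq\; \sum_{m=M+1}^\infty |a_m|\, \nrm{T_m}_\infty \;\leq\; C\sum_{m=M+1}^\infty m^{-(d+1)} \;=\; O(M^{-d}),
\]
by comparison to $\int_M^\infty t^{-(d+1)}\,dt$. The main obstacle is the bookkeeping in step three: one must verify that each of the $d+1$ boundary contributions actually drops out. This is where the chain-rule observation is essential, since it is the presence of a $\sin\theta$ factor in every odd-order derivative of $g$ that eliminates the cosine boundary terms and lets the full decay rate $m^{-(d+1)}$ go through without any loss in the exponent.
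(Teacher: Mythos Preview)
The paper does not prove this statement; it is quoted from \citet{mason2002chebyshev} as a background tool and invoked as a black box in the subsequent approximation arguments. Your proof is the standard one and is correct (it also silently repairs the paper's typo: the rate is $O(M^{-d})$, not $O(n^{-d})$). One available simplification: rather than tracking $\sin\theta$ factors in the odd-order derivatives of $g$ to kill the boundary terms on $[0,\pi]$, observe that $g(\theta)=f(\cos\theta)$ is already a $2\pi$-periodic $C^{d+1}$ function on all of $\mathbb{R}$, so integrating by parts over a full period the boundary contributions cancel by periodicity alone, yielding $|a_m|=O(m^{-(d+1)})$ without any endpoint bookkeeping.
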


\subsection{Theoretical Expressivity of LLPE \label{pf:llpe_app}}

\begin{definition}
Let $G$ be an arbitrary graph. Define $f_r: V \times V \to \mbb{R}$ as a function on $G$ with respect to $r: [0, 2] \to \mbb{R^+}$ such that $f_r(i, j)$ has the following form,
\eq{f_r(i, j)^2 = \sum_{k=1}^n r(\lambda_k) (\mbf{u}_k[i] - \mbf{u}_k[j])^2.}
\end{definition}

\begin{theorem}
Let $G$ be an arbitrary graph and $d_r$ be a function on $G$ of the form in Definition \ref{def:dist} for some $r$. LLPE can recover $d_r$ such that for any nodes $i$ and $j$, the $l^2$ distance between LLPE's encoding for nodes $i$ and $j$ approximates $d_r(i, j)$.
\end{theorem}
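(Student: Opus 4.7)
The plan is to expand the squared $\ell^2$ distance between the LLPE encodings of nodes $i$ and $j$, reduce the approximation requirement to a condition on the weight matrix $\mathbf{W}_{\text{LLPE}}$, and then exhibit explicit Chebyshev parameters that realize this condition. Writing $v_k := \mathbf{u}_k[i] - \mathbf{u}_k[j]$, a direct computation using $\mathbf{P}_{\text{LLPE}} = \mathbf{U}\mathbf{W}_{\text{LLPE}}$ gives
\begin{equation*}
\|\mathbf{P}_{\text{LLPE}}[i,:] - \mathbf{P}_{\text{LLPE}}[j,:]\|_2^2 = \sum_{j'=1}^d \Bigl(\sum_{k=1}^n v_k\, h(\lambda_k;\boldsymbol{\theta}_{j'})\Bigr)^{\!2} = \mathbf{v}^{\top}\bigl(\mathbf{W}_{\text{LLPE}}\mathbf{W}_{\text{LLPE}}^{\top}\bigr)\mathbf{v},
\end{equation*}
while by Definition~\ref{def:dist}, $f_r(i,j)^2 = \mathbf{v}^{\top}\mathbf{R}\mathbf{v}$ with $\mathbf{R} = \mathrm{diag}(r(\lambda_1),\ldots,r(\lambda_n))$. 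So the theorem reduces to showing that Chebyshev parameters can be chosen to make $\mathbf{W}_{\text{LLPE}}\mathbf{W}_{\text{LLPE}}^{\top}$ arbitrarily close to $\mathbf{R}$ as an $n\times n$ matrix; taking square roots then gives the $\ell^2$ distance approximation claimed.

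The natural construction takes $d = n$ and aims for $\mathbf{W}_{\text{LLPE}} = \mathbf{R}^{1/2}$, i.e., requires each column $j'$ to satisfy $h(\lambda_k;\boldsymbol{\theta}_{j'}) = \sqrt{r(\lambda_{j'})}\,\delta_{k,j'}$. Realizing this amounts to finding, for each $j'$, a polynomial of degree at most $M$ in the normalized-eigenvalue variable that matches the prescribed spike values on the finite set $\{\tilde\lambda_1,\ldots,\tilde\lambda_n\}$. Since $T_0,\ldots,T_M$ form a basis for polynomials of degree $\leq M$ on $[-1,1]$, any such polynomial is expressible in the LLPE form $\sum_{m=0}^M \boldsymbol{\theta}_{j'}[m]\,T_m(\tilde\lambda)$, so the construction boils down to polynomial interpolation at $n$ points. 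When the normalized eigenvalues are distinct and $M \geq n-1$, Lagrange interpolation produces the required polynomial exactly, yielding $\mathbf{W}_{\text{LLPE}}\mathbf{W}_{\text{LLPE}}^{\top} = \mathbf{R}$ and exact recovery of $f_r(i,j)^2$. For smaller $M$, one instead approximates each spike target by a suitably smooth surrogate and invokes Theorem~\ref{thm:chb} (or the near-best guarantee in Theorem~\ref{thm:rel}) to control the entrywise error of each $h(\cdot;\boldsymbol{\theta}_{j'})$, then propagates this into a bound on $\|\mathbf{W}_{\text{LLPE}}\mathbf{W}_{\text{LLPE}}^{\top} - \mathbf{R}\|$ and finally through the quadratic form using $\|\mathbf{v}\|_2 \leq 2$, a consequence of $\mathbf{U}$ being orthogonal.

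The main obstacle is the mismatch between the smooth-function approximation guarantees for truncated Chebyshev series and the discontinuous spike targets needed to force the off-diagonal entries of $\mathbf{W}_{\text{LLPE}}\mathbf{W}_{\text{LLPE}}^{\top}$ to vanish. The cleanest resolution, and the one I would emphasize in the write-up, is to observe that LLPE is evaluated only at the $n$ fixed normalized eigenvalues, so what is actually required is interpolation at finitely many points rather than uniform approximation on all of $[-1,1]$; under this finite-point viewpoint, exact recovery holds whenever $M \geq n-1$, which already suffices for the qualitative ``approximates'' statement of Theorem~\ref{thm:dst}. A quantitative rate for small $M$ can be appended by additionally assuming smoothness of $r$ and smoothing the spike indicators before invoking the Chebyshev error bounds of the preceding subsection.
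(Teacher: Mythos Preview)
Your proposal is essentially the paper's approach: both set $d=n$ and aim to make $\mathbf{W}_{\text{LLPE}}$ (approximately) diagonal so that the squared $\ell^2$ distance between encodings collapses to $\sum_k r(\lambda_k)(\mathbf{u}_k[i]-\mathbf{u}_k[j])^2$. The paper realizes the diagonal by choosing as targets the smooth Gaussian bumps $f_j(\lambda)=r(\lambda_j)e^{-(\lambda-\lambda_j)^2 C_{\max}}$ and then invoking Theorem~\ref{thm:chb} to approximate each $f_j$ by a truncated Chebyshev series; you instead go straight to Lagrange interpolation at the $n$ eigenvalue nodes, reserving the smooth-surrogate route only for a quantitative small-$M$ statement. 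Your interpolation argument is cleaner---it yields \emph{exact} recovery once $M\geq n-1$ without the two-step ``bump then approximate'' detour---and your quadratic-form framing $\mathbf{v}^\top(\mathbf{W}\mathbf{W}^\top-\mathbf{R})\mathbf{v}$ makes the error propagation more transparent than the paper's inequality chain. You also correctly set the spike heights to $\sqrt{r(\lambda_{j'})}$ so that $\mathbf{W}\mathbf{W}^\top=\mathbf{R}$; the paper's proof uses $r(\lambda_j)$ as the bump height, which would give $r(\lambda_k)^2$ rather than $r(\lambda_k)$ in the squared distance---a minor slip there that your version avoids. Both arguments tacitly require the eigenvalues to be distinct (otherwise rows of $\mathbf{W}_{\text{LLPE}}$ coincide and the diagonal target is unreachable), which you flag explicitly and the paper leaves implicit in the choice of $C_{\max}$.
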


\begin{proof}
Let $\epsilon > 0$. Formally, we aim to show, 
\eq{ \Big| \nrm{\mbf{P}_\text{LLPE}[i, : ] - \mbf{P}_\text{LLPE}[j, : ]}_2^2 - d_r(i, j)^2 \Big| \leq \epsilon}

To begin, let $d = n$ such that $\mbf{P}_\text{LLPE} = \mbf{U}\mbf{W}_{\text{LLPE}} \in \mbb{R}^{n \times n}$. Here, column $j$ of $\mbf{W}_{\text{LLPE}}$ corresponds to a Chebyshev series parameterized by $\bs{\theta}_j \in \mbb{R}^M$. Now, the key in proving the approximation is choose $M$ large enough such that $h(\lambda; \bs{\theta}_j)$ can approximate functions of the form, 
\eq{f_j(\lambda) = r(\lambda_j) e^{-(\lambda-\lambda_j)^2 C_\text{max}},}

where $\lambda \in [\lambda_{\text{min}}, \lambda_{\text{max}}]$ and $C_\text{max}$ is a large constant. Let us discuss function $f_j$. Essentially, $f_j$ is a function that activates at $\lambda_j$ with value $r(\lambda_j)$ and is 0 for all other eigenvalues. Indeed if we choose $C_\text{max}$ to be large enough, we find that only values within a range $[\lambda_j-\delta, \lambda+\delta]$ for some small $\delta>0$ have nonzero output, while values outside this range are zero. Notice further that $f_j$ has infinitely many derivatives, and thus if we choose a large enough $M$, by Theorem \ref{thm:chb} we have $\nrm{f(\lambda) - h(\lambda; \bs{\theta}_j)} < \epsilon$ for some prescribed $\epsilon$. In our case, we choose a large enough $M$ such that $\nrm{f(\lambda) - h(\lambda; \bs{\theta}_j)} < \epsilon/n\cdot\text{max}_k(\mbf{u}_k[i]-\mbf{u}_k[j])^2$. Having established that $h(\lambda; \bs{\theta}_j)$ can approximate $f_j(\lambda)$, $\mbf{P}_\text{LLPE} = \mbf{U}\mbf{W}_{\text{LLPE}} \approx \mbf{U} r(\bs{\Lambda})$. This gives us, 
\al{&\Big| \nrm{\mbf{P}_\text{LLPE}[i, : ] - \mbf{P}_\text{LLPE}[j, : ]}_2^2 - d_r(i, j)^2 \Big| \\ 
&\leq \sum_{k=1}^n (r(\lambda_k) + \frac{\epsilon}{n\cdot\text{max}_k (\mbf{u}_k[i]-\mbf{u}_k[j])^2}) (\mbf{u}_k[i] - \mbf{u}_k[j])^2 - \sum_{k=1}^n (r(\lambda_k) (\mbf{u}_k[i] - \mbf{u}_k[j])^2 \\
&\leq \sum_{k=1}^n (\frac{\epsilon}{n\cdot\text{max}_k (\mbf{u}_k[i]-\mbf{u}_k[j])^2}) (\mbf{u}_k[i] - \mbf{u}_k[j])^2 \leq \epsilon}

\end{proof}

\begin{proposition}
Let $\mbf{A}$ and $\mbf{L}$ be the Adjacency and Laplacian matrix drawn from the stochastic block model $G(n, k, p, q)$. If $p \gg q$ or $q \gg p$ and $\text{min}(d_i) \geq C \text{ln}(n)$ where $C$ is an appropriately large constant, then LLPE can correctly recover the true communities up to an orthogonal transformation with at most $\mc{O}(k^{\frac{3}{2}})$ misclassified nodes.
\end{proposition}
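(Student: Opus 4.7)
The plan is to reduce this proposition to the community recovery guarantees already established in Theorem \ref{thm:1} (homophilous case) and Theorem \ref{thm:2} (heterophilous case), combined with the Chebyshev approximation argument used in the proof of Theorem \ref{thm:dst}. The key observation is that once LLPE can approximate an arbitrary spectral filter $r(\lambda)$ via the map $h(\cdot;\bs\theta_j)$, it can in particular realize a filter that singles out precisely the informative eigenvectors of $\mbf{L}$, at which point LLPE's output becomes (approximately) the same submatrix of $\mbf{U}$ that the prior theorems already certify is community-revealing.

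First I would split into the two regimes. In the case $p \gg q$, the informative eigenvectors are the $k-1$ nontrivial ones associated with the eigenvalue $\lambda^{*} = \tfrac{kq}{p+(k-1)q} < 1$; in the case $q \gg p$ the informative eigenvectors are the last $k-1$ associated with $\lambda^{*} = \tfrac{kq}{p+(k-1)q} > 1$. Either way, $\lambda^{*}$ is separated from the bulk eigenvalue $\lambda = 1$ by a gap of order $\Omega(1)$ in the expected Laplacian, and by Theorem \ref{thm:con} this gap persists in the observed Laplacian with high probability under $\min(d_i) \geq C\ln(n)$.

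Next I would select the Chebyshev parameters $\bs\theta_j$ of LLPE so that $h(\lambda;\bs\theta_j)$ uniformly approximates the bump $f_{j}(\lambda) = e^{-(\lambda - \lambda^{*})^{2} C_{\max}}$, as was done in the proof of Theorem \ref{thm:dst}. Since $f_j$ is $C^{\infty}$ on $[0,2]$, Theorem \ref{thm:chb} implies that for sufficiently large truncation order $M$ the approximation error is $\leq \epsilon$ in the uniform norm. Choosing the $d = k - 1$ columns of $\mbf{W}_{\text{LLPE}}$ this way makes it approximately a projector onto the $\lambda^{*}$-eigenspace of $\mbf{L}$, so $\mbf{P}_{\text{LLPE}} = \mbf{U}\mbf{W}_{\text{LLPE}}$ approximates, up to orthogonal rotation, the submatrix formed by the relevant $k-1$ eigenvectors. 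Invoking Theorem \ref{thm:1} in the homophilous regime and Theorem \ref{thm:2} in the heterophilous regime on this submatrix then yields the desired $\mc{O}(k^{3/2})$ misclassification bound.

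The main obstacle I expect is controlling the compounded error. The Chebyshev truncation introduces an $\epsilon$-level error in the spectral filter, while Davis-Kahan via Theorem \ref{thm:sin} combined with concentration via Theorem \ref{thm:con} introduces an $\mc{O}(k^{1/2}/\text{gap})$ error between the eigenspaces of $\mbf{L}$ and $\mbb{E}[\mbf{L}]$. I would need to show these two sources can be simultaneously driven small enough that the sign pattern (binary case) or row-wise nonzero pattern (multiclass case) of $\mbf{P}_{\text{LLPE}}$ still recovers communities with at most $\mc{O}(k^{3/2})$ errors. Concretely this means choosing $M$ large enough relative to $n$, $k$, and $C_{\max}$, and then tracking the gap in the Davis-Kahan bound applied to $\mbf{U}\,h(\bs\Lambda)\mbf{U}^\top$ rather than $\mbf{L}$ itself. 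Because the filter is continuous and the eigenvalues of $\mbf{L}$ concentrate, this amounts to a stability argument that should go through with standard estimates.
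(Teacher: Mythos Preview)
Your proposal follows essentially the same route as the paper: approximate bump functions by truncated Chebyshev series so that $\mbf{P}_{\text{LLPE}}=\mbf{U}\mbf{W}_{\text{LLPE}}$ collapses onto the community-revealing eigenvectors, then invoke Theorems~\ref{thm:1} and~\ref{thm:2}. The paper's argument is actually terser than yours on the error-compounding issue you flag, so on that front you are more careful, not less.

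There is one concrete slip. You center every bump at the \emph{same} point $\lambda^{*}$, writing $f_{j}(\lambda)=e^{-(\lambda-\lambda^{*})^{2}C_{\max}}$ independently of $j$. With that choice all $d=k-1$ columns of $\mbf{W}_{\text{LLPE}}$ coincide, so $\mbf{P}_{\text{LLPE}}$ has rank one and its column space is spanned by the single vector $\sum_{i:\lambda_i\approx\lambda^{*}}\mbf{u}_i$. For $k>2$ that vector cannot separate all $k$ communities (in the expected Laplacian it distinguishes block $1$ from the rest but leaves blocks $2,\dots,k$ indistinguishable), so you do not obtain the $(k-1)$-dimensional span required to apply Theorem~\ref{thm:1} or~\ref{thm:2}. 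The paper avoids this by centering column $j$'s bump at the $j$-th observed eigenvalue, $f_j(\lambda)=e^{-(\lambda-\lambda_j)^2 C_{\max}}$, which produces $\mbf{P}_{\text{LLPE}}\approx\mbf{U}_k$ (respectively $\mbf{U}_{-k:}$) column by column. Once you make that change your argument matches the paper's.
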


\begin{proof}
First assume $p \gg q$ and let $0 < \epsilon < \mc{O}(k^\frac{3}{2})$. Formally, we aim to show,
\eq{\nrm{\mbf{P}_\text{LLPE} - \mbf{U}_k}_F \leq \epsilon.}

To begin, let $d=k$. The key in proving the approximation is to choose $M$ large enough such that $h(\lambda; \bs{\theta}_j)$ can approximate functions of the form,
\eq{f_j(\lambda) = e^{-(\lambda - \lambda_j)^2C_{\text{max}}}}

Here, $f_j$ is a function that activates at $\lambda_j$ with value $+1$ and is zero for all other eigenvalues. Since again $f_j$ has infinitely many derivatives, we can choose a large enough $M$ such that by Theorem \ref{thm:chb} we have $\nrm{f(\lambda) - h(\lambda; \bs{\theta}_j)} < \epsilon$ for some prescribed $\epsilon/k$. Having established that $h(\lambda; \bs{\theta}_j)$ can approximate $f_j(\lambda)$, $\mbf{P}_\text{LLPE} \approx \mbf{U}_k$, thus proving the result for $p \gg q$. We can similarly prove the result assuming $q \gg p$, by defining $f_j$ as follows,
\eq{f_j(\lambda) = e^{-(\lambda - \lambda_{n-j})^2C_{\text{max}}}}

where instead $f_j$ is a function that activates at $\lambda_{n-j}$ with value $+1$ and is zero elsewhere. This allows us to effectively obtain, $\mbf{P}_\text{LLPE} \approx \mbf{U}_{-k:}$, where $\mbf{U}_{-k:}$ is the last $k$ eigenvectors of the graph Laplacian, thus proving the result for $p \gg q$.

\end{proof}

\subsection{Generalization Properties of LLPE \label{pf:llpe_gen}}

We provide the relevant background on Rademacher complexity found in \citet{awasthi2020rademacher}. Let $\mc{H}$ be a hypothesis class where $h: \mbb{R}^d \to \mbb{R}$ when $h \in \mc{H}$. The empirical Rademacher complexity of $\mc{H}$ given sample $S=(\mbf{x}_1, \ldots, \mbf{x}_n)$ is defined, 
\eq{\hat{\mf{R}}(\mc{H}) = \mbb{E}_{\bs{\sigma}} \brt{\text{sup}_{h\in\mc{H}} \frac{1}{m} \sum_{i=1}^n \sigma_i h(\mbf{x}_i)}}

where $\bs{\sigma}$ is a vector of Rademacher variables drawn uniformly from values in $\{-1, +1\}$. For a family of functions $\mc{F}$ where $f: \mbb{R}^d \to [0, 1]$, we have the following result: for any $\delta > 0$, with probability $1-\delta$ over the randomness of $S \sim D$, the following inequality holds for all $f\in \mc{F}$:
\eq{\mbb{E}_{x\sim D} [f(x)] - \mbb{E}_{x\sim S}[f(x)] \leq 2 \hat{\mf{R}}(\mc{F}) + \sqrt{\frac{\text{log}(\frac{1}{\delta})}{2m}}}

\citet{awasthi2020rademacher} considers the hypothesis set of linear predictors with weight bounded in $l^p$ norm with $\mc{H}_p = \{\mbf{x} \to \mbf{x} \cdot \mbf{w} : \nrm{\mbf{w}}_p \leq W\}$ and proves the following upper and lower bounds on the empirical Rademacher complexity of $\mc{H}_2$.
\begin{theorem}[\citet{awasthi2020rademacher}]
Let $\mc{H}_2 = \{\mbf{x} \to \mbf{x} \cdot \mbf{w} : \nrm{\mbf{w}}_2 \leq W\}$ be a hypothesis class of linear predictors defined over $\mbb{R}^d$ with bounded weight in $l^2$ norm. Then, the empirical Rademacher complexity of $\mc{H}_p$ for sample $S=(\mbf{x}_1, \ldots, \mbf{x}_n)$ admits the following upper and lower bounds, 
\eq{\frac{W}{\sqrt{2}n} \nrm{\mbf{X}}_{F} \leq \hat{\mf{R}}_S(\mc{H}_2) \leq \frac{W}{n} \nrm{\mbf{X}}_{F}}
where $\mbf{X}$ is the $n \times d$ feature matrix and $\nrm{\cdot}_F$ is the Frobenius norm.
\label{thm:lin}
\end{theorem}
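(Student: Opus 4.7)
The plan is to reduce the empirical Rademacher complexity to a single Rademacher average of a Hilbert-space norm and then estimate it through its second moment. The upper bound will follow from Jensen's inequality, while the lower bound requires the Kahane--Khintchine inequality, which is the main obstacle.

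\textbf{Dual reduction and second moment.} Starting from the definition, I would apply Cauchy--Schwarz, whose equality case gives $\sup_{\|\mbf{w}\|_2\le W} \mbf{w}^\top\mbf{v} = W\|\mbf{v}\|_2$ for any $\mbf{v}\in\mbb{R}^d$, to obtain
\[
\hat{\mf{R}}_S(\mc{H}_2) \;=\; \frac{W}{n}\,\mbb{E}_{\bs{\sigma}}\!\Big[\Big\|\textstyle\sum_{i=1}^n \sigma_i \mbf{x}_i\Big\|_2\Big] \;=:\; \frac{W}{n}\,\mbb{E}[Z].
\]
Both bounds then reduce to two-sided control of $\mbb{E}[Z]$. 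Expanding $Z^2=\sum_{i,j}\sigma_i\sigma_j\mbf{x}_i^\top\mbf{x}_j$ and using the independence identity $\mbb{E}[\sigma_i\sigma_j]=\mathbf{1}\{i=j\}$ gives the exact second moment $\mbb{E}[Z^2]=\sum_i\|\mbf{x}_i\|_2^2=\|\mbf{X}\|_F^2$.

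\textbf{Matching moment bounds.} For the upper direction, Jensen's inequality applied to the concave square root yields $\mbb{E}[Z]\le\sqrt{\mbb{E}[Z^2]}=\|\mbf{X}\|_F$, hence $\hat{\mf{R}}_S(\mc{H}_2)\le (W/n)\|\mbf{X}\|_F$. For the matching lower direction, I would invoke the Hilbert-space Kahane--Khintchine inequality: any Rademacher sum $S=\sum_i\sigma_i\mbf{x}_i$ taking values in a Hilbert space satisfies $(\mbb{E}\|S\|_2^2)^{1/2}\le\sqrt{2}\,\mbb{E}\|S\|_2$, with sharp constant $\sqrt{2}$. Combined with the second-moment computation above, this gives $\mbb{E}[Z]\ge\|\mbf{X}\|_F/\sqrt{2}$, and therefore $\hat{\mf{R}}_S(\mc{H}_2)\ge W\|\mbf{X}\|_F/(\sqrt{2}\,n)$.

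\textbf{Main obstacle.} The lower bound is the genuinely hard step. The factor $1/\sqrt{2}$ is not accessible by elementary reverse-Jensen tools: a Paley--Zygmund computation on $Y=Z^2$, even combined with the pairing-based upper bound $\mbb{E}[Z^4]\le 3\|\mbf{X}\|_F^4$ (obtained by enumerating which quadruples of Rademacher indices survive the expectation), produces a strictly smaller constant than $1/\sqrt{2}$; Efron--Stein applied to $Z$ gives only $\mathrm{Var}(Z)\le 2\mbb{E}[Z^2]$ via the $2\|\mbf{x}_i\|$ per-coordinate Lipschitz bound, which yields a trivial lower bound on $\mbb{E}[Z]^2$. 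The sharp Hilbert-space Kahane--Khintchine bound is classical and can be established by hypercontractivity on the discrete hypercube or by a symmetrization argument; importing that inequality is the key analytical step that closes the gap between the trivial upper and the tight lower bound.
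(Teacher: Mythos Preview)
The paper does not prove this theorem; it is quoted verbatim as a cited result from \citet{awasthi2020rademacher} and then invoked as a black box in the proof of Theorem~\ref{thm:rad}. So there is no ``paper's own proof'' to compare against.

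Your argument is correct and is in fact the same one used in the cited source: the dual reduction $\hat{\mf{R}}_S(\mc{H}_2)=\tfrac{W}{n}\,\mbb{E}_{\bs\sigma}\bigl\|\sum_i\sigma_i\mbf{x}_i\bigr\|_2$ via the $\ell_2$--$\ell_2$ duality, the upper bound by Jensen on the exact second moment $\mbb{E}[Z^2]=\|\mbf{X}\|_F^2$, and the lower bound via the sharp Khintchine--Kahane constant $1/\sqrt{2}$ between the $L^1$ and $L^2$ norms of a Rademacher sum in Hilbert space. Your diagnosis that elementary routes (Paley--Zygmund with the fourth-moment bound $\mbb{E}[Z^4]\le 3\|\mbf{X}\|_F^4$, or Efron--Stein) fall short of the exact $1/\sqrt{2}$ is accurate; the sharp constant genuinely requires Szarek's or Lata\l a--Oleszkiewicz's argument, which is what \citet{awasthi2020rademacher} also import.
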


\begin{theorem}
Let $\mc{H}_{\text{LLPE}} = \{\tilde\lambda \to \sum_{m=1}^M \theta_m \cdot \tilde T_m(\tilde \lambda) : \bs{\theta} \in \mbb{R}^M, \nrm{\bs{\theta}}_2 \leq C_{\text{LLPE}}\}$, where $C_{\text{LLPE}}$ is some constant greater than 0, $\tilde\lambda$ is the rescaled eigenvalues that lie in $[-1, +1]$, and $\tilde T_m$ is the normalized Chebyshev polynomial. Then, the empirical Rademacher complexity of the hypothesis class $\mc{H}_{\text{LLPE}}$ for a sample $S = (\lambda_0, \ldots, \lambda_n)$ admits the following upper and lower bounds, 
\eq{\frac{C_{\text{LLPE}}}{\sqrt{2n}} \leq \hat{\mf{R}}_{S}(\mc{H}_{\text{LLPE}}) \leq \frac{\sqrt{2}C_{\text{LLPE}}}{\sqrt{n}}}
\end{theorem}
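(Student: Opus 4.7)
The plan is to recognize $\mc{H}_{\text{LLPE}}$ as a linear hypothesis class in disguise. Every $h \in \mc{H}_{\text{LLPE}}$ has the form $h(\tilde\lambda) = \bs{\theta} \cdot \phi(\tilde\lambda)$ where $\phi(\tilde\lambda) = (\tilde T_1(\tilde\lambda), \ldots, \tilde T_M(\tilde\lambda))^\top$ is a Chebyshev feature map and $\nrm{\bs{\theta}}_2 \leq C_{\text{LLPE}}$. This is exactly the setting of Theorem \ref{thm:lin} with weight bound $W = C_{\text{LLPE}}$ and ``design matrix'' $\mbf{\Phi} \in \mbb{R}^{n \times M}$ whose $i$th row is $\phi(\tilde\lambda_i)$. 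Applying that theorem immediately gives
\eq{\frac{C_{\text{LLPE}}}{\sqrt{2}\,n}\,\nrm{\mbf{\Phi}}_F \;\leq\; \hat{\mf{R}}_S(\mc{H}_{\text{LLPE}}) \;\leq\; \frac{C_{\text{LLPE}}}{n}\,\nrm{\mbf{\Phi}}_F,}
reducing the problem to the purely deterministic task of sandwiching $\nrm{\mbf{\Phi}}_F$.

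The remaining step is to show $\sqrt{n} \leq \nrm{\mbf{\Phi}}_F \leq \sqrt{2n}$, which plugged into the display above reproduces the claimed bounds. Since $\nrm{\mbf{\Phi}}_F^2 = \sum_{i=1}^n \sum_{m=1}^M \tilde T_m(\tilde\lambda_i)^2$, it is enough to control $\sum_{m=1}^M \tilde T_m(\tilde\lambda_i)^2$ uniformly for $\tilde\lambda_i \in [-1, 1]$. For the upper bound, I would invoke Theorem \ref{thm:nrm}: the normalization minimality of $\tilde T_m$ gives a uniform pointwise estimate $|\tilde T_m(\tilde\lambda_i)| \leq \nrm{\tilde T_m}_\infty$, and summing the resulting (geometric-type) series yields a bound independent of $M$. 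For the lower bound, one uses that $\tilde T_1$ is a nondegenerate linear function of $\tilde\lambda$, so $\sum_{m=1}^M \tilde T_m(\tilde\lambda_i)^2$ is bounded below uniformly over a set of $\tilde\lambda_i$ of full measure, yielding the $\sqrt{n}$ term.

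The main obstacle will be extracting the sharp constants (the factors of $\sqrt{2}$) rather than the looser numerical constants one would get from a crude $l^\infty$ estimate alone. Handling this requires carefully tracking the specific scaling convention of ``normalized Chebyshev polynomial'' used in the theorem statement, and in particular how the normalization interacts with Theorem \ref{thm:nrm}. If a direct geometric bound falls short, an alternative route is to exploit the (continuous) orthogonality $\int_{-1}^{1} T_m(x) T_{m'}(x)(1-x^2)^{-1/2}\,dx = \tfrac{\pi}{2}\delta_{mm'}$ (with the obvious modification at $m=0$) to argue that the columns of $\mbf{\Phi}$ behave almost orthogonally, so that $\mbf{\Phi}^\top \mbf{\Phi}$ is close to a scaled identity and $\nrm{\mbf{\Phi}}_F^2$ concentrates near a constant multiple of $n$. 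Once the sandwich $\sqrt{n} \leq \nrm{\mbf{\Phi}}_F \leq \sqrt{2n}$ is in hand, the theorem follows by substitution; crucially, the bound depends on $M$ only implicitly through the norm constraint on $\bs{\theta}$, which is the conceptual payoff the paper highlights.
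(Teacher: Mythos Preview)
Your approach is essentially the paper's: view $\mc{H}_{\text{LLPE}}$ as a linear class over the Chebyshev feature map, invoke Theorem~\ref{thm:lin} to reduce everything to sandwiching $\nrm{\mbf{\Phi}}_F$, and then use Theorem~\ref{thm:nrm} (namely $\nrm{\tilde T_m}_\infty = 2^{1-m}$ for $m\geq 1$) together with a geometric-series estimate to get $\nrm{\mbf{\Phi}}_F \leq \sqrt{2n}$. The paper carries out exactly this crude $l^\infty$ route for the upper bound; the orthogonality alternative you float is not needed.

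The one substantive divergence is the lower bound. The paper obtains $\nrm{\mbf{\Phi}}_F \geq \sqrt{n}$ by including the $m=0$ term in the feature matrix (its proof writes columns $\tilde T_0,\ldots,\tilde T_M$, notwithstanding the sum in the statement starting at $m=1$). Since $\tilde T_0 \equiv 1$, that single column already contributes $n$ to $\nrm{\mbf{\Phi}}_F^2$, while every other column is lower-bounded by $0$ because each $\tilde T_m$ with $m\geq 1$ has zeros in $[-1,1]$. Your proposed route via $\tilde T_1$ will not deliver the sharp constant: $\tilde T_1(\tilde\lambda)$ vanishes at $\tilde\lambda=0$, so ``bounded below on a set of full measure'' does not translate into a uniform pointwise bound $\geq 1$ for arbitrary (possibly adversarial) sample locations $\tilde\lambda_i$. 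To match the paper's $\sqrt{n}$, include the constant term $\tilde T_0$ in $\phi$ as the paper does; otherwise you will only obtain a sample-dependent lower constant.
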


\begin{proof}
The key in proving Theorem \ref{thm:rad} is to recognize that $\mc{H}_{\text{LLPE}}$ is equivalent to a linear hypothesis class with weight $\bs{\theta}$ and feature matrix $\tilde{\mbf{T}} \in \mbb{R}^{n \times M}$, where $\tilde{ \mbf{T}}$ is a matrix of normalized Chebyshev polynomial outputs. Once we characterize the Frobenius norm of $\tilde{ \mbf{T}}$, we can apply Theorem \ref{thm:lin} to obtain our upper and lower bounds. To begin, we can represent the Frobenius norm of $\tilde{ \mbf{T}}$ as follows, 
\eq{\nrm{\tilde{\mbf{T}}}_F = \nrm{\begin{pmatrix}
\tilde T_0(\lambda_0) & \cdots &  \tilde T_M(\lambda_0) \\
\vdots & \ddots & \vdots \\
\tilde T_0(\lambda_n) & \cdots &  \tilde T_M(\lambda_n) \\
\end{pmatrix}}_F }

Notice that the $j$th column of $\tilde{\mbf{T}}$ corresponds to the vector of the Chebyshev outputs on the eigenvalues where the Chebyshev polynomial is of order $j$. We can then upper bound the Frobenius norm of $\tilde{\mbf{T}}$ by replacing each column with the vector of the $l^\infty$ norms of the Chebyshev polynomials as follows, 
\eq{\nrm{\begin{pmatrix}
\tilde T_0(\lambda_0) & \cdots &  \tilde T_M(\lambda_0) \\
\vdots & \ddots & \vdots \\
\tilde T_0(\lambda_n) & \cdots &  \tilde T_M(\lambda_n) \\
\end{pmatrix}}_F \leq
\nrm{\begin{pmatrix}
\nrm{\tilde T_0}_\infty & \cdots &  \nrm{\tilde T_M}_\infty \\
\vdots & \ddots & \vdots \\
\nrm{\tilde T_0}_\infty & \cdots &  \nrm{\tilde T_M}_\infty \\
\end{pmatrix}}_F}

We can now apply Theorem \ref{thm:nrm} to the upper bound on the Frobenius norm of $\tilde{\mbf{T}}$ , where we obtain, 
\eq{
\nrm{\begin{pmatrix}
\nrm{\tilde T_0}_\infty & \cdots &  \nrm{\tilde T_M}_\infty \\
\vdots & \ddots & \vdots \\
\nrm{\tilde T_0}_\infty & \cdots &  \nrm{\tilde T_M}_\infty \\
\end{pmatrix}}_F =
\nrm{\begin{pmatrix}
1 & \cdots &  2^{1-m} \\
\vdots & \ddots & \vdots \\
1 & \cdots & 2^{1-m} \\
\end{pmatrix}}_F
}

Now, we can proceed to derive an upper bound on the Frobenius norm of $\tilde{\mbf{T}}$ as follows, 
\al{\nrm{\tilde{\mbf{T}}}_F &\leq \sqrt{\sum_{j=1}^M \sum_{i=1}^n \abs{\tilde{\mbf{T}}[i, j]}^2} \\
&\leq \sqrt{\sum_{j=1}^M n ({2^{1-j}})^2} \\
&\leq \sqrt{n} \sqrt{\sum_{j=0}^\infty {2^{-j}}} \label{eq:57} \\
&= \sqrt{2n} \label{eq:58}
}
where Equation \ref{eq:57} follows since $(2^{1-j})^2 \leq 2^{1-j}$ for all $j \geq 0$ and Equation \ref{eq:58} follows since $\sqrt{\sum_{j=0}^\infty {2^{-j}}}$ is a geometric series. Having derived an upper bound on the Frobenius norm of $\tilde{ \mbf{T}}$, applying Theorem \ref{thm:lin} to $\mc{H}_{\text{LLPE}}$ proves the upper bound of the result. We prove the lower bound of the result by lower bounding the Frobenius norm of $\tilde{\mbf{T}}$. To do so, we replace each column with the vector of the minimum absolute value of the Chebyshev polynomials as follows, 
\eq{
\nrm{\begin{pmatrix}
\tilde T_0(\lambda_0) & \cdots &  \tilde T_M(\lambda_0) \\
\vdots & \ddots & \vdots \\
\tilde T_0(\lambda_n) & \cdots &  \tilde T_M(\lambda_n) \\
\end{pmatrix}}_F \geq
\nrm{\begin{pmatrix}
\text{min}_{\lambda}(\abs{\tilde T_0(\lambda)}) & \cdots &  \text{min}_{\lambda}(\abs{\tilde T_M(\lambda)}) \\
\vdots & \ddots & \vdots \\
\text{min}_{\lambda}(\abs{\tilde T_0(\lambda)}) & \cdots &  \text{min}_{\lambda}(\abs{\tilde T_M(\lambda)}) \\
\end{pmatrix}}_F
}

The Chebyshev polynomial $\tilde T_0$ is defined as the constant function equal to one, while each of the Chebyshev polynomials of order $m>0$ have exactly $m+1$ zeros. Thus, the lower bound of the Frobenius norm of $\tilde{\mbf{T}}$ can be expressed as the following,
\eq{
\nrm{\begin{pmatrix}
\text{min}_{\lambda}(\abs{\tilde T_0(\lambda)}) & \cdots &  \text{min}_{\lambda}(\abs{\tilde T_M(\lambda)}) \\
\vdots & \ddots & \vdots \\
\text{min}_{\lambda}(\abs{\tilde T_0(\lambda)}) & \cdots &  \text{min}_{\lambda}(\abs{\tilde T_M(\lambda)}) \\
\end{pmatrix}}_F 
=\nrm{\begin{pmatrix}
1 & 0 & \cdots &  0 \\
\vdots & \vdots & \ddots & \vdots \\
1 & 0 & \cdots & 0 \\
\end{pmatrix}}_F
}
The lower bound of the Frobenius norm of $\tilde{\mbf{T}}$ then becomes,
\eq{\nrm{\tilde{\mbf{T}}}_F \geq \sqrt{n}}

Lastly, applying Theorem \ref{thm:lin} to $\mc{H}_{\text{LLPE}}$ proves the lower bound of the result.

\end{proof}

\section{ADDITIONAL THEORY, EMPIRICAL RESULTS AND DISCUSSIONS \label{sec:add}}

\subsection{Comparing LLPE and LPE-FLK with Complex Synthetic Graphs \label{sec:power}}

We conducted an additional experiment on synthetic graphs of varying homophily levels using a modified preferential attachment process introduced by \citet{zhu2020beyond} (Figure \ref{fig:power}). Here, nodes are added one by one, and the probability that a new node $u$ of class $i$ forms an edge with existing node $v$ of class $j$ is proportional to both the class compatibility between classes $i$ and $j$, $H_{i,j}$, and the degree of node $v$. As a result, the degree distribution for the resulting graphs follows a powerlaw, and the homophily can be controlled by the compatibility matrix $H$. We train and evaluate a GT equipped with LPE-FLK and LLPE and present results below. The results demonstrate that GT with LPE-FLK obtains the same performance across different homophilies and is unable to capture the relevant graph structure in high or low homophily settings on the more complex powerlaw graph. On the other hand, the performance of GT with LLPE significantly increases when homophily is either very high or very low, indicating it is able to capture the relevant graph structure and identify the relevant eigenvectors in complex graph structures.

\begin{figure*}[h!]
    \centering
    \includegraphics[width=.75\textwidth]{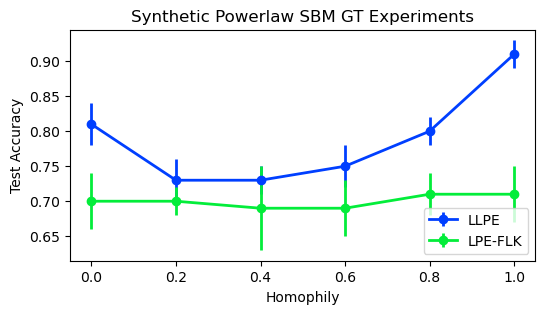}
    \caption{Mean and standard deviations (error bars) of all model-PE combinations on the synthetic SBMs. LLPE performs well across both high homophily and high heterophily, while LPE-FK does not.}
    \label{fig:power}
\end{figure*}

\begin{figure*}[h!]
     \centering
     \begin{subfigure}[b]{.48\textwidth}
         \centering
         \includegraphics[width=\textwidth]{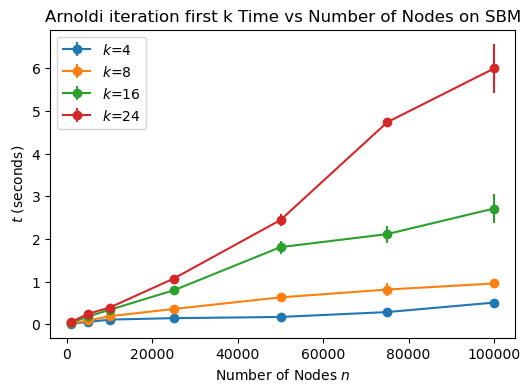}
         \caption{Time complexity in obtaining first $k$ eigenvectors}
     \end{subfigure}
     \hfill
     \begin{subfigure}[b]{.495\textwidth}
         \centering
         \includegraphics[width=\textwidth]{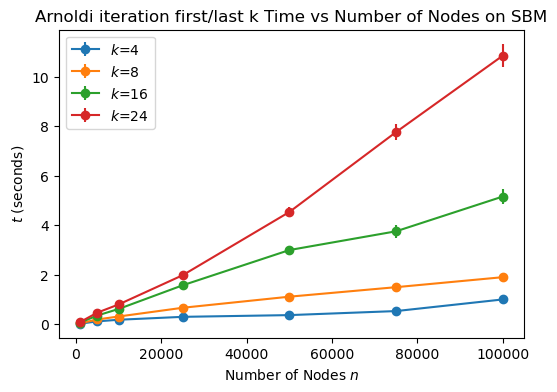}
         \caption{Time complexity in obtaining first and last $k$ eigenvectors}
     \end{subfigure}
    \caption{Arnoldi-iteration time complexity along SBMs varying $n$ and $k$. }
    \label{fig:time}
\end{figure*}

\begin{figure*}[h!]
     \centering
     \begin{subfigure}[b]{.48\textwidth}
         \centering
         \includegraphics[width=\textwidth]{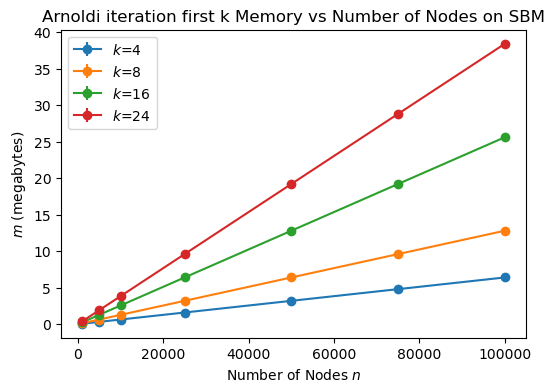}
         \caption{Space complexity obtaining first $k$ eigenvectors}
     \end{subfigure}
     \hfill
     \begin{subfigure}[b]{.495\textwidth}
         \centering
         \includegraphics[width=\textwidth]{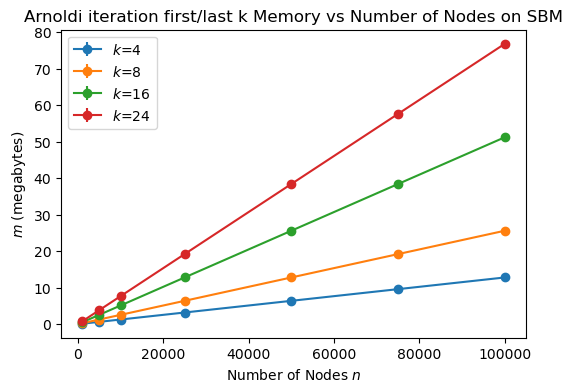}
         \caption{Space complexity obtaining first and last $k$ eigenvectors}
     \end{subfigure}
    \caption{Arnoldi-iteration space complexity along SBMs varying $n$ and $k$. }
    \label{fig:space}
\end{figure*}

\begin{figure*}[h!]
    \centering
    \includegraphics[width=.75\textwidth]{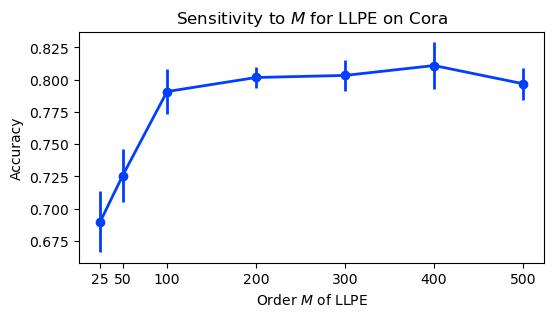}
    \caption{Sensitivity analysis to the order $M$ of LLPE on Cora.}
    \label{fig:ord}
\end{figure*}

\subsection{Computational Efficiency of LLPE \label{sec:eff}}

We demonstrate the computational efficiency of the large version of LLPE by measuring both time and space complexity of Arnoldi iteration in obtaining the first and last $k$ eigenvectors on an SBM of sizes up to 100,000 nodes. We utilize a fast implementation of Arnoldi iteration readily available in SciPy optimized with sparse matrix and vector operations. We conduct our experiment on an Intel ® Xeon ® CPU E5-2620 measuring the runtime and memory usage of the algorithm as we increase $n$, the size of the graph and $k$, the number of eigenvectors obtained (Figures \ref{fig:time} and \ref{fig:space}). Figure \ref{fig:time} demonstrates that obtaining the first and last $k$ eigenvectors of graphs with 1000s of nodes is very fast, requiring only seconds, while obtaining the first and last $k$ eigenvectors on graphs of size 100,000 nodes requires tens of seconds. Figure \ref{fig:space} demonstrates that the memory requirements during Arnoldi-iteration is also efficient, requiring tens of megabytes on graphs of size 100,000 nodes.

\subsection{Sensitivity to Order $M$ of LLPE \label{app:ord}}

In order to test the sensitivity of LLPE to choices of $M$, we plot the test performance of LLPE across different choices of $M$ on Cora's giant component in Figure \ref{fig:ord}. Figure \ref{fig:ord} tells us that as long as $M$ is large enough, LLPE obtains high test accuracy. Interestingly, the sensitivity analysis agrees with both our theoretical results on the approximation capabilities and statistical generalization for LLPE. In particular, Proposition \ref{prp:com} and Theorem \ref{thm:dst} indicate that $M$ needs to be large enough in order for LLPE to obtain good approximation capabilities. Similarly, we find empirically that when $M$ is too small as indicated when $M\in [25, 50]$ performance degrades. On the other hand, Theorem \ref{thm:rad} tells us that LLPE's statistical generalization does not depend explicitly on the order $M$ of LLPE. As a result, we can select a large $M$ to guarantee high expressivity, while not harming generalization. Indeed, we find empirically that for choices of $M$ as large as $M=500$, LLPE maintains a high test accuracy. 

\subsection{Sensitivity to Number of Eigenvectors $k$ for Large LLPE \label{app:neigs}}

In order to test the sensitivity of the large version of LLPE to choices of $k$, we plot the performance of LLPE across different choices of $k$ on Cora's giant component in Figure \ref{fig:k}. Figure \ref{fig:k} tells us that if $k$ is too small such as $k \in[8, 32]$, then performance degrades. On the other hand, once $k$ exceeds this range, performance is stable across choices of $k$. Although performance is stable across choice of $k>32$, we find that the best performing choice of $k$ lies at $k=64$ rather than $k=1024$. This result indicates that we can potentially improve LLPE by searching across choices of $k$ from the start and ends of the spectrum. However, this design introduces another hyperparameter, requiring a careful search across the spectrum of the Laplacian. It is also prone to missing eigenvectors lying in the middle of the spectrum, where previous work has shown these eigenvectors can be relevant \citep{li2023restructuring}. We thus leave this idea for exploration in future work.

Interestingly, the sensitivity to the choice of $k$ is different on LLPE in comparison to LPE. In LPE, the best choice of $k$ tends to be small where $k\in[8, 32]$, while for LLPE the best choice of $k$ tends to be large where $k\in[64, 1024]$. In fact for LPE, we observe in our empirical results that no choices of $k$ greater than 32 obtain the best performance among models trained with LPE. We hypothesize that these behaviors are due to the ability of LLPE to learn which eigenvectors are important and inability of LPE to learn which eigenvectors are important. Since LLPE learns which eigenvectors are important, providing them with only a small subset of the eigenvectors reduces performance. On the other hand, LPE cannot learn which eigenvectors are important, and thus providing to them more eigenvectors reduces performance.

\begin{figure*}[h!]
    \centering
    \includegraphics[width=.75\textwidth]{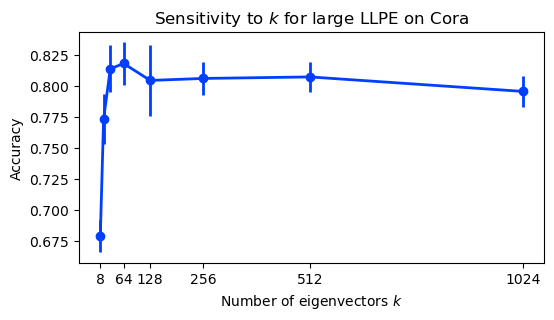}
    \caption{Sensitivity analysis to the number of eigenvectors $k$ of large LLPE on Cora's giant component.}
    \label{fig:k}
\end{figure*}

\subsection{Additional related work on PEs \label{sec:learn_pes}}

RWSE \citep{dwivedi2022graph} and SAN-PE \citep{kreuzer2021rethinking} focus on improving GNN expressivity for graph classification by using learned MLPs or Set Transformers based on the first $k$ eigenvectors. SignNet \citep{lim2022sign} addresses LPE's sign ambiguity, and PEG \citep{wang2022equivariant} designs rotation and reflection equivariant LPEs.

PEs other than LPE rely on random walks and node distances such as RWSE, the diagonals of the $m$-step random walk matrix \citep{dwivedi2022graph}, pair-wise shortest path node distances \citep{ying2021transformers}, and shortest paths between nodes and anchor nodes \citep{you2019position}. \citet{park2022deformable} propose applying MLPs to the matrix of Katz indices, a weighted sum of powers of the adjacency matrix. This design is similar to RWSE \citep{dwivedi2022graph} where they propose to apply MLPs to the random walk matrix. \citet{bo2023specformer} propose Specformer, a new spectral GNN, that applies sine and cosine PEs to the eigenvalues, feeds them to a transformer and MLP, and finally uses the processed eigenvalues as spectral filters in a spectral GNN. \citet{li2024mpformer} propose a PE that applies message-passing and sine and cosine PEs to node representations. This design is similar to the work of structure-aware transformers \citep{chen2022structure}. Lastly, \citet{pan2023beyond} propose high and low-pass filters for graph clustering. In contrast to the above works, we design a new position encoding that operates directly on the full matrix of eigenvectors without the use of neighborhood information or message passing. 

\citet{arnaiz2022diffwire} propose a PE that can predict commute times and the Fiedler vector given node features. The PE is then shown to improve performance on small heterophilous benchmarks. \citet{velingker2023affinity} propose PEs based on random walks such as effective resistance, commute, and hitting times. This work is also similar to RWSE \citep{dwivedi2022graph}. \citet{zhang2023rethinking} propose to include graph distances as PEs in order to solve graph biconnectivity. Lastly, \citet{lu2024representation} propose a new class of Laplacian matrices that reduce diffusion distance. In contrast to the above PEs, which focus on non-learnable random walk measures, we propose LLPE, a learnable PE that extends Laplacian PEs to capture homophily and heterophily. 

\section{EXPERIMENTAL DETAILS \label{app:exp}}

\subsection{Synthetic Experimental Details}

In our synthetic experiments, we utilize binary and multiclass SBMs. For each dataset, we set the average degree of each node as $d=10$ and generate 5 different SBMs according to the homophily ratios $h=[0.0, 0.2, 0.4, 0.6, 0.8, 1.0]$. For example, when the homophily ratio is $0.8$, each node has on average 8 neighbors in their own community and 2 neighbors in other communities. Node labels are the node's community. For the binary SBM, we sample 10 independent Gaussian distributed node features with mean $\mbf{y}[i]\cdot\mu$ and variance $\sigma^2$. For the multiclass SBM, we sample a multivariate Gaussian distributed vector with mean $\mu \cdot \text{one-hot}(\mbf{y}[i])$ and covariance $\Sigma$. For all model and PE combinations, we follow the same hyperparameter search as described in the real-world experiments.

\subsection{Real-world Experimental Details}

We use small, medium, and large datasets from \citep{yang2016revisiting, bojchevski2018deep, shchur2018pitfalls, Pei2020Geom-GCN, platonov2023critical, lim2021large}. We test the following base models: MLP, a feedforward neural network, SAGE \citep{hamilton2017inductive}, a message-passing neural network, GT (full) \citep{dwivedi2021generalization}, a graph transformer that leverages global attention. For each model, we search across learning rates $\eta \in [0.05, 0.01, 0.005, 0.001]$, optimizer SGD with momentum and Adam, dropouts in [0.0, 0.2, 0.5], size of hidden dimension in [64, 128], and number of layers in [1, 2]. For the GT, we additionally search across layer norms in [0.0, 0.0001]. For all models, we train with full batch mode with early stopping set to 200. We select the best performing model on the validation set for evaluation. In addition to LLPE, we test all models with the following PEs.
\begin{enumerate}
    \item LPE-FK: The first $k$ nontrivial eigenvectors of the graph Laplacian as defined in \citet{dwivedi2021generalization} with a learnable linear projection matrix.
    \item LPE-FLK: The first and last $k$ eigenvectors of the graph Laplacian with a learnable linear projection matrix.
    \item LPE-Full: All eigenvectors of the graph Laplacian with a learnable linear projection matrix.
    \item Elastic-PE: The matrix of electrostatic potentials as defined in \citet{liu2023graph} with a learnable linear projection matrix.
    \item SignNet: SignNet with DeepSets as defined in \citet{lim2022sign}.
    \item SAN-PE: SAN-PE as defined in \citet{kreuzer2021rethinking}. 
    \item RWSE: The diagonals of the $m$-step random walk matrix as defined in \citep{dwivedi2022graph}.
\end{enumerate}

For LPE-FK, LPE-FLK, Elastic-PE, SignNet, and SAN-PE, we search across the number of eigenvector and eigenvalue pairs $k\in[8, 16, 32, 64, 128, 256, 512]$. For all PEs, we find that typically the best performing $k$ lies in the range [8, 32]. For RWSE, we search across the range $m\in[8, 16, 32, 64]$. For LLPE, we search across $M\in [64, 128]$, $l^1$ regularization in [0.001, 0.0001, 0.0], and set $d = 128$. For all model and PE combinations, we linearly project the node features and PE representations into the same space, then concatenate them before passing the entire representation to the base model as described in \citet{dwivedi2021generalization}. We implement and train our models on a GeForce GTX 1080.

% \begin{table*}[h!]
% \begingroup
% \setlength{\tabcolsep}{6pt} % Default value: 6pt
% \small
% \centering
% \vspace{-1em}
% \caption{Mean $\pm$ standard deviation of model-PE test accuracy (AUROC for Tolokers) across 10 random splits on the medium benchmarks. We follow the same conventions as in Table \ref{tab:small}.}
% \label{tab:3}
% \resizebox{\textwidth}{!}{
% \begin{tabular}{l l c c c c c} 
% \toprule
%  & & \textbf{Tolokers} & \textbf{Cora-Full} & \textbf{Computers} & \textbf{Cora} \\ 
%  \textbf{Model} & \textbf{PE/SE} & $\mathbf{h=0.17}$ & $\mathbf{h=0.50}$ & $\mathbf{h=0.70}$ & $\mathbf{h=0.75}$\\
%  \midrule
%  % & No PE & 38.92 ± 0.60 & 74.04 ± 0.92 & 60.71 ± 0.70 & 85.13 ± 0.87 \\
%  % & LPE-FK & 38.59 ± 0.59 & 74.21 ± 0.63 & 59.33 ± 0.94 & 85.19 ± 0.74 \\
%  % & LPE-FLK & 65.23 ± 0.31 & 39.57 ± 0.60 & 78.97 ± 0.84 & 59.65 ± 0.97 & 84.87 ± 0.73 & 87.37 ± 0.28 \\
%  % & LPE-Full & 42.98 ± 1.97 & -100.00 ± 0.00 & 68.83 ± 1.49 & 58.21 ± 1.30 & 83.59 ± 1.68 & 83.55 ± 3.37 \\
%  & ElasticPE & 73.23 ± 2.90 & 57.92 ± 1.39 & 85.28 ± 0.86 & 74.67 ± 1.68\\
%  & SAN-PE & 78.42 ± 1.15 & 60.25 ± 0.60 & 85.36 ± 0.55 & 73.16 ± 1.40\\
%  GT (full) & SignNet & 73.96 ± 0.86 & 60.28 ± 0.59 & 85.09 ± 0.68 & 72.66 ± 2.28 \\
%  & RWSE & 74.09 ± 0.69 & 60.07 ± 0.87 & 85.05 ± 0.83 & 74.27 ± 2.28\\
%  & LLPE (ours) & \grn{\textbf{80.85 ± 0.83}} & \grn{\textbf{61.02 ± 0.60}} & \grn{\textbf{87.83 ± 0.45}} & \grn{\textbf{80.83 ± 1.33}}\\
%  \bottomrule
%  \label{tab:med}
%  \vspace{-2em}
% \end{tabular}
% }
% \endgroup
% \end{table*}

%%%%%%%%%%%%%%%%%%%%%%%%%%%%%%%%%%%%%%%%%%%%%%%%%%%%%%%%%%%%

\end{document}